\documentclass{article} 
\PassOptionsToPackage{dvipsnames,table}{xcolor} 
\usepackage{iclr2027_conference,times}

\usepackage{amsmath,amsfonts,bm}

\def\eqref#1{equation~\ref{#1}}
\def\Eqref#1{Equation~\ref{#1}}

\def\1{\bm{1}}

\newcommand{\test}{\mathcal{D_{\mathrm{test}}}}

\def\vone{{\bm{1}}}
\def\vmu{{\bm{\mu}}}

\def\va{{\bm{a}}}

\def\vq{{\bm{q}}}

\def\vv{{\bm{v}}}
\def\vw{{\bm{w}}}
\def\vx{{\bm{x}}}
\def\vy{{\bm{y}}}

\def\mA{{\bm{A}}}
\def\mB{{\bm{B}}}
\def\mC{{\bm{C}}}

\def\mE{{\bm{E}}}

\def\mI{{\bm{I}}}

\def\mM{{\bm{M}}}

\def\mP{{\bm{P}}}
\def\mQ{{\bm{Q}}}

\def\mU{{\bm{U}}}
\def\mV{{\bm{V}}}
\def\mW{{\bm{W}}}
\def\mX{{\bm{X}}}

\def\mZ{{\bm{Z}}}

\def\mSigma{{\bm{\Sigma}}}

\DeclareMathAlphabet{\mathsfit}{\encodingdefault}{\sfdefault}{m}{sl}
\SetMathAlphabet{\mathsfit}{bold}{\encodingdefault}{\sfdefault}{bx}{n}

\newcommand{\E}{\mathbb{E}}

\newcommand{\R}{\mathbb{R}}

\newcommand{\KL}{D_{\mathrm{KL}}}

\newcommand{\setF}{\mathcal{F}}      
\newcommand{\setR}{\mathcal{R}}      
\newcommand{\setS}{\mathcal{S}}      
\newcommand{\setY}{\mathcal{Y}}      

\newcommand{\vtau}{{\bm{\tau}}}                    
\newcommand{\taum}{\vtau_{\mathrm{m}}}             
\newcommand{\tauF}{\vtau_{\setF}}                  
\newcommand{\tauR}{\vtau_{\setR}}                  
\newcommand{\tauRstar}{\vtau_{\setR}^{\star}}      

\newcommand{\Winit}{\mW_{\mathrm{init}}}
\newcommand{\Wzero}{\mW_{0}}
\newcommand{\Wpt}{\mW_{\mathrm{PT}}}
\newcommand{\Wft}{\mW_{\mathrm{FT}}}
\newcommand{\Wrt}{\mW_{\mathrm{RT}}}
\newcommand{\Wul}{\mW_{\mathrm{UL}}}

\newcommand{\QF}{\mQ_{\setF}}
\newcommand{\QR}{\mQ_{\setR}}
\newcommand{\PF}{\mP_{\setF}}
\newcommand{\PR}{\mP_{\setR}}

\AtBeginDocument{\definecolor{bestgreen}{HTML}{DFF5E1}}
\newcommand{\best}[1]{\cellcolor{bestgreen}\textbf{#1}}                 
\newcommand{\bestlegend}[1]{{\setlength{\fboxsep}{1pt}\colorbox{bestgreen}{\textbf{#1}}}}  

\usepackage{hyperref}
\usepackage{url}
\usepackage{colortbl}
\usepackage{adjustbox}
\usepackage{titletoc}
\usepackage[dvipsnames,table]{xcolor}
\usepackage{algorithm}
\usepackage{algorithmic}
\usepackage[utf8]{inputenc} 
\usepackage[T1]{fontenc}    
\usepackage{hyperref}       
\usepackage{url}            
\usepackage{booktabs}       
\usepackage{nicefrac}       
\usepackage{microtype}      
\usepackage{xcolor}         
\usepackage{graphicx}
\usepackage{subcaption}
\usepackage{amssymb}
\usepackage{mathtools}
\usepackage{amsthm}
\newtheorem{assumption}{Assumption}[section]
\newtheorem{lemma}[assumption]{Lemma}
\newtheorem{proposition}[assumption]{Proposition}
\newtheorem{corollary}[assumption]{Corollary}
\newtheorem{remark}[assumption]{Remark}
\usepackage{enumitem}

\title{\textsc{Unmerge}: Efficient Machine Unlearning via Task Arithmetic}

\author{Haoran Tang \quad Andrew Tan \quad Rajiv Khanna \\
  Department of Computer Science \\
  Purdue University \\
  \texttt{\{thr,\,tan434,\,rajivak\}@purdue.edu}
}

\iclrfinalcopy 
\begin{document}

\maketitle
\lhead{Preprint.}
\begin{abstract}
    Approximate machine unlearning seeks to remove the influence of a forget
    set from a trained model without full retraining. Existing gradient-based
    methods require data-dependent hyperparameter search, struggle when forget
    and retain knowledge are entangled, and offer little insight into where
    unlearning actually happens inside the network. We re-cast unlearning
    through the lens of task arithmetic: if finetuning produces a \emph{merged}
    task vector $\boldsymbol{\tau}_{\text{m}}$ that combines learning on forget
    and retain sets, unlearning is the inverse operation that subtracts a
    learned forget component $\boldsymbol{\tau}_{\mathcal{F}}$ to recover the
    retain task vector $\boldsymbol{\tau}_{\mathcal{R}}$.
    The forget signal is concentrated: at every layer, forget activations lie
    in a subspace spanned by only a handful of dominant directions. We exploit
    this with a low-rank factorization $\boldsymbol{\tau}_{\mathcal{F}} =
    \mathbf{B} Q_{\mathcal{F}}^\top$, which is faithful up to a small
    tail-eigenvalue residual and bounds how far the correction can perturb
    retain. We then optimize three intuitive goals---match the merged vector
    inside the forget span, suppress leakage into the retain span, and bound
    the correction size---that provably bound forget leakage and retain damage in
    activation space.
    The resulting algorithm \textsc{Unmerge} is fast and powerful: on class-level
    unlearning with ResNet-50 on CIFAR-100 and Tiny ImageNet, it improves Tug-of-War
    by up to $\sim$24\% over a baseline of comparable runtime and by up to $\sim$18\%
    over stronger baselines that run $\sim5\times$ slower, keeps membership-inference
    exposure at the level of retraining, and shrinks the feature-distribution gap to
    the retrained model, where relabeling methods leave forget features cleanly
    separable. Further studies show that \textsc{Unmerge} also applies to ViT-S/16 and scales
    to Llama-3.2-3B. The per-layer basis geometry that drives
    the algorithm also serves as a layerwise diagnostic for when and where
    unlearning becomes structurally hard.
\end{abstract}

\section{Introduction}
\label{sec:intro}
\vspace{-5pt}
Machine unlearning, which aims to remove the influence of a subset of training data from an already-trained model, has moved from an abstract ideal to an operational requirement. Regulatory frameworks such as the EU GDPR~\citep{voigt2017eu} codify a right to be forgotten, and emerging concerns around copyright, privacy, and safety in foundation models~\citep{li2024wmdp} demand mechanisms that delete the footprint of specific data without retraining from scratch. Exact unlearning by partition-and-retrain~\citep{bourtoule2021machine} is the gold standard but is computationally prohibitive for modern deep networks, motivating a large body of \emph{approximate} methods that edit the trained weights post-hoc. We focus on \emph{concept-level} requests---removing a class, a semantic group or a behavior, of which the forget set is a sample---rather than the deletion of arbitrary individual records, which is a distinct problem with its own reference and metrics (\S\ref{sec:instance}).

The dominant approximate methods can be grouped into four families. \emph{Gradient-ascent} methods~\citep{kurmanji2023towards} reverse-train on the forget set with a retain regularizer; \emph{relabeling} methods~\citep{graves2021amnesiac} replace forget labels with random or noise targets; \emph{teacher-student} methods~\citep{chundawat2023bad} distill an ``incompetent teacher'' into the forget region; and \emph{saliency- or Fisher-based} methods~\citep{golatkar2020eternal,fan2024salun,foster2024fast} dampen weights estimated to encode the forget signal. All four ultimately drive a forget loss to reshape the weights, and trade forget completeness against retain damage through hyperparameter tuning.

Despite rapid progress, this line of work shares three limitations: \textbf{(i) Fragility}: gradient-ascent and relabeling variants operate on a knife-edge, where one extra epoch can flip a model from under-forgetting to catastrophic collapse, forcing per-task tuning; \textbf{(ii) Entanglement blindness}: scalar forget/retain losses give the optimizer no language for reasoning about \emph{where} forget and retain knowledge coexist, so when the two sets share features (the common case) retain damage is an unavoidable byproduct; \textbf{(iii) Opacity}: methods are evaluated almost entirely through scalar accuracy and privacy numbers and expose nothing about \emph{why} a layer or a direction was (un)successfully unlearned, leaving diagnosis, comparison, and trust on shaky ground.

We aim to address all three by revisiting unlearning from a new angle: task arithmetic~\citep{ilharco2023editing}. If finetuning on $\setS=\setR\cup\setF$ for retain set $\setR$ and forget set $\setF$ is read as implicit multi-task learning that produces a \emph{merged} task vector $\taum$, then unlearning $\setF$ becomes the inverse problem: identify the forget component $\tauF$ inside $\taum$ and subtract it. We call this \emph{unmerge}, and view it as the natural counterpart to recent data-free model merging~\citep{cheng2025whoever}: the activation-subspace geometry that governs cross-task interference when combining experts also governs forget-retain leakage when separating them, with the optimization direction reversed. This single conceptual shift addresses each of the three limitations above. \emph{Fragility} is replaced with a convex quadratic per layer, solved in a $k$-dimensional projected space without any gradient on data, so there is no ascent-descent collapse to guard against, and the one remaining scale $\alpha$ can be usually set from forget-set accuracy alone (App.~\ref{sec:ablation}). \emph{Entanglement} becomes a first-class object: forget and retain subspaces are explicitly constructed and their overlap is measurable per layer and per direction. \emph{Opacity} dissolves because the algorithm's internal state---bases, overlaps, per-layer task-vector magnitudes---is directly inspectable.

The implications go beyond a single algorithm: unmerging recasts unlearning from a tuning problem on top of opaque weight edits into a structured decomposition problem with a principled subspace vocabulary. That vocabulary \emph{predicts} unlearning difficulty before any update is applied, \emph{localizes} where in a network the forget signal lives (late layers are the high-leverage region), and delimits the method's scope: a forget request with no separable activation subspace---random instance subsets being the extreme case---cannot be unmerged, and \textsc{Unmerge} flags this before editing (\S\ref{sec:instance}). The diagnostics are method-agnostic, the formulation extends to concept removal and modular unlearning in larger models, and the gradient-free edit is amortizable across forget requests on the same checkpoint. Empirically, our instantiation improves the retain/forget tradeoff over gradient-based and task-vector baselines on ResNet-50 (CIFAR-100, TinyImageNet) and ViT-S (CIFAR-100), avoids the collapse mode of gradient-ascent unlearning, and matches retraining closely under membership-inference attacks on the forget set where relabeling methods leave a larger signature on the forget set. We summarize our \textbf{contributions} as follows:
\begin{itemize}[leftmargin=*, itemsep=-0.3em, topsep=-0.15em]
  \item \textbf{A new formulation.} We recast approximate unlearning as the inverse of model merging and introduce \textsc{Unmerge}: learning a per-layer forget task vector and subtracting it from $\taum$ so that the remainder behaves as a retain-only task vector in activation space (\S\ref{sec:method}).
  \item \textbf{Theoretical motivation.} Building on the activation-subspace assumption of~\citet{cheng2025whoever}, we give an activation-space analysis that connects each term of our objective to an interpretable quantity: forget leakage, retain damage, and tail-direction over-expansion, positioning unmerging as the counterpart of their merging framework and inviting more rigorous theoretical work (\S\ref{sec:theory}).
  \item \textbf{Gradient-free unlearning.} We instantiate a low-rank procedure---a convex per-layer objective solved in a $k$-dimensional projected space---whose only data access is forward passes for activation statistics. It improves the retain/forget tradeoff over gradient-based and task-vector baselines on ResNet-50 (CIFAR-100, TinyImageNet) and ViT-S (CIFAR-100), avoids the collapse failure mode of gradient-ascent methods, and recovers a model closer to retraining in output space (\S\ref{sec:exp}). Ablations show that the retain-leakage term carries this gain, and a preliminary study carries the method over to a 3B-parameter language model (App.~\ref{sec:llm}).
  \item \textbf{Interpretability tools.} We introduce per-layer, per-direction diagnostics---task-vector magnitudes, forget-retain overlap, and their depth-wise profiles---that make unlearning diagnosable layer by layer. A separability score computed \emph{before} unlearning predicts which forget requests lie beyond the reach of subspace editing; we characterize that boundary with instance-level experiments (\S\ref{sec:instance}).
\end{itemize}

\section{Related Work}
\label{sec:related}
\vspace{-5pt}
\textbf{Task arithmetic and model merging.} Task vectors~\citep{ilharco2023editing} are weight-space deltas induced by finetuning that support simple arithmetic for composing or suppressing behaviors without retraining. Subsequent \emph{merging} work refines this view: weight averaging across finetuned experts~\citep{wortsman2022model,matena2022merging}, conflict-aware pruning before averaging~\citep{yadav2023ties,yu2024language}, closed-form layer-activation objectives~\citep{jin2023dataless}, and tangent-space analyses of when task-vector linearity holds~\citep{ortiz2023task}. Most closely, \citet{cheng2025whoever} show that task-vector interference is governed by the span of layer inputs and derive a data-free merging objective. Our work builds directly on this view but \emph{reverses} it: instead of combining task vectors while minimizing cross-interference, we \emph{unmerge}---decompose an already-merged vector by learning the sub-component that lives in a forget subspace and subtracting it. The handful of papers that compose task vectors for removal (e.g., subtracting a toxicity vector~\citep{ilharco2023editing,ilharco2022patching,zhang2023composing}) assume an explicit forget task vector is available, whereas we \emph{construct and learn} one from data-free bases on the finetuned model itself. Recent \emph{task-vector unlearning} obtains that vector by finetuning on the forget set and negating it, merging several such vectors under a sign-consensus mask~\citep{kim2025negmerge}, or constraining finetuning so that merged models admit exact unlearning at scale~\citep{kuo2026exact}. \S\ref{sec:main} compares against the negated-vector and NegMerge baselines.

\textbf{Machine unlearning.} Motivated by right-to-be-forgotten regulations~\citep{voigt2017eu} and the desire to remove poisoned or copyrighted data, machine unlearning~\citep{cao2015towards,nguyen2022survey,wang2026survey} aims to produce a model indistinguishable from one retrained on $\setR=\setS\setminus\setF$. Exact methods partition training so forgetting a point invalidates one shard~\citep{bourtoule2021machine} but require training-time instrumentation. Approximate methods operate post-hoc on $\Wft$: influence- or Fisher-based scrubbing~\citep{golatkar2020eternal,golatkar2021mixed}, gradient-ascent variants such as NegGrad+ and SCRUB~\citep{kurmanji2023towards}, saliency-masked updates~\citep{fan2024salun}, synaptic dampening~\citep{foster2024fast}, boundary shifting~\citep{chen2023boundary}, amnesiac relabeling~\citep{graves2021amnesiac,tarun2023fast}, and distillation-based erasure that separates forgetting from retention through a teacher~\citep{chundawat2023bad,zhou2025delete} or adds a masked-distillation penalty against over-unlearning~\citep{ha2026blindspots}. Compared to these, \textsc{Unmerge} is \emph{data-free at optimization} (data enters only when collecting second-moment statistics), \emph{layerwise and interpretable}, and motivated by a merging-theoretic objective with activation-space bounds (Prop.~\ref{prop:bounds}); it performs no training on $\setF$ or $\setR$ at unlearning time (App.~\ref{sec:requirements}).

\textbf{Subspace projection and representation editing.} A separate thread exploits low-dimensional structure to intervene on behaviors: continual-learning subspace methods project gradients orthogonal to previous tasks~\citep{saha2021gradient,farajtabar2020orthogonal}, while concept-erasure and knowledge-editing methods intervene on activation directions or low-rank weight subspaces~\citep{kim2018interpretability,belrose2023leace,ravfogel2022linear,meng2022locating}. Closest to our edit are gradient-free class-unlearning methods that project weights away from forget activation subspaces~\citep{kodge2024deep,chen2024unsc}; with $\Winit{=}0$ our update is a soft member of this family, from which it differs by acting on the task vector, by solving an explicit forgetting-vs-leakage objective instead of a hard projection, and by its supervised bases and diagnostics (App.~\ref{sec:related-supp}). We adopt Concept Activation Vectors~\citep{kim2018interpretability} as one of our basis constructions but lift them from activation-level probing to weight-level subtraction, and unlike hard orthogonal projection we keep the natural overlap between $\QF$ and $\QR$ and softly penalize retain leakage via Term~2 of \eqref{eqn:obj} (see App.~\ref{sec:related-supp} for method-specific comparisons).
\section{Unlearning via Unmerging}
\label{sec:method}
\vspace{-5pt}
\subsection{Preliminaries}
\vspace{-5pt}
Let $\Winit\in\{\Wzero,\Wpt\}$ be the starting weights: a public pretrained checkpoint $\Wpt$, or training from scratch ($\Wzero$), for which we take $\Winit=0$ rather than the random draw so that $\taum=\Wft$ (ablated in App.~\ref{sec:winit}). Finetuning on a train set $\setS$ gives $\Wft$. The forget set $\setF\subset\setS$ is revealed after finetuning, and $\setR=\setS\setminus\setF$ is the retain set with $|\setR|>|\setF|$. An unlearning algorithm maps $\Wft$ to $\Wul$ using $\setF$ and $\setR$; the ideal target is the model $\Wrt$ retrained from $\Winit$ on $\setR$ only.

Following Task Arithmetic~\cite{ilharco2023editing}, a task vector is the weight difference induced by training, and we define
\begin{equation}
    \taum = \Wft - \Winit, \qquad \tauRstar = \Wrt - \Winit, \qquad \tauR \coloneqq \taum - \tauF,
\end{equation}
where $\taum$ is the merged task vector (since $\setS=\setR\cup\setF$) and $\tauRstar$ the ideal retrained one. We learn a parameterized forget task vector $\tauF$ such that the remainder $\tauR$ behaves like a retain-only task vector in activation space; \S\ref{sec:tauval} measures how close $\tauR$ lands to $\tauRstar$.

\textbf{Why $\Wft$ can be treated as merged.} Joint finetuning on $\setR\cup\setF$ is not additive, and we never assume $\taum=\tauF+\tauR$ for separately trained experts: $\tauR$ is \emph{defined} as the remainder, and the objective selects the $\tauF$ that makes it approximately orthogonal to the forget span. Assumption~\ref{ass:input-subspace} makes this meaningful for a jointly trained vector, since the rows of $\taum$ lie in the span of the joint input activations. ``Unmerging'' thus names the \emph{objective} rather than the training provenance, and unlike task arithmetic it needs no separately trained forget expert.

\textbf{Parameterizing forget task vector.} \citet{cheng2025whoever} show that the task vector at layer $l$ of a multi-layer neural network is spanned by $l$-th layer's inputs. This motivates us to construct a parameterized forget task vector $\tauF^l\in\R^{m\times d_l}$ at layer $l$ based on forget inputs $\mX_{\setF}^l\in\R^{N_{\setF}\times d_l}$ at that layer. Denote $\tauF^l=\mB^l{\QF^l}^{\top}$ with a learnable matrix $\mB^l\in\R^{m\times k}$ and a forget basis matrix $\QF^l\in\R^{d_l\times k}$. We can construct $\QF^l$ either by singular value decomposition of $\mX_{\setF}^l$ (unsupervised, high-energy basis) or by ridge-probe concept-activation vectors~\cite{kim2018interpretability} (supervised, discriminative basis). The forget projector is $\PF^l=\QF^l{\QF^l}^{\top}$; the retain basis $\QR^l\in\R^{d_l\times k}$ and projector $\PR^l$ are built likewise from the retain inputs $\mX_{\setR}^l\in\R^{N_{\setR}\times d_l}$.

\subsection{Unmerging}
\vspace{-5pt}
The objective decouples across layers. For each layer we learn $\tauF^l$ such that the remainder $\tauR^l=\taum^l-\tauF^l$ is approximately orthogonal to the forget span and $\tauF^l$ approximately orthogonal to the retain span, and we regularize the size of $\tauF^l$ against over-expansion. The per-layer \textsc{Unmerge} objective, and the update that applies the learned $\tauF^l=\mB^l{\QF^l}^{\top}$ with a scaling $\alpha>0$, are
\begin{gather}
\label{eqn:obj}
    \min_{\tauF^l}\,||\left(\taum^l-\tauF^l\right)\PF^l||^2_F+\lambda||\tauF^l \PR^l||^2_F+\gamma||\tauF^l||^2_F,\\
\label{eq:update}
    \Wul^l=\Wft^l-\alpha\,\tauF^l.
\end{gather}
With orthonormal bases, \eqref{eqn:obj} can be reduced to a strictly convex quadratic in $\mB^l$ alone, $\|\taum^l\QF^l-\mB^l\|_F^2+\lambda\|\mB^l\mC^l\|_F^2+\gamma\|\mB^l\|_F^2$ with $\mC^l={\QF^l}^{\top}\QR^l\in\R^{k\times k}$: $mk$ parameters per layer, no data access, and a retain term that acts only along the directions $\QF^l$ shares with $\QR^l$.

\textbf{Why low rank suffices.} Replacing $\PF^l$ and $\PR^l$ with the empirical activation second-moments $\frac{1}{N_{\setF}}{\mX_{\setF}^l}^\top\mX_{\setF}^l$ and $\frac{1}{N_{\setR}}{\mX_{\setR}^l}^\top\mX_{\setR}^l$ recovers the data-space form of the same objective, at $md_l$ instead of $mk$ parameters per layer and with the activations revisited at every step. The low-rank form is both tighter and safer. By Assumption~\ref{ass:input-subspace}, the rows of $\tauF^l$ lie approximately in the span of the forget activations, so the top-$k$ eigenbasis is faithful up to a tail residual bounded by Lemma~\ref{lem:lowrank}, and the forget eigenspectrum decays sharply in the layers we consider. The rank-$k$ form also caps how far $\tauF^l$ can perturb directions outside the forget subspace, which disproportionately carry retain information: an overly large $k$ hurts retain accuracy at the same $\alpha$ (App.~\ref{sec:ablation}).

\textbf{Implementation.} \textsc{Unmerge} runs in three phases---(1) one forward pass over $\setF\cup\setR$ builds $\QF^l$ and $\QR^l$; (2) $\mB^l$ is initialized at the projection $\mB^l_0=\taum^l\QF^l$ and moved toward the minimizer of the projected quadratic under a fixed budget of $T{=}100$ Adam steps ($<\!1$s); (3) \eqref{eq:update} is applied, optionally only to deep layers (\texttt{skip-layers}, \S\ref{sec:perlayer})---and is \emph{optimization-time data-free}: no gradient is ever taken on a training sample (pseudocode and details in App.~\ref{sec:algo-supp} and~\ref{sec:requirements}).

\textbf{Solver and step budget.} The projected quadratic has the unique minimizer
\begin{equation}
\label{eq:closed}
    {\mB^l}^{\star}=\taum^l\QF^l\bigl((1+\gamma)\mI+\lambda\,\mC^l{\mC^l}^{\top}\bigr)^{-1},
\end{equation}
a single $k\times k$ solve per layer (App.~\ref{sec:proof}). We do not run Phase~2 to convergence: the fixed budget is part of the estimator, and because Adam's normalized steps move each entry by at most $\delta=\mathrm{lr}\times T$, this motivates the approximation $\mB^l_\delta={\mB^l}^{\star}+S_\delta(\mB^l_0-{\mB^l}^{\star})$ with $S_\delta$ the entrywise soft-threshold: every coefficient reaches its regularized optimum except the few largest projection coefficients, which are shrunk only by $\delta$. Early stopping thus acts as an implicit regularizer~\citep{yao2007early}; for the low-overlap CAV bases the iterate coincides with the minimizer, for PCA the residual carries part of the edit (see App.~\ref{sec:closedform} for comparison with actual iterate). Removing terms with $\alpha$ re-swept shows that Term~2 accounts for essentially all of the gain over the plain projection (App.~\ref{sec:terms}).

\subsection{Theoretical Motivation}
\label{sec:theory}
\vspace{-5pt}
We motivate the objective with activation-space bounds, adapting recent model-merging theory~\cite{cheng2025whoever} to unlearning; the analysis is per layer and explains what each term controls.

\begin{assumption}[Input Subspace Property~\cite{cheng2025whoever}]
\label{ass:input-subspace}
For a model finetuned with learning rate $\eta$ under Lipschitz continuity, the task vector $\vtau\in\R^{m\times d}$ at a given layer satisfies $\vtau = \mC \mX + \mE$ where $\mC\in\R^{m\times N}$ is a coefficient matrix, $\mX\in\R^{N\times d}$ collects the layer inputs during training, and $\|\mE\|_F = \mathcal{O}(\eta^2)$.
\end{assumption}

Every SGD update of a linear layer is an outer product with that layer's inputs, so the span property is exact up to the drift of those inputs during training; the $\mathcal{O}(\eta^2)$ rate is a finetuning idealization that we use heuristically for from-scratch training (App.~\ref{sec:proof}). It motivates our low-rank parameterization:

\begin{lemma}[Low-Rank Approximation]
\label{lem:lowrank}
Let $\mSigma_{\setF} = \frac{1}{N_\setF}\mX_{\setF}^\top\mX_{\setF}$ be the second-moment matrix of forget inputs with eigenvalues $\nu_1\geq\cdots\geq\nu_d\geq 0$, and let $\QF\in\R^{d\times k}$ be the top-$k$ eigenvectors with projector $\PF=\QF{\QF}^\top$. Under Assumption~\ref{ass:input-subspace}, any task vector $\vtau$ whose rows lie in $\mathrm{span}(\mX_{\setF}^\top)$ admits the decomposition $\vtau = \mB{\QF}^\top + \mE$ with $\mB=\vtau\QF\in\R^{m\times k}$ and
\begin{equation}
    \|\mE\|_F^2 \leq \|\mC\|_{\mathrm{op}}^2\cdot N_\setF\sum_{i=k+1}^{d}\nu_i.
\end{equation}
\end{lemma}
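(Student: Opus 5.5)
Since $\QF$ has orthonormal columns, $\PF$ is an orthogonal projector. Setting $\mB=\vtau\QF$ gives $\mB\QF^\top=\vtau\PF$, so the residual is exactly $\mE=\vtau(\mI-\PF)$. The whole task is therefore to bound the energy of the rows of $\vtau$ lying outside the top-$k$ eigenspace of $\mSigma_\setF$. I will use Assumption~\ref{ass:input-subspace} in the idealized form stated in the lemma: the rows of $\vtau$ lie in $\mathrm{span}(\mX_\setF^\top)$, so $\vtau=\mC\mX_\setF$ for some coefficient matrix $\mC\in\R^{m\times N_\setF}$. This is the $\mC$ appearing in the bound, and the $\mathcal{O}(\eta^2)$ term of the assumption is absorbed by the span hypothesis. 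If one wanted to keep that term, the triangle inequality would simply add it to the right-hand side.

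With this representation, $\mE=\mC\mX_\setF(\mI-\PF)$. Submultiplicativity in the form $\|\mA\mM\|_F\le\|\mA\|_{\mathrm{op}}\|\mM\|_F$ then gives
\begin{equation*}
\|\mE\|_F^2\le\|\mC\|_{\mathrm{op}}^2\,\|\mX_\setF(\mI-\PF)\|_F^2 .
\end{equation*}
Next I expand the last factor as a trace. Because $\mI-\PF$ is a symmetric idempotent,
\begin{equation*}
\|\mX_\setF(\mI-\PF)\|_F^2=\mathrm{tr}\bigl((\mI-\PF)\mX_\setF^\top\mX_\setF(\mI-\PF)\bigr)=N_\setF\,\mathrm{tr}\bigl((\mI-\PF)\mSigma_\setF\bigr).
\end{equation*}
Write the eigendecomposition $\mSigma_\setF=\sum_i\nu_i\vq_i\vq_i^\top$, with $\QF=[\vq_1,\dots,\vq_k]$. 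Then $\mI-\PF=\sum_{i>k}\vq_i\vq_i^\top$, and the trace equals $\sum_{i=k+1}^d\nu_i$. Combining the three displays gives the claimed inequality.

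No single step is a real obstacle once $\vtau=\mC\mX_\setF$ is fixed. The one point needing care is that the inequality must place the operator norm on $\mC$ and the Frobenius norm on the tail of $\mX_\setF$, not the other way round. The other is that $\QF$ must consist of eigenvectors of the \emph{same} matrix $\mSigma_\setF$, which is what makes the trace collapse exactly to the tail eigenvalues. If $\QF$ instead came from CAVs, I would note that the bound holds only with the tail sum replaced by $\mathrm{tr}((\mI-\PF)\mSigma_\setF)$, which is at least the eigen-tail by Ky Fan's inequality.
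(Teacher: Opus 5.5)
Your proposal is correct and matches the paper's proof step for step: write $\vtau=\mC\mX_{\setF}$, identify $\mE=\mC\mX_{\setF}(\mI-\PF)$, bound $\|\mE\|_F\le\|\mC\|_{\mathrm{op}}\|\mX_{\setF}(\mI-\PF)\|_F$, and evaluate the last factor as a trace that collapses to $N_{\setF}\sum_{i>k}\nu_i$. Two parts of your write-up go beyond the paper's proof and are sound. One is your treatment of the $\mathcal{O}(\eta^2)$ term: it vanishes under the span hypothesis, and otherwise enters as an additive triangle-inequality term. The other is the Ky Fan remark that a non-eigen (e.g.\ CAV) basis only yields the weaker tail $\mathrm{tr}\bigl((\mI-\PF)\mSigma_{\setF}\bigr)$.
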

The approximation error of $\tauF=\mB{\QF}^\top$ thus decays with the tail eigenvalues of the forget covariance, so rank $k$ suffices when the forget signal is low-dimensional. The three terms of our objective, in turn, control the activation-space quantities that matter for unlearning.

\begin{proposition}[Activation-Space Bounds]
\label{prop:bounds}
Let $\mSigma_{\setF}\in\R^{d\times d}$ be the second-moment matrix of forget inputs with eigenvalues $\nu_1\geq\cdots\geq\nu_d$ and top-$k$ projector $\PF$. Similarly, let $\mSigma_{\setR}$ have eigenvalues $\mu_1\geq\cdots\geq\mu_d$ with top-$k$ projector $\PR$. Let $\alpha>0$ be the scaling used in the update and denote $\tauR=\taum-\alpha\tauF$. Then:
\begin{align}
    \E_{x\sim\setF}\!\left[\|\tauR\,x\|^2\right] \leq& \nu_1\left\|(\taum-\alpha\tauF)\PF\right\|_F^2 + \nu_{k+1}\left\|\taum-\alpha\tauF\right\|_F^2, \label{eq:forget-bound}\\
    \E_{x\sim\setR}\!\left[\|\alpha\tauF\,x\|^2\right] \leq& \alpha^2\Bigl(\mu_1\left\|\tauF\PR\right\|_F^2 + \mu_{k+1}\left\|\tauF\right\|_F^2\Bigr). 
    \label{eq:retain-bound}
\end{align}
\end{proposition}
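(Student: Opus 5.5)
The plan is to reduce both inequalities to a single generic bound and then specialize it. For any matrix $\mM\in\R^{m\times d}$ and any distribution over inputs with second-moment matrix $\mSigma$, we have
\begin{equation*}
\E_x\|\mM x\|^2=\mathrm{tr}(\mM\mSigma\mM^\top).
\end{equation*}
Eigendecompose $\mSigma=\sum_i \nu_i \vq_i\vq_i^\top$ and let $\mP$ be the projector onto the top-$k$ eigenvectors. Then split the spectrum as $\mSigma=\mP\mSigma\mP+(\mI-\mP)\mSigma(\mI-\mP)$. This split is exact because $\mP$ commutes with $\mSigma$, so there are no cross terms. The trace then separates into $\mathrm{tr}(\mM\mP\mSigma\mP\mM^\top)+\mathrm{tr}(\mM(\mI-\mP)\mSigma(\mI-\mP)\mM^\top)$.

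Each piece is bounded with the operator-norm inequality $\mathrm{tr}(\mA\mSigma'\mA^\top)\le\|\mSigma'\|_{\mathrm{op}}\|\mA\|_F^2$, applied to a PSD matrix $\mSigma'$.
\begin{itemize}[leftmargin=*]
\item On the head, $\|\mP\mSigma\mP\|_{\mathrm{op}}=\nu_1$, which gives $\nu_1\|\mM\mP\|_F^2$.
\item On the tail, $\|(\mI-\mP)\mSigma(\mI-\mP)\|_{\mathrm{op}}=\nu_{k+1}$, which gives $\nu_{k+1}\|\mM(\mI-\mP)\|_F^2$.
\end{itemize}
Finally, $\|\mM(\mI-\mP)\|_F^2\le\|\mM\|_F^2$, since $\mI-\mP$ is an orthogonal projector. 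This yields the generic bound
\begin{equation*}
\E_x\|\mM x\|^2\le \nu_1\|\mM\mP\|_F^2+\nu_{k+1}\|\mM\|_F^2.
\end{equation*}
It is slightly loose, since one could keep $\|\mM(\mI-\mP)\|_F^2$, but the looser form is what the statement asks for.

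Next, specialize. Taking $\mM=\taum-\alpha\tauF$, $\mSigma=\mSigma_\setF$ and $\mP=\PF$ gives \eqref{eq:forget-bound} directly. Taking $\mM=\alpha\tauF$, $\mSigma=\mSigma_\setR$ and $\mP=\PR$, and pulling out $\alpha^2$ by homogeneity of the Frobenius norm, gives \eqref{eq:retain-bound}. Here $\E_{x\sim\setF}$ is read as the empirical average over forget inputs, consistent with the normalization $\frac1{N_\setF}\mX_\setF^\top\mX_\setF$. Neither Assumption~\ref{ass:input-subspace} nor Lemma~\ref{lem:lowrank} is needed; the proposition is purely spectral.

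The main obstacle is bookkeeping rather than depth. I need to check that the projector is the eigenprojector of the same covariance used in the expectation, since that is what kills the cross terms. If $\PF$ were instead built from CAV bases, the exact splitting would fail. In that case I would first try the PSD inequality $\mSigma\preceq 2\mP\mSigma\mP+2(\mI-\mP)\mSigma(\mI-\mP)$, which costs a factor of 2, and note that the statement as written assumes eigenbases. I also need to confirm that $\nu_{k+1}$ correctly bounds the restricted operator norm when eigenvalues repeat. This holds because the top-$k$ eigenvectors are chosen with ties broken arbitrarily, and the remaining spectrum is still bounded by $\nu_{k+1}$.
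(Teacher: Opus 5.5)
Your proposal is correct and is essentially the paper's argument. The paper expands $\mathrm{tr}(\mA\mSigma\mA^\top)=\sum_i\sigma_i\|\mA v_i\|^2$ and bounds the top-$k$ sum by $\sigma_1\|\mA\mQ\|_F^2=\sigma_1\|\mA\mP\|_F^2$ and the tail by $\sigma_{k+1}\|\mA\|_F^2$; this is your head/tail split through $\mP\mSigma\mP$ and $(\mI-\mP)\mSigma(\mI-\mP)$, written in the eigenbasis instead (your factor-$2$ fallback for CAV bases is likewise the paper's Remark~\ref{rem:general-basis}).
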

Proofs are in App.~\ref{sec:proof}. Each term of the objective controls one component of these bounds:
\begin{itemize}
    \item \textbf{Term 1} $\|(\taum-\tauF)\PF\|_F^2$ controls the dominant ($\nu_1$-weighted) component of \emph{forget leakage}~\eqref{eq:forget-bound}---the response of the unlearned model on forget inputs.
    \item \textbf{Term 2} $\|\tauF\PR\|_F^2$ controls the dominant ($\mu_1$-weighted) component of \emph{retain damage}~\eqref{eq:retain-bound}---how much the forget correction disrupts retain performance.
    \item \textbf{Term 3} $\|\tauF\|_F^2$ controls the tail ($\mu_{k+1}$-weighted) component of retain damage, providing uniform protection across all directions beyond the top-$k$ retain subspace.
\end{itemize}
Both bounds hold for any $\tauF$, hence for the budgeted iterate we apply, and Terms~2 and~3 carry the factor $\alpha^2$, so $\alpha$ trades forgetting strength against retain damage (\S\ref{sec:perlayer}).

\begin{corollary}[The minimizer]
\label{cor:gain}
Let $\sigma_1\ge\dots\ge\sigma_k\in[0,1]$ be the singular values of $\mC=\QF^{\top}\QR$, the cosines of the principal angles between the forget and retain spans, with left singular vectors $\mV$. Then ${\mB}^{\star}=\taum\QF\,\mV\,\mathrm{diag}\bigl(g_i\bigr)\mV^{\top}$ with $g_i=1/(1+\gamma+\lambda\sigma_i^2)$: the minimizer keeps forget-private directions ($\sigma_i\!\approx\!0$) at gain $1/(1+\gamma)$ and shrinks directions shared with retain ($\sigma_i\!\approx\!1$) by up to $1/(1+\gamma+\lambda)$, and the leading term of \eqref{eq:retain-bound} shrinks by the factor $g_i^2$ along each direction.
\end{corollary}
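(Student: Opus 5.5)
We start from the closed form \eqref{eq:closed}, ${\mB}^{\star}=\taum\QF\bigl((1+\gamma)\mI+\lambda\mC\mC^{\top}\bigr)^{-1}$. It comes from setting the gradient of the projected quadratic $\|\taum\QF-\mB\|_F^2+\lambda\|\mB\mC\|_F^2+\gamma\|\mB\|_F^2$ to zero, which gives $\mB\bigl((1+\gamma)\mI+\lambda\mC\mC^\top\bigr)=\taum\QF$. The matrix $(1+\gamma)\mI+\lambda\mC\mC^\top$ is symmetric positive definite, so the minimizer is unique.

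The first step is to take the SVD $\mC=\mV\,\mathrm{diag}(\sigma_i)\,\mU^\top$, where $\mV,\mU\in\R^{k\times k}$ are orthogonal. The singular values satisfy $\sigma_i\in[0,1]$ because $\|\QF^\top\QR\|_{\mathrm{op}}\le\|\QF\|_{\mathrm{op}}\|\QR\|_{\mathrm{op}}=1$ for orthonormal bases. This is the standard identification of the $\sigma_i$ with the cosines of the principal angles between $\mathrm{span}(\QF)$ and $\mathrm{span}(\QR)$. Then $\mC\mC^\top=\mV\,\mathrm{diag}(\sigma_i^2)\,\mV^\top$, so
\[
(1+\gamma)\mI+\lambda\mC\mC^\top=\mV\,\mathrm{diag}(1+\gamma+\lambda\sigma_i^2)\,\mV^\top .
\]
Its inverse is $\mV\,\mathrm{diag}(g_i)\,\mV^\top$. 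Substituting gives the stated formula for ${\mB}^{\star}$. The gain statements follow because $g_i$ is monotone decreasing in $\sigma_i^2$. At $\sigma_i=0$ we get $g_i=1/(1+\gamma)$, and at $\sigma_i=1$ we get $g_i=1/(1+\gamma+\lambda)$, which is the smallest value on $[0,1]$.

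For the last claim we compute the leading term $\mu_1\|\tauF\PR\|_F^2$ of \eqref{eq:retain-bound} with $\tauF={\mB}^\star\QF^\top$. Since $\QR$ has orthonormal columns,
\[
\|\tauF\PR\|_F^2=\|{\mB}^\star\QF^\top\QR\|_F^2=\|{\mB}^\star\mC\|_F^2 .
\]
Write $\mB_0=\taum\QF$ and $\tilde{\mB}_0=\mB_0\mV$. Then ${\mB}^\star\mC=\tilde{\mB}_0\,\mathrm{diag}(g_i\sigma_i)\,\mU^\top$. By orthogonal invariance of the Frobenius norm this gives
\[
\|{\mB}^\star\mC\|_F^2=\sum_i g_i^2\,\sigma_i^2\,\|\tilde{\mB}_0 e_i\|^2 .
\]
The corresponding quantity for the plain projection $\mB_0$ is $\sum_i\sigma_i^2\|\tilde{\mB}_0e_i\|^2$. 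So, direction by direction in the $\mV$-basis, the leading term is multiplied by exactly $g_i^2$.

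The only step needing care is this final one. We must make the meaning of "along each direction" precise as the decomposition over columns of $\tilde{\mB}_0$ in the $\mV$ basis, and note that the cross terms vanish because $\mU$ is orthogonal. The remaining steps are routine linear algebra.
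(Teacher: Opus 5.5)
Your proposal is correct and follows the paper's proof essentially step for step. You diagonalize $(1+\gamma)\mI+\lambda\mC\mC^{\top}$ through the SVD $\mC=\mV\,\mathrm{diag}(\sigma_i)\,\mU^{\top}$ to read off $g_i$, then write $\|{\mB}^{\star}\mC\|_F^2=\sum_i g_i^2\sigma_i^2\|\taum\QF\vv_i\|^2$ and compare it with $\sum_i\sigma_i^2\|\taum\QF\vv_i\|^2$ for the plain projection $\mB_0$; your columns $\tilde{\mB}_0 e_i$ are exactly the paper's $\va_i=\taum\QF\vv_i$.
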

Term~2 therefore acts exactly where the two spans meet, which is the mechanism behind the ablation of App.~\ref{sec:terms}. The selected $\alpha$ exceeds the unit gain at which the first term of \eqref{eq:forget-bound} is smallest, which we attribute to forget signal in its tail term; the scope of the bounds is discussed in App.~\ref{sec:proof}.

\begin{remark}[Relation to Model Merging]
\label{rem:duality}
Our unmerging objective mirrors the model-merging objective of~\citet{cheng2025whoever}. For two tasks $\setF$ and $\setR$ with task vectors $\tauF,\tauR$, their merging objective seeks $\taum$ minimizing $\sum_i\|(\taum-\vtau_i)\vtau_i^\top\|_F^2$, i.e., bringing the merged vector close to each expert in its own subspace. Substituting $\taum=\tauF+\tauR$ yields the cross-interference terms $\|\tauR\tauF^\top\|_F^2+\|\tauF\tauR^\top\|_F^2$. By Assumption~\ref{ass:input-subspace}, $\vtau_i^\top$ spans approximately the same subspace as $\mP_i$, recovering our Terms~1 and~2. Merging minimizes interference when combining tasks; unmerging maximizes separation when removing one---the same subspace vocabulary with the direction reversed; the correspondence is an analogy at the level of the objectives, not a formal duality.
\end{remark}
\section{Experiments}
\label{sec:exp}
\vspace{-5pt}
\subsection{Setup}
\label{sec:setup}
\vspace{-5pt}
\textbf{Models, data, and protocol.} We use ResNet-50~\citep{he2016deep} on CIFAR-100~\citep{krizhevsky2009learning} and TinyImageNet~\citep{le2015tiny}, and ViT-S/16~\citep{dosovitskiy2021image} on CIFAR-100 to probe architectural generality (App.~\ref{sec:vit}). Each model is finetuned from both initializations $\Winit\!\in\!\{\Wzero,\Wpt\}$ to obtain $\Wft$, which lets us probe how task-vector magnitude affects unmerging. Forget requests are class-level: a five-class superclass on CIFAR-100 (Trees, Aquatic Mammals, Vehicles) and a ten-class semantic group on TinyImageNet (Dogs \& Cats, Arthropods); $\setF$ is the union of the chosen classes' training images and $\setR=\setS\setminus\setF$. Two instance-level protocols are introduced in \S\ref{sec:instance}; class indices, training schedules, and all configurations are in App.~\ref{sec:expdetails}.

\textbf{Baselines.} We compare with the retrained model $\Wrt$ (gold standard), with NegGrad+~\citep{kurmanji2023towards}, Random Label (RL)~\citep{graves2021amnesiac}, SalUn~\citep{fan2024salun}, the task-vector family (negated task vector (NegTV)~\citep{ilharco2023editing} and NegMerge~\citep{kim2025negmerge}), and among gradient-free subspace methods, with the activation-space projection of \citet{kodge2024deep}. Hyperparameters are tuned per (method, init) on Trees and Dogs \& Cats and transferred to the other tasks with minimal changes, which stresses transferability (App.~\ref{sec:expdetails}). Our variants are \textsc{Unmerge-PCA} and \textsc{Unmerge-CAV} in \textsc{kmeans} and \textsc{class} modes (\S\ref{sec:method}, Alg.~\ref{alg:cav-basis}).

\textbf{Metrics.} We use three evaluation pipelines, each read as a gap to retraining (the gold standard) rather than in absolute terms. \textbf{(1) Performance.} Forget, retain and held-out test accuracy, and the composite \emph{tug-of-war} score of~\citet{zhao2024makes}
\begin{equation}
\label{eqn:tow}
  \mathrm{ToW}
  = \bigl(1 - |\mathrm{Acc}_{\setF} - \mathrm{Acc}_{\setF}^{\text{RT}}|\bigr)
  \cdot \bigl(1 - |\mathrm{Acc}_{\setR} - \mathrm{Acc}_{\setR}^{\text{RT}}|\bigr)
  \cdot \bigl(1 - |\mathrm{Acc}_{\text{test}} - \mathrm{Acc}_{\text{test}}^{\text{RT}}|\bigr),
\end{equation}
which equals $1$ when $\Wrt$ is recovered exactly and penalizes over- and under-unlearning alike. \textbf{(2) Entanglement} (\S\ref{sec:entangle}). Distribution distances between forget and retain features at the penultimate layer, as a relative gap to $\Wrt$. \textbf{(3) Privacy} (\S\ref{sec:mia}). The gap in membership-inference attack accuracy on the forget set between $\Wul$ and $\Wrt$, following~\citet{zhao2024makes,fan2024salun,shokri2017membership,song2021systematic} (App.~\ref{sec:mia-supp}); a gap far above retraining signals residual memorization, where far below signals over-unlearning (\S\ref{sec:instance}). App.~\ref{sec:seeds} reports three-seed $\mu$ and $\sigma$ on Trees.

\subsection{Main Unlearning Results}
\label{sec:main}
\vspace{-5pt}
Tables~\ref{tab:cifar100-summary} (CIFAR-100, mean\,$\pm$\,std across three superclasses) and~\ref{tab:tinyimgnt} (TinyImageNet, Dogs \& Cats) summarize the main comparisons; detailed per-superclass tables and the TinyImageNet Arthropods replication are deferred to App.~\ref{sec:detail-tables}. \textsc{Unmerge} consistently recovers retain and test accuracy close to retraining while driving forget accuracy toward the retrained level, and does so \emph{without any gradient updates on the training data}. We highlight several observations:

\textbf{Cross-task consistency.} On both datasets and both initializations \textsc{Unmerge} variants lead on tug-of-war (ToW); on CIFAR-100 at least two of the three variants sit above every gradient-based baseline, on TinyImageNet/$\Wzero$ only \textsc{CAV-Class} does. The dispersion across superclasses (Tab.~\ref{tab:cifar100-summary}) is small for the leading \textsc{Unmerge} cells ($\pm0.005$--$0.006$ ToW) and large for the most disruptive baselines (Random Label forget accuracy $9.2\!\pm\!5.1\%$ on $\Wpt$). Over three seeds on Trees every ordering of the \textsc{Unmerge} variants relative to SalUn and RL is preserved, with $1.7$--$4.5\times$ smaller ToW standard deviations on $\Wzero$ (App.~\ref{sec:seeds}); NegTV and NegMerge trail the best variant by $0.12$--$0.24$ ToW on every task because the vector they subtract, finetuned on $\setF$, overlaps retain directions, which our leakage Term~2 of \eqref{eqn:obj} penalizes; the projection of \citet{kodge2024deep} is the most competitive baseline on CIFAR-100 (mean ToW $0.962$/$0.944$, within $0.02$--$0.05$ of our best variant) but trails it by $0.17$--$0.35$ on TinyImageNet, where it leaves $18\%$ forget accuracy on $\Wzero$; and on ViT-S/16 (retrained test $86.1\%$; App.~\ref{sec:vit}) \textsc{Unmerge-CAV-KMeans} leads every axis (ToW $0.9975$ vs.\ $0.9949$ for SalUn at $14\times$ lower runtime, MIA gap and entanglement $1.8\times$ and $7\times$ closer to retraining). On $\Wzero$ where $\taum$ is larger in deeper layers, supervision pays off more: \textsc{Unmerge-CAV-Class} leads on both datasets ($0.9909$ on CIFAR-100, $0.9152$ on TinyImageNet), and PCA is most exposed because its top-$k$ eigenvectors absorb shared forget/retain directions.

\textbf{Efficiency.} \textsc{Unmerge} takes $8$--$105$s on one H100: the PCA variant is about $2\times$ faster than NegGrad+, the CAV variants are on par with it, and all are $\sim$$5\times$ faster than Random Label and SalUn ($110$--$113$s on CIFAR-100, $666$--$679$s on TinyImageNet), \emph{without any retain-loop fine-tuning}. Combined with the ToW gains this is up to a $\sim$24\% relative lift over NegGrad+, the comparable-runtime gradient baseline (the task-vector and projection baselines are as fast or faster but trail by more), and $\sim$18\% over Random Label and SalUn at $5\times$ their speed. Over $95\%$ of the runtime is the forward pass and basis construction; the speedup is set by the baselines' epochs, not by model size (App.~\ref{sec:cost}).

\textbf{Scaling up for Language models.} Though not the focus of our study, we conduct preliminary experiments on TOFU with Llama-3.2-3B-Instruct (App.~\ref{sec:llm}) and find that \textsc{Unmerge} also works on a 3B-parameter LLM: without any training (a retain-deflated PCA basis and the closed-form solver replace the vision configuration), the edited model reaches a forget quality of $0.13$ at a model utility of $0.630$ (retain-only model: $0.650$), against $0.09$ and $0.633$ for RMU, the strongest default-setting baseline we ran, indicating promising potential for future extension.

\begin{table}[t]
\centering
\caption{CIFAR-100 unlearning across three superclasses (Trees, Aquatic Mammals, Vehicles). Cells report mean\,$\pm$\,1\,std across superclasses (sample std, $N{=}3$; one seed per cell, three-seed variance in App.~\ref{sec:seeds}); $\Wrt$ shows plain means. Best non-retrained in \bestlegend{bold}. \textsc{Unmerge} variants achieve best ToW performance with noticeable short runtime. Per-superclass detail tables are in App.~\ref{sec:detail-tables}.}
\vspace{-5pt}
\label{tab:cifar100-summary}
\begin{subtable}{\linewidth}
\centering
\begin{adjustbox}{max width=\linewidth}
\begin{tabular}{l|c|cccccc|ccc}
\toprule
$\Winit=\Wpt$ & $\Wrt$ & NegGrad+ & Rand.\ Label & SalUn & NegTV & NegMerge & Kodge et al. & Unmerge-PCA & \shortstack[c]{Unmerge-CAV\\KMeans} & \shortstack[c]{Unmerge-CAV\\Class} \\
\midrule
Forget Acc $\downarrow$ & 0.00 & 0.31\,$\pm$\,0.28 & 9.24\,$\pm$\,5.09 & 6.12\,$\pm$\,2.16 & 2.27\,$\pm$\,2.19 & 3.43\,$\pm$\,3.47 & 0.32\,$\pm$\,0.14 & 0.11\,$\pm$\,0.08 & 0.08\,$\pm$\,0.11 & 0.69\,$\pm$\,0.65 \\
Retain Acc $\uparrow$ & 99.98 & 96.64\,$\pm$\,1.13 & 99.94\,$\pm$\,0.00 & 99.94\,$\pm$\,0.04 & 95.10\,$\pm$\,4.08 & 95.83\,$\pm$\,3.96 & 99.39\,$\pm$\,0.49 & 99.80\,$\pm$\,0.09 & 99.63\,$\pm$\,0.20 & 99.22\,$\pm$\,0.36 \\
Test Acc $\uparrow$ & 80.52 & 73.82\,$\pm$\,0.66 & 80.31\,$\pm$\,0.29 & 80.52\,$\pm$\,0.60 & 72.81\,$\pm$\,4.02 & 73.52\,$\pm$\,4.09 & 77.62\,$\pm$\,1.35 & 79.31\,$\pm$\,0.25 & 78.33\,$\pm$\,0.74 & 76.90\,$\pm$\,0.96 \\
ToW $\uparrow$ & 1.0000 & 0.8992\,$\pm$\,0.0199 & 0.9053\,$\pm$\,0.0485 & 0.9365\,$\pm$\,0.0226 & 0.8581\,$\pm$\,0.0562 & 0.8604\,$\pm$\,0.0479 & 0.9622\,$\pm$\,0.0150 & \best{0.9851\,$\pm$\,0.0046} & \textbf{0.9739\,$\pm$\,0.0077} & 0.9498\,$\pm$\,0.0070 \\
Runtime $\downarrow$ &  & 23\,$\pm$\,0 & 110\,$\pm$\,0 & 112\,$\pm$\,0 & \best{4\,$\pm$\,0} & 11\,$\pm$\,0 & 8\,$\pm$\,0 & 8\,$\pm$\,0 & 26\,$\pm$\,0 & 22\,$\pm$\,0 \\
\bottomrule
\end{tabular}
\end{adjustbox}
\end{subtable}
\begin{subtable}{\linewidth}
\centering
\begin{adjustbox}{max width=\linewidth}
\begin{tabular}{l|c|cccccc|ccc}
\toprule
$\Winit=\Wzero$ & $\Wrt$ & NegGrad+ & Rand.\ Label & SalUn & NegTV & NegMerge & Kodge et al. & Unmerge-PCA & \shortstack[c]{Unmerge-CAV\\KMeans} & \shortstack[c]{Unmerge-CAV\\Class} \\
\midrule
Forget Acc $\downarrow$ & 0.00 & 4.53\,$\pm$\,1.13 & 9.40\,$\pm$\,6.02 & 3.92\,$\pm$\,0.73 & 7.32\,$\pm$\,6.86 & 7.84\,$\pm$\,6.77 & 0.87\,$\pm$\,0.87 & 0.51\,$\pm$\,0.34 & 0.41\,$\pm$\,0.36 & 0.43\,$\pm$\,0.61 \\
Retain Acc $\uparrow$ & 99.97 & 96.89\,$\pm$\,0.75 & 99.96\,$\pm$\,0.01 & 99.95\,$\pm$\,0.02 & 95.73\,$\pm$\,1.14 & 96.21\,$\pm$\,1.15 & 98.35\,$\pm$\,1.14 & 97.05\,$\pm$\,1.53 & 99.70\,$\pm$\,0.11 & 99.85\,$\pm$\,0.03 \\
Test Acc $\uparrow$ & 72.47 & 67.54\,$\pm$\,1.49 & 73.95\,$\pm$\,0.63 & 74.18\,$\pm$\,0.50 & 66.72\,$\pm$\,1.89 & 66.95\,$\pm$\,1.88 & 69.19\,$\pm$\,2.07 & 68.01\,$\pm$\,2.00 & 71.79\,$\pm$\,0.95 & 72.66\,$\pm$\,0.82 \\
ToW $\uparrow$ & 1.0000 & 0.8798\,$\pm$\,0.0235 & 0.8925\,$\pm$\,0.0564 & 0.9443\,$\pm$\,0.0111 & 0.8370\,$\pm$\,0.0722 & 0.8387\,$\pm$\,0.0755 & 0.9435\,$\pm$\,0.0317 & 0.9230\,$\pm$\,0.0347 & \textbf{0.9865\,$\pm$\,0.0029} & \best{0.9909\,$\pm$\,0.0062} \\
Runtime $\downarrow$ &  & 23\,$\pm$\,1 & 110\,$\pm$\,1 & 113\,$\pm$\,2 & \best{4\,$\pm$\,0} & 11\,$\pm$\,0 & 8\,$\pm$\,0 & 8\,$\pm$\,2 & 21\,$\pm$\,0 & 18\,$\pm$\,0 \\
\bottomrule
\end{tabular}
\end{adjustbox}
\end{subtable}
\vspace{-7.5pt}
\end{table}

\begin{table}[htb]
\centering
\caption{TinyImageNet unlearning results (Dogs \& Cats). Best non-retrained in \bestlegend{bold}. \textsc{Unmerge-CAV-Class} achieves the best ToW for both $\Winit$ at a runtime comparable to NegGrad+; the task-vector and projection baselines are faster but trail on ToW. See Arthropods results in Tab.~\ref{tab:tinyimgnt-arthropods}.}
\vspace{-5pt}
\label{tab:tinyimgnt}
\begin{subtable}{\linewidth}
\centering
\label{tab:tinyimgnt-wpt}
\begin{adjustbox}{max width=\linewidth}
\begin{tabular}{l|c|cccccc|ccc}
\toprule
$\Winit= \Wpt$ & $\Wrt$ & NegGrad+ & Rand.\ Label & SalUn & NegTV & NegMerge & Kodge et al. & Unmerge-PCA & \shortstack[c]{Unmerge-CAV\\KMeans} & \shortstack[c]{Unmerge-CAV\\Class} \\
\midrule
Forget Acc $\downarrow$ & 0.0 & 1.44 & 16.26 & 15.46 & 0.26 & 0.86 & 1.90 & 0.68 & 0.00 & 0.04 \\
Retain Acc $\uparrow$ & 99.98 & 96.36 & 99.98 & 99.98 & 89.70 & 91.62 & 93.30 & 99.37 & 99.96 & 99.94 \\
Test Acc $\uparrow$ & 74.46 & 67.98 & 74.65 & 74.62 & 62.57 & 64.29 & 63.91 & 72.07 & 73.41 & 73.58 \\
ToW $\uparrow$ & 1 & 0.8883 & 0.8358 & 0.8440 & 0.7885 & 0.8161 & 0.8189 & 0.9635 & \textbf{0.9893} & \best{0.9904} \\
Runtime $\downarrow$ &  & 93s & 666s & 671s & \best{18s} & 54s & 22s & 71s & 105s & 99s \\
\bottomrule
\end{tabular}
\end{adjustbox}
\end{subtable}
\begin{subtable}{\linewidth}
\centering
\label{tab:tinyimgnt-w0}

\begin{adjustbox}{max width=\linewidth}
\begin{tabular}{l|c|cccccc|ccc}
\toprule
$\Winit=\Wzero$ & $\Wrt$ & NegGrad+ & Rand.\ Label & SalUn & NegTV & NegMerge & Kodge et al. & Unmerge-PCA & \shortstack[c]{Unmerge-CAV\\KMeans} & \shortstack[c]{Unmerge-CAV\\Class} \\
\midrule
Forget Acc $\downarrow$ & 0.0 & 0.26 & 9.44 & 8.92 & 8.50 & 9.48 & 17.74 & 4.60 & 2.82 & 0.52 \\
Retain Acc $\uparrow$ & 99.98 & 85.64 & 99.98 & 99.97 & 93.29 & 94.06 & 96.39 & 83.72 & 97.87 & 97.88 \\
Test Acc $\uparrow$ & 61.78 & 48.08 & 61.29 & 61.12 & 52.18 & 52.45 & 54.46 & 46.10 & 55.31 & 55.75 \\
ToW $\uparrow$ & 1 & 0.7373 & 0.9011 & \textbf{0.9047} & 0.7718 & 0.7722 & 0.7350 & 0.6736 & 0.8898 & \best{0.9152} \\
Runtime $\downarrow$ &  & 101s & 670s & 679s & \best{18s} & 53s & 23s & 56s & 86s & 84s \\
\bottomrule
\end{tabular}
\end{adjustbox}
\end{subtable}
\vspace{-7.5pt}
\end{table}

\subsection{Where Does Unlearning Happen?}
\label{sec:perlayer}
\vspace{-5pt}
Because \textsc{Unmerge} builds per-layer bases $\QF^l,\QR^l$ and extracts $\tauF^l$, we can quantify each layer's contribution. Empirically, the useful unmerging budget concentrates in the deeper layers: skipping the first 35/40 conv layers in ResNet-50 retains the forget drop while \emph{improving} retain/test accuracy by 2--5 points. This matches the feature hierarchy of vision networks---generic edges and textures early, class-specific parts and objects late~\citep{zeiler2014visualizing,yosinski2014transferable,bau2017network}---and the depth-aware edits of recent class-unlearning methods~\citep{hatami2026damp,chang2024zeroshot}. It is a property of the request rather than of the method: a class- or concept-level forget set is separable where semantics are encoded, whereas a request defined by a low-level attribute such as a colour or texture would call for earlier layers, which the same diagnostics would indicate. Theory predicts the depth profile: early layers have larger tail eigenvalues $\sum_{i>k}\nu_i$ (Lemma~\ref{lem:lowrank}) and a larger retain-damage constant $\mu_1$ (\eqref{eq:retain-bound}) because generic edge/texture features dominate their second-moment spectrum, so any forget correction leaks heavily into retain; deep layers encode class-specific representations, $\mu_{k+1}$ decays, and the forget/retain spectra are more separable.

\textbf{Per-layer geometry.} Figs.~\ref{fig:crossfr} and~\ref{fig:taum} (App.~\ref{sec:ablation}) make this visible on Trees. \emph{Magnitude} (Fig.~\ref{fig:taum}): the cross-init gap is depth-dependent---in layer1--2 $\|\taum\|$ is slightly larger on $\Wpt$ (pretrained features must be shifted away from ImageNet), but at layer4 and the classifier it is $\sim$$1.5$--$2\times$ larger on $\Wzero$ (peak $15.4$/$15.8$ vs.\ $10.0$/$8.8$), where the deepest task-specific weights are learned from scratch. \emph{Entanglement} (Fig.~\ref{fig:crossfr}): the mean per-direction overlap $\mathrm{mean}_i\,\|\PR^l\vq_i\|$ of the forget directions with the retain subspace, all bases at rank $16$; it is the quantity that enters the separability score of \S\ref{sec:instance}. It decays with depth for every basis, from $0.8$--$1.0$ in layer1--2 to $0.4$--$0.6$ at layer4; PCA is the most entangled at every depth ($0.54$/$0.58$ at layer4, $0.46$/$0.56$ at the classifier on $\Wpt$/$\Wzero$), the discriminative CAV probes suppress the shared variance that PCA's eigenvectors absorb ($0.40$--$0.46$ and $0.08$--$0.20$), and all three bases are nearly init-invariant (block means within $0.13$, $0.05$ at layer4): what changes between regimes is the magnitude of $\taum$, not the entanglement of the bases.

\textbf{From diagnostics to hyperparameters.} The two diagnostics fix the hyperparameters used throughout---\texttt{skip-layers} at the overlap elbow, $\alpha$ inversely to the deepest-layer $\|\taum\|$, and CAV bases where entanglement is the binding constraint---and explain why \textsc{Unmerge-PCA} leads on $\Wpt$ yet trails on $\Wzero$ at the same entanglement; $\alpha$ itself can be line-searched on forget-set accuracy alone without $\Wrt$ or test data (App.~\ref{sec:ablation}).

\begin{figure}[t]
  \centering
  \begin{subfigure}{0.49\linewidth}
    \includegraphics[width=\linewidth]{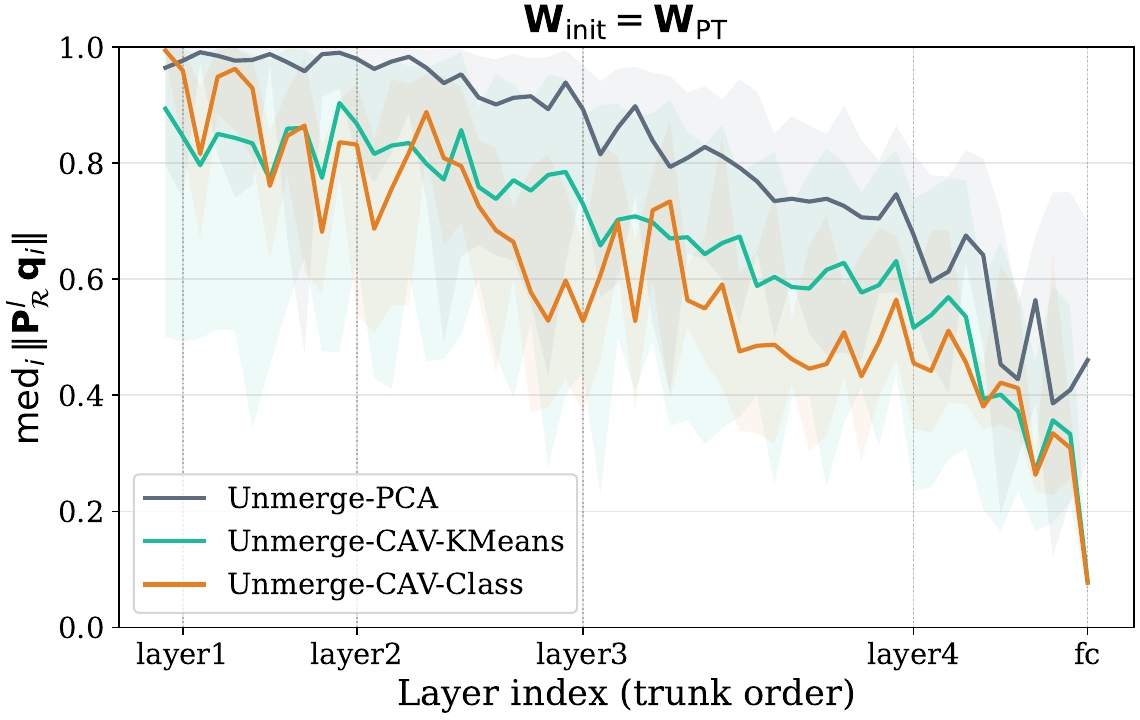}
    \caption{$\Winit=\Wpt$}
  \end{subfigure}
  \begin{subfigure}{0.49\linewidth}
    \includegraphics[width=\linewidth]{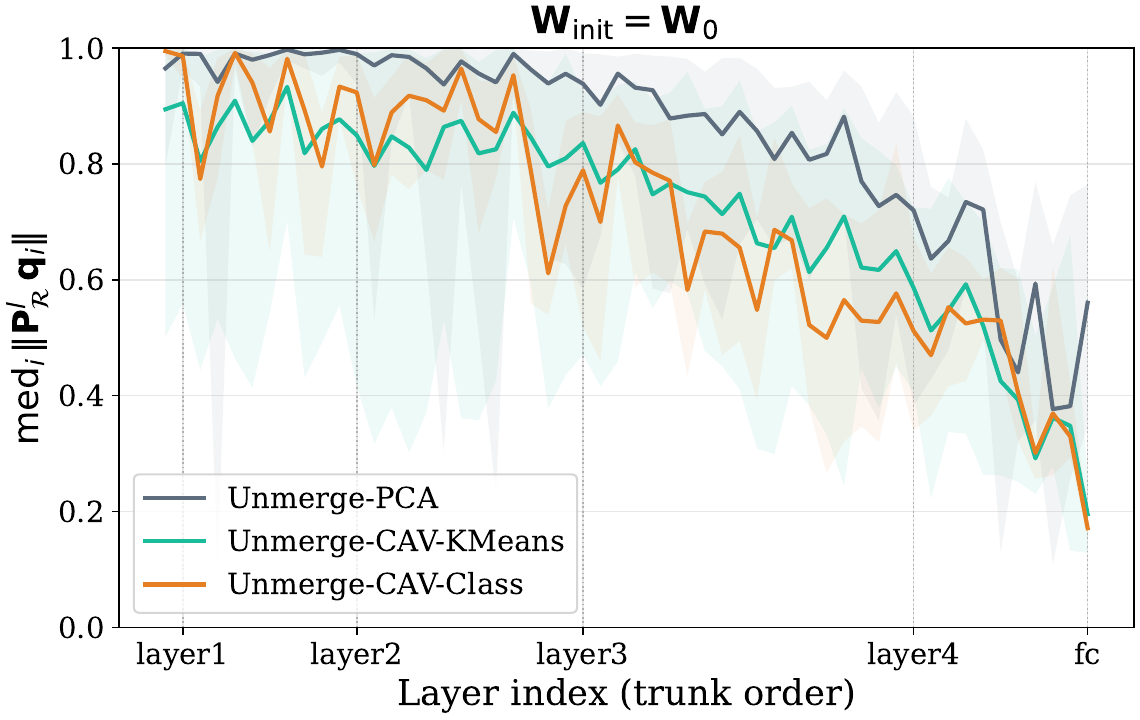}
    \caption{$\Winit=\Wzero$}
  \end{subfigure}
  \vspace{-5pt}
  \caption{Per-layer forget--retain entanglement on CIFAR-100 (Trees), ResNet-50: mean over the forget basis directions $\vq_i$ of the overlap $\|\PR^l\vq_i\|$ with the retain subspace, all bases at rank $16$, seed~1 (the band is the spread over directions). This per-direction overlap is the rank-invariant form of the basis cross-norm $\|\QF^{l\top}\QR^l\|_F$, the structural analogue of the merging cross-interference term $\|\tauF\tauR^\top\|_F$ (Asm.~\ref{ass:input-subspace}, Remark~\ref{rem:duality}); smaller means less entangled.}
  \label{fig:crossfr}
  \vspace{-12.5pt}
\end{figure}

\subsection{Feature-Level Entanglement}
\label{sec:entangle}
\vspace{-5pt}
The per-layer overlap of Fig.~\ref{fig:crossfr} exists only for basis-constructing methods. To compare all methods on equal footing we measure entanglement in feature space, as the distance between the forget and retain feature distributions at the penultimate layer. Under retraining, forget images are out-of-distribution and project onto retain class manifolds, so the two clouds mix and the distance is small; in a model that has not unlearned they stay separable. An unlearned model should therefore \emph{match the retrained distance}: Tab.~\ref{tab:entangle-feat} reports the relative gap $|d_{\text{UL}}-d_{\text{RT}}|/d_{\text{RT}}$ under four distances on a same-size retain subsample---Gaussian-RBF MMD$^2$ (single and multi-bandwidth), the Bures--Wasserstein W$_2^2$ between Gaussian fits, and Sliced-W$_2^2$ (App.~\ref{sec:expdetails}).

\begin{table}[htb]
\centering
\vspace{5pt}
\caption{Feature-level entanglement on CIFAR-100 (Trees): penultimate-layer distribution distance between forget and a same-size retain subsample, reported as relative gap to the retrained reference (lower is better; $\Wrt$ is 0 by construction). Best non-retrained per row in \bestlegend{bold}.}
\vspace{-5pt}
\label{tab:entangle-feat}
\begin{subtable}{\linewidth}
\centering
\label{tab:entangle-feat-wpt}
\begin{adjustbox}{max width=\linewidth}
\begin{tabular}{l|c|cccccc|ccc}
\toprule
$\Winit=\Wpt$ & $\Wrt$ & NegGrad+ & Rand.\ Label & SalUn & NegTV & NegMerge & Kodge et al. & Unmerge-PCA & \shortstack[c]{Unmerge-CAV\\KMeans} & \shortstack[c]{Unmerge-CAV\\Class} \\
\midrule
MMD$^2$ & 0.000 & 0.612 & 1.085 & 0.951 & 0.702 & 0.699 & 0.627 & 0.319 & \best{0.255} & 0.267 \\
MMD$^2$-multi & 0.000 & 0.638 & 2.030 & 1.917 & 0.735 & 0.731 & 0.638 & 0.309 & \best{0.217} & 0.257 \\
W$_2^2$-Bures & 0.000 & 0.313 & 0.488 & \best{0.273} & 0.522 & 0.505 & 0.548 & 0.553 & 0.499 & 0.611 \\
Sliced-W$_2^2$ & 0.000 & 0.654 & 1.251 & 0.916 & 0.736 & 0.734 & 0.722 & 0.092 & \best{0.046} & 0.125 \\
\midrule
Mean rel.\ gap $\downarrow$ & 0.000 & 0.554 & 1.213 & 1.014 & 0.674 & 0.667 & 0.634 & 0.318 & \best{0.254} & 0.315 \\
\bottomrule
\end{tabular}
\end{adjustbox}
\end{subtable}
\begin{subtable}{\linewidth}
\centering
\label{tab:entangle-feat-w0}
\begin{adjustbox}{max width=\linewidth}
\begin{tabular}{l|c|cccccc|ccc}
\toprule
$\Winit=\Wzero$ & $\Wrt$ & NegGrad+ & Rand.\ Label & SalUn & NegTV & NegMerge & Kodge et al. & Unmerge-PCA & \shortstack[c]{Unmerge-CAV\\KMeans} & \shortstack[c]{Unmerge-CAV\\Class} \\
\midrule
MMD$^2$ & 0.000 & 0.271 & 1.938 & 1.834 & 0.512 & 0.499 & 0.549 & 0.219 & 0.181 & \best{0.038} \\
MMD$^2$-multi & 0.000 & 0.340 & 3.119 & 3.015 & 0.546 & 0.531 & 0.529 & 0.252 & 0.172 & \best{0.052} \\
W$_2^2$-Bures & 0.000 & 0.078 & 1.659 & 1.661 & 0.233 & 0.201 & 0.384 & 0.438 & \best{0.009} & 0.117 \\
Sliced-W$_2^2$ & 0.000 & 0.204 & 3.265 & 3.226 & 0.403 & 0.378 & 0.535 & 0.340 & \best{0.071} & 0.181 \\
\midrule
Mean rel.\ gap $\downarrow$ & 0.000 & 0.223 & 2.495 & 2.434 & 0.423 & 0.402 & 0.499 & 0.312 & 0.108 & \best{0.097} \\
\bottomrule
\end{tabular}
\end{adjustbox}
\end{subtable}
\vspace{-10pt}
\end{table}

\textsc{Unmerge} is the closest to retraining on both initializations (Tab.~\ref{tab:entangle-feat}): the CAV variants land within $\sim$$0.10$ relative gap on $\Wzero$ ($0.25$--$0.32$ on $\Wpt$), whereas Random Label and SalUn deviate by $1.0$--$2.5\times$ the retrained reference, $3$--$25\times$ the CAV gap, and \citet{kodge2024deep} by $0.50$--$0.63$. The un-edited $\Wft$ calibrates the axis: it sits at $0.67$ on $\Wzero$ and at $0.45$--$0.70$ on TinyImageNet, and \textsc{Unmerge} moves the features from there toward retraining in all three settings ($0.06$--$0.31$; Tab.~\ref{tab:entangle-feat-tin}, App.~\ref{sec:entangle-tin}), whereas relabeling pushes them \emph{away} from retraining in every setting (on TinyImageNet/$\Wzero$ the projection of \citet{kodge2024deep} reaches the retrained distance too, but while still classifying $18\%$ of the forget set and losing $7$ test points, which is why this axis is read together with accuracy and privacy); on CIFAR-100/$\Wpt$ the pretrained features already sit near retraining for every model ($0.22$ un-edited) and the axis has no headroom. (W$_2^2$-Bures on $\Wpt$, a mean-shift-dominated distance, favors SalUn.) This exposes a \emph{label/feature dissociation}: Random Label and SalUn drive forget accuracy down but leave the forget representations cleanly separable from retain, whereas \textsc{Unmerge} lowers it while matching retraining's forget-retain feature separation. The gap measures distributional fidelity, not decodability (App.~\ref{sec:limitations}).

\subsection{Membership Privacy, Scope and Fidelity}
\label{sec:mia}
\vspace{-5pt}
\textbf{Membership privacy.} 
On CIFAR-100 the projection of \citet{kodge2024deep} has the smallest mean MIA gap ($0.017$/$0.028$ on $\Wzero$/$\Wpt$), followed on $\Wzero$ by \textsc{Unmerge-CAV-KMeans} ($0.026$) and \textsc{CAV-Class} ($0.029$) ahead of SalUn ($0.047$), Random Label ($0.062$) and NegGrad+ ($0.109$); on $\Wpt$ the other methods sit within $0.067$--$0.093$. On TinyImageNet \textsc{Unmerge} sweeps $\Wpt$ at $0.009$--$0.020$ against $0.043$--$0.261$ for every baseline (full details in App.~\ref{sec:mia-supp}). Under these attacks, \textsc{Unmerge} remains close to the retrained reference in membership-inference exposure at class level.

\textbf{Concept-level vs.\ instance-level requests.}\label{sec:instance} \textsc{Unmerge} removes what the forget samples \emph{share}, so we test two instance-level protocols on Trees, each with its own retrained reference (App.~\ref{sec:instance-supp}). On \emph{coherent} subsets (half of the images of each Trees class) the aggregate scores favor \textsc{Unmerge} (ToW $0.97$/$0.94$ vs.\ $0.94$ for a re-tuned NegGrad+ and $0.81$--$0.90$ for relabeling), but a per-class breakdown shows a concept-level edit: the two halves of a class are i.i.d., so the edit lowers both alike, whereas relabeling protects the retained half and under-forgets the other; no method reproduces retraining. On \emph{random} subsets no method does either: \textsc{Unmerge} and NegGrad+ stay near-inert, and Random Label and SalUn score only by \emph{marking} the forget set (attack accuracy $0.84$--$0.88$ vs.\ $0.63$--$0.68$ under retraining). The Phase-1 separability score $S=0.17$ is consistent with both outcomes: it tells whether a request carries a concept the edit can act on. Removing the influence of given samples and removing the concept they represent are distinct problems with different references and metrics~\citep{triantafillou2026untraining,cooper2025machine,zhao2024makes}, which coincide only when $\setF$ covers the concept, and \textsc{Unmerge} targets the latter.

\textbf{Does unmerging recover the retrained task vector?}\label{sec:tauval} Beyond accuracy proxies, we compare $\tauR=\taum-\alpha\tauF$ with $\tauRstar=\Wrt-\Winit$ directly on Trees (App.~\ref{sec:tauval-supp}). In weight space two independent retrains differ by $0.72$--$0.73$ relative error and the un-edited $\Wft$ already sits at $0.75$; \textsc{Unmerge} stays at that floor ($0.750$--$0.754$: the minimal edit does no weight-space harm), whereas Random Label ($0.82$) and SalUn ($0.91$) move \emph{away} from the retrained solution. Output space discriminates more sharply in the same direction: \textsc{Unmerge-CAV-KMeans} has the smallest forget-set divergence to $\Wrt$ in both regimes ($2.9$ vs.\ $4.9$--$5.2$ on $\Wzero$, $5.3$ vs.\ $5.9$--$6.6$ on $\Wpt$), because relabeling pushes forget inputs confidently to wrong classes while subspace subtraction lands closer to a model that never saw them.

\section{Conclusion}
\label{sec:conclusion}
\vspace{-5pt}
We recast approximate machine unlearning as the inverse of model merging: \textsc{Unmerge} learns a low-rank forget component of the merged task vector under a subspace objective whose terms bound forget leakage and retain damage in activation space, and subtracts it without any gradient update on the training data. On class-level forgetting with ResNet-50 and ViT-S/16 it improves the tug-of-war trade-off over gradient-based and task-vector baselines at a fraction of their cost, keeps membership-inference exposure at the level of retraining, and, unlike relabeling methods, moves the forget representations toward the retrained model rather than away from it. The per-layer bases and task-vector magnitudes double as a diagnostic that motivates the depth and scale choices and, computed before any edit, delimits the method's scope to concept-level requests; instance-level requests are a different problem that no method we tested solves (further limitations in App.~\ref{sec:limitations}). Promising directions include language models beyond our preliminary study (App.~\ref{sec:llm}), continual forget requests, and adaptive per-layer rank and depth selection driven by the same diagnostics.

\textbf{Acknowledgement.} RK thanks the Central Indiana Corporate Partnership AnalytiXIN Initiative and NSF Award 2543174 for their support.
\clearpage
\subsection*{AI use statement}
\label{sec:ai-use}
\vspace{-5pt}
In this work, we used a generative AI (Claude) for the following tasks with required disclosure: debugging, providing feedback on experimental design, implementing utility code such as plotting, and interpreting results. We have not used generative AI tools to generate synthetic data sets, for translation, or to clean or reformat data sets; qualitative and thematic data analysis is not applicable to this work. Additionally, we used the assistant to search and summarize related literature. We have reviewed and verified all AI-assisted work. We take responsibility for the final content of this work, including text, claims and artifacts produced with the aid of generative AI.

\subsection*{Ethics statement}
\vspace{-5pt}
This work uses only public image-classification benchmarks (CIFAR-100, TinyImageNet) and publicly available pretrained weights, and involves no human subjects or personal data. Machine unlearning serves data-protection goals, but an approximate method must not be presented as a guarantee of deletion: we therefore evaluate against retraining on accuracy, feature and membership-inference axes, and we report the forget requests for which our method does not work (\S\ref{sec:instance}). Our finding that relabeling-based unlearning can make forget samples \emph{more} identifiable is reported to inform practitioners. We are not aware of other ethical concerns.

\subsection*{Reproducibility statement}
\vspace{-5pt}
Algorithms~\ref{alg:unmerge}--\ref{alg:cav-basis} give complete pseudocode, and App.~\ref{sec:proof} contains the assumptions, proofs and the derivation of the closed-form minimizer. App.~\ref{sec:expdetails} lists the forget tasks, training schedules, baseline settings and all \textsc{Unmerge} configurations; App.~\ref{sec:ablation} documents how they are chosen; App.~\ref{sec:seeds} reports seed variance; and App.~\ref{sec:winit} identifies the pretrained checkpoint by its hash. All datasets are public, and every experiment runs on a single NVIDIA H100 GPU (runtimes in App.~\ref{sec:cost}). We will release the code, configurations and evaluation pipeline upon acceptance.


\clearpage
\bibliography{iclr2027_conference}
\bibliographystyle{iclr2027_conference}

\clearpage
\appendix
\section*{Appendix}
\vspace{-5pt}
\startcontents[sections]
\printcontents[sections]{l}{1}{\setcounter{tocdepth}{3}}

\section{Additional Related Work}
\label{sec:related-supp}
\vspace{-5pt}
We expand the comparison to subspace-projection and representation-editing methods that the main paper Related Work (\S\ref{sec:related}) treats only briefly.

\textbf{Continual learning via subspace projection.} Gradient Projection Memory~\citep{saha2021gradient} and Orthogonal Gradient Descent~\citep{farajtabar2020orthogonal} project gradient updates onto subspaces orthogonal to previously seen tasks, preventing catastrophic forgetting by hard separation. We share their intuition that forget and retain knowledge should occupy separable subspaces, but rather than enforcing separation by hard orthogonal projection, we let the bases $\QF,\QR$ retain their natural overlap and penalize retain leakage softly via Term~2 of \eqref{eqn:obj}, so that partially shared directions are explicitly traded off rather than discarded by construction. This soft trade-off is what makes \textsc{Unmerge} work in regimes where the forget and retain subspaces are genuinely entangled (e.g., $\Wpt$ where shared ImageNet-flavor variance lifts $\|\QF^{\top}\QR\|_F$ uniformly; see \S\ref{sec:perlayer}).

\textbf{Gradient-free activation-subspace class unlearning.} \citet{kodge2024deep} estimate retain and forget spaces by SVD of layer activations on a few samples, remove the shared part from the forget space, and project the weights so that the remaining class-discriminatory directions are suppressed, selecting two scaling hyperparameters on training accuracy alone. Null-space calibration~\citep{chen2024unsc} confines the unlearning update to the null space of retain activations, and the concurrent DAMP~\citep{hatami2026damp} removes forget directions relative to retain prototypes with a depth-aware scale and argues that class unlearning is often carried by the classifier head. With $\Winit{=}0$ our update reads $\Wul=\Wft(\mI-\alpha\,\QF\mM\QF^{\top})$ with the $k\times k$ filter $\mM=((1+\gamma)\mI+\lambda\mC\mC^{\top})^{-1}$ for the exact minimizer, i.e.\ a soft input-space projection of the same family; the hard projection $\PF(\mI-\PR)$ of \citet{kodge2024deep} corresponds to removing the retain-shared part of the forget directions instead of down-weighting it. Our formulation differs in acting on the task vector, which under $\Wpt$ leaves the pretrained weights intact and matters for the unsupervised basis (App.~\ref{sec:winit}); in deriving the filter from an explicit objective that trades forgetting against retain leakage; in offering supervised CAV bases; and in exposing the same quantities as diagnostics. \S\ref{sec:exp} benchmarks \citet{kodge2024deep} under the protocol of the other baselines (App.~\ref{sec:expdetails}).

\textbf{Linear concept erasure.} LEACE~\citep{belrose2023leace} and RLACE~\citep{ravfogel2022linear} give closed-form linear concept erasure with activation-distribution guarantees, and Concept Activation Vectors (CAV)~\citep{kim2018interpretability} probe concept directions with linear classifiers trained on labeled positive/negative activation pools. We use CAV as one of our basis constructions (Alg.~\ref{alg:cav-basis}) and inherit its supervised, discriminative flavor, but \emph{lift} it from representation-level probing/erasure to weight-level subtraction via task-vector decomposition: instead of projecting activations through a learned operator at inference time, we modify the weights themselves so the forget direction is no longer encoded.

\textbf{Knowledge editing.} ROME~\citep{meng2022locating} and similar locate-and-edit methods modify specific low-rank weight subspaces to update factual associations in language models. They share our weight-level intervention philosophy but require localized $(s, r, o)$ associations rather than distributional forget/retain sets, and target editing of specific facts rather than removal of broader behaviors. Our setting (class-level forgetting from distributional samples of $\setF$ and $\setR$) is complementary.

\textbf{Unifying view.} The retain projector $\PR$ in our objective plays the role of the ``preserve'' subspace from continual learning, while $\PF$ inherits the concept-direction view from CAV/LEACE. Together, the three terms of \eqref{eqn:obj} (forget removal, retain leakage, magnitude regularization) map cleanly onto the activation-space bounds of \S\ref{sec:theory}, giving a single objective that recovers and softens individual instances of these prior approaches.

\section{Method Details}
\label{sec:method-supp}
\vspace{-5pt}
\subsection{Algorithm and Implementation Details}
\label{sec:algo-supp}
\vspace{-5pt}
This subsection expands the implementation sketch in \S\ref{sec:method} and provides the complete pseudocode. Algorithm~\ref{alg:unmerge} realizes \eqref{eqn:obj} as a three-phase, \emph{optimization-time data-free} procedure: data enters only through the second-moment statistics gathered in Phase~1, and no gradient is taken with respect to a training sample.

\textbf{Phase~1: basis construction.} A forward pass through $\setF\cup\setR$ (plus a second pass over $\setR$ for the CAV variants) collects the layer inputs $\mX_{\setF}^l, \mX_{\setR}^l$ via forward hooks. The retain basis $\QR^l$ is built as the top-$k$ eigenbasis of $\frac{1}{N_{\setR}}{\mX_{\setR}^l}^\top\mX_{\setR}^l$ (PCA, Alg.~\ref{alg:pca-basis}). The forget basis $\QF^l$ is built either by the same PCA construction or by the discriminative CAV construction in Alg.~\ref{alg:cav-basis}, which learns a ridge probe per cluster against the retain pool and stacks the orthogonalized probe directions as the basis. CAV admits two cluster definitions: \emph{CAV-KMeans} partitions forget activations into $k$ unsupervised clusters via $k$-means; \emph{CAV-Class} uses the available forget class labels, producing one direction per forget class (so $k' = |\setY_{\setF}|$ regardless of $k$). The two modes share the ridge-probe and orthogonalization machinery and differ only in line~5 of Alg.~\ref{alg:cav-basis}. KMeans is purely activation-driven and applicable when class labels are unavailable or when the forget set is sub-class; Class exploits supervision when one direction per forget class is the natural granularity, and matches PCA in cost since it skips the clustering step. The projection $\taum^l \QF^l$---the minimizer of Term~1 of \eqref{eqn:obj} in isolation---and the overlap $\mC^l={\QF^l}^\top \QR^l$ are computed here; they are all Phase~2 needs.

\textbf{Phase~2: optimization in projected space.} Substituting $\tauF^l = \mB^l {\QF^l}^\top$ into \eqref{eqn:obj} reduces every term to small matrix products in the projected basis---a strictly convex quadratic in $\mB^l$ whose exact minimizer is \eqref{eq:closed} (App.~\ref{sec:proof}). With $\taum^l \QF^l$ and $\mC^l$ pre-computed once, our procedure initializes $\mB^l$ at the projection $\taum^l \QF^l$ and takes 100 Adam steps at learning rate $10^{-3}$ ($mk$ parameters per layer, never revisiting the data). This budget does not reach the exact minimizer; App.~\ref{sec:closedform} measures the gap and shows that applying ${\mB^l}^\star$ directly at the paper's $(\lambda,\gamma)$ over-shrinks $\tauF$ on high-overlap bases, i.e.\ the fixed budget functions as an early-stopping regularizer.

\textbf{Phase~3: application.} We apply $\Wul^l=\Wft^l-\alpha\,\mB^l {\QF^l}^\top$ with a scalar $\alpha$ that corrects for the rank-truncation residual $\|\mE\|_F$ of Lemma~\ref{lem:lowrank} and for imperfect rank. The optional \texttt{skip-layers} knob restricts the update to deeper layers, where the retain-damage constant $\mu_1$ of \eqref{eq:retain-bound} is smaller and the forget/retain spectra are more separable; \S\ref{sec:perlayer} reads the $\alpha$ and \texttt{skip-layers} choices directly off Figs.~\ref{fig:crossfr} and~\ref{fig:taum}.

\textbf{Implementation notes.} Five details of the released code differ from, or are not visible in, the description above. \emph{(i) Centering.} Bases are computed from the \emph{centered} covariance of the layer inputs rather than from the second moment used in \S\ref{sec:theory}; this removes the shared mean-activation direction of post-ReLU features, and the bounds of Proposition~\ref{prop:bounds} then hold for centered inputs, with the mean response $\|\tauR\,\vmu_{\setF}\|^2$ as an additional forget term. \emph{(ii) CAV pools.} In Alg.~\ref{alg:cav-basis} the forget and retain pools are mean-centered \emph{separately} before the ridge probes are fit, so each probe contrasts one cluster (or class) with the remaining activations in centered coordinates; the common forget-vs-retain mean shift is removed by construction. With $|\setY_{\setF}|{=}5$ classes the class probes are nearly linearly dependent (effective rank $4$), so the fifth orthogonalized direction of \textsc{CAV-Class} carries almost no signal. \emph{(iii) Subsampling.} PCA statistics use every sample, but the CAV probes are fit on at most \texttt{max-points} activation vectors per layer ($8192$ on CIFAR-100, $16384$ on TinyImageNet), taken from the first shuffled batches with a fixed loader seed; for a convolutional layer with $L$ spatial positions this is $\texttt{max-points}/L$ images (e.g.\ $512$ images at layer4), and CAV makes a second pass over the retain set. \emph{(iv) Biases.} For an edited layer with a bias, the code shrinks the bias task vector uniformly, $b_{\text{UL}}=b_{\text{init}}+\bigl(1-\alpha\gamma/(1+\gamma)\bigr)(b_{\text{FT}}-b_{\text{init}})$. In ResNet-50 only the classifier has a bias ($\|b_{\text{FT}}-b_{\text{init}}\|\le0.15$, a logit shift below $0.02$), so the rule is immaterial there; ViT-S/16 has a bias in every edited layer, and App.~\ref{sec:vit} reports the ViT cells with and without the rule. \emph{(v) Normalization layers.} Only convolutional and linear weights are edited. BatchNorm (LayerNorm for ViT) parameters and running statistics are kept from $\Wft$, activations are collected in evaluation mode, and no recalibration pass is run after the edit.

\begin{algorithm}[tb]
  \caption{Unmerging}
  \label{alg:unmerge}
  \begin{algorithmic}
    \STATE {\bfseries Input:} Finetuned model $\Wft$, initial weights $\Winit$, forget set $\setF$, retain set $\setR$, rank $k$, regularization $\lambda, \gamma$, scaling $\alpha$, learning rate $\eta$, steps $T$, basis type $\in\{\text{PCA},\text{CAV}\}$, CAV mode $\in\{\text{kmeans},\text{class}\}$ (if basis$=$CAV)
    \STATE {\bfseries Output:} Unlearned model $\Wul$
    \STATE
    \FOR{each target layer $l$}
      \STATE $\taum^l \leftarrow \Wft^l - \Winit^l$
      \STATE Collect layer inputs $\mX_{\setR}^l, \mX_{\setF}^l$ via forward pass
      \STATE $\QR^l \leftarrow \textsc{PcaBasis}(\mX_{\setR}^l, k)$
      \IF{basis type is PCA}
        \STATE $\QF^l \leftarrow \textsc{PcaBasis}(\mX_{\setF}^l, k)$
      \ELSE
        \STATE $\QF^l \leftarrow \textsc{CavBasis}(\mX_{\setF}^l, \mX_{\setR}^l, k, \text{mode})$
      \ENDIF
      \STATE $\mC^l \leftarrow {\QF^l}^\top \QR^l$ \COMMENT{$k \times k$ basis overlap}
    \ENDFOR
    \STATE
    \STATE $\mB^l \leftarrow \taum^l \QF^l$ for all $l$ \COMMENT{initialize at the projection, the minimizer of Term~1}
    \STATE \COMMENT{$T$ Adam steps in projected space: with $\tauF^l = \mB^l {\QF^l}^\top$, \eqref{eqn:obj} is the strictly convex quadratic below, whose exact minimizer is \eqref{eq:closed}; the fixed budget $T$ early-stops (App.~\ref{sec:closedform})}
    \FOR{$t=1$ {\bfseries to} $T$}
      \STATE $\mathcal{L} \leftarrow \sum_l \left\| \taum^l \QF^l - \mB^l \right\|_F^2 + \lambda \left\| \mB^l \mC^l \right\|_F^2 + \gamma \left\| \mB^l \right\|_F^2$
      \STATE $\mB^l \leftarrow \mathrm{Adam}\bigl(\mB^l, \nabla_{\mB^l} \mathcal{L}, \eta\bigr)$ for all $l$
    \ENDFOR
    \STATE
    \STATE \COMMENT{Recover $\tauF$ and apply}
    \FOR{each target layer $l$}
      \STATE $\tauF^l \leftarrow \mB^l {\QF^l}^\top$
      \STATE $\Wul^l \leftarrow \Wft^l - \alpha\, \tauF^l$
    \ENDFOR
    \STATE \textbf{return} $\Wul$
  \end{algorithmic}
\end{algorithm}

\begin{algorithm}[tb]
  \caption{PCA Basis Construction (\textsc{PcaBasis})}
  \label{alg:pca-basis}
  \begin{algorithmic}
    \STATE {\bfseries Input:} Layer inputs $\mX \in \R^{N \times d}$, rank $k$
    \STATE {\bfseries Output:} Orthonormal basis $Q \in \R^{d \times k}$
    \STATE
    \STATE $\vmu \leftarrow \frac{1}{N}\sum_{i=1}^{N} \vx_i$
    \STATE $\mC \leftarrow \mX^\top \mX - N\, \vmu\vmu^\top$ \COMMENT{centered covariance}
    \STATE $\vv_1, \ldots, \vv_k \leftarrow$ top-$k$ eigenvectors of $\mC$
    \STATE $Q \leftarrow [\vv_1, \ldots, \vv_k]$
    \STATE \textbf{return} $Q$
  \end{algorithmic}
\end{algorithm}

\begin{algorithm}[tb]
  \caption{CAV Basis Construction (\textsc{CavBasis})}
  \label{alg:cav-basis}
  \begin{algorithmic}
    \STATE {\bfseries Input:} Forget inputs $\mX_{\setF} \in \R^{N_\setF \times d}$ with class labels $\vy_{\setF}$, retain inputs $\mX_{\setR} \in \R^{N_\setR \times d}$, rank $k$, ridge penalty $\mu$, mode $\in\{\text{kmeans},\text{class}\}$
    \STATE {\bfseries Output:} Orthonormal basis $Q \in \R^{d \times k'}$ where $k' \leq k$ (kmeans) or $k' = |\setY_{\setF}|$ (class)
    \STATE
    \STATE Mean-center $\mX_{\setF}$ and $\mX_{\setR}$
    \IF{mode is \textsc{kmeans}}
      \STATE $\{C_1, \ldots, C_k\} \leftarrow$ $k$-means clustering on $\mX_{\setF}$
    \ELSE
      \STATE $\{C_c\}_{c\in\setY_{\setF}} \leftarrow$ partition $\mX_{\setF}$ by class label $\vy_{\setF}$ \COMMENT{one cluster per forget class}
    \ENDIF
    \STATE $Q \leftarrow []$
    \FOR{each cluster $C_j$}
      \STATE $\mX^+ \leftarrow \mX_{\setF}[C_j]$, \quad $\mX^- \leftarrow \mX_{\setR}$
      \STATE $\mZ \leftarrow [\mX^+; \mX^-]^\top$, \quad $\vy \leftarrow [+\vone; -\vone]$
      \STATE $\vw_j \leftarrow (\mZ\mZ^\top + \mu \mI)^{-1} \mZ\vy$ \COMMENT{ridge regression}
      \STATE Orthogonalize $\vw_j$ against columns of $Q$
      \IF{$\|\vw_j\| > \epsilon$}
        \STATE Append $\vw_j / \|\vw_j\|$ to $Q$
      \ENDIF
    \ENDFOR
    \STATE \textbf{return} $Q$
  \end{algorithmic}
\end{algorithm}

\subsection{Proofs and Scope of the Analysis}
\label{sec:proof}
\vspace{-5pt}
\textbf{Derivation of the closed form (\eqref{eq:closed}).} Write $\tauF=\mB\QF^\top$ with $\QF^\top \QF=\mI$ and $\QR^\top \QR=\mI$, and let $\mC=\QF^\top \QR$. Right-multiplication by a matrix with orthonormal rows preserves the Frobenius norm, so $\|(\taum-\mB\QF^\top)\QF\QF^\top\|_F=\|\taum\QF-\mB\|_F$, $\|\mB\QF^\top \QR\QR^\top\|_F=\|\mB\mC\|_F$, and $\|\mB\QF^\top\|_F=\|\mB\|_F$. \Eqref{eqn:obj} therefore equals $f(\mB)=\|\taum\QF-\mB\|_F^2+\lambda\|\mB\mC\|_F^2+\gamma\|\mB\|_F^2$, with gradient $\nabla f=-2(\taum\QF-\mB)+2\lambda\mB\mC\mC^\top+2\gamma\mB$. Setting it to zero gives $\mB\bigl((1+\gamma)\mI+\lambda\mC\mC^\top\bigr)=\taum\QF$. The matrix in parentheses is symmetric positive definite, so $f$ is strictly convex and its unique minimizer is $\mB^\star=\taum\QF\bigl((1+\gamma)\mI+\lambda\mC\mC^\top\bigr)^{-1}$. For CAV bases $\QF$ has $k'\le k$ orthonormal columns and the derivation is unchanged with $\mC\in\R^{k'\times k}$. The 100-step Adam iterate used for the reported runs does \emph{not} reach $\mB^\star$ in general; App.~\ref{sec:closedform} quantifies the distance and the effect of applying $\mB^\star$ itself.

\textbf{Proof of Lemma~\ref{lem:lowrank}.} Let $\mSigma_{\setF} = \frac{1}{N_\setF}\mX_{\setF}^\top\mX_{\setF}$ be the second-moment matrix of forget inputs with eigenvalues $\nu_1\geq\cdots\geq\nu_d\geq 0$, and let $\QF\in\R^{d\times k}$ be the top-$k$ eigenvectors with projector $\PF=\QF{\QF}^\top$. Under Assumption~\ref{ass:input-subspace}, any task vector $\vtau$ whose rows lie in $\mathrm{span}(\mX_{\setF}^\top)$ admits the decomposition $\vtau = \mB{\QF}^\top + \mE$ with $\mB=\vtau\QF\in\R^{m\times k}$ and
\begin{equation}
    \|\mE\|_F^2 \leq \|\mC\|_{\mathrm{op}}^2\cdot N_\setF\sum_{i=k+1}^{d}\nu_i.
\end{equation}

\begin{proof}
Write $\vtau=\mC\mX_{\setF}$ (neglecting $\mathcal{O}(\eta^2)$ terms). Decompose $\vtau=\underbrace{\mC\mX_{\setF} \QF}_{\mB}{\QF}^\top + \underbrace{\mC\mX_{\setF}(I-\PF)}_{\mE}$. By sub-multiplicativity, $\|\mE\|_F\leq\|\mC\|_{\mathrm{op}}\|\mX_{\setF}(I-\PF)\|_F$, and $\|\mX_{\setF}(I-\PF)\|_F^2=\mathrm{tr}\!\left((I-\PF)\mX_{\setF}^\top\mX_{\setF}(I-\PF)\right)=N_\setF\sum_{i>k}\nu_i$.
\end{proof}

\textbf{Proof of Proposition~\ref{prop:bounds}.} Let $\mSigma_{\setF}\in\R^{d\times d}$ be the second-moment matrix of forget inputs with eigenvalues $\nu_1\geq\cdots\geq\nu_d$ and top-$k$ projector $\PF$. Similarly, let $\mSigma_{\setR}$ have eigenvalues $\mu_1\geq\cdots\geq\mu_d$ with top-$k$ projector $\PR$. Let $\alpha>0$ be the scaling used in the update and denote $\tauR=\taum-\alpha\tauF$. Then:
\begin{align}
    \E_{x\sim\setF}\!\left[\|\tauR\,x\|^2\right] \leq& \nu_1\left\|(\taum-\alpha\tauF)\PF\right\|_F^2 + \nu_{k+1}\left\|\taum-\alpha\tauF\right\|_F^2, \label{eq:forget-bound-proof}\\
    \E_{x\sim\setR}\!\left[\|\alpha\tauF\,x\|^2\right] \leq& \alpha^2\Bigl(\mu_1\left\|\tauF\PR\right\|_F^2 + \mu_{k+1}\left\|\tauF\right\|_F^2\Bigr). 
    \label{eq:retain-bound-proof}
\end{align}

\begin{proof}
For any $\mA\in\R^{m\times d}$ and random vector $x$ with second-moment $\mSigma=\sum_i\sigma_iv_iv_i^\top$:
\begin{align}
    \E\!\left[\|\mA{}x\|^2\right] &= \mathrm{tr}(\mA\mSigma\mA^\top) = \sum_{i=1}^d \sigma_i\|\mA{}v_i\|^2 \\
    &= \underbrace{\sum_{i=1}^k\sigma_i\|\mA{}v_i\|^2}_{\leq\;\sigma_1\|\mA\mQ\|_F^2}+\underbrace{\sum_{i=k+1}^d\sigma_i\|\mA{}v_i\|^2}_{\leq\;\sigma_{k+1}\|\mA\|_F^2},
\end{align}
where $\mQ=[v_1,\ldots,v_k]$ and the tail bound uses $\sum_{i>k}\|\mA{}v_i\|^2\leq\|\mA\|_F^2$. Since $\mQ^\top\mQ=\mI_k$, we have $\|\mA\mQ\|_F=\|\mA\mQ\mQ^\top\|_F=\|\mA\mP\|_F$. Setting $\mA=\tauR=\taum-\alpha\tauF$ with $\mSigma_{\setF}$ gives \eqref{eq:forget-bound-proof}; setting $\mA=\alpha\tauF$ with $\mSigma_{\setR}$ and factoring $\alpha^2$ out of both norms gives \eqref{eq:retain-bound-proof}.
\end{proof}

\begin{remark}[Bases that are not eigenbases]
\label{rem:general-basis}
Proposition~\ref{prop:bounds} takes $\PF$ and $\PR$ to be top-$k$ eigenprojectors. The retain basis is always built by PCA, so \eqref{eq:retain-bound} applies to every variant. The CAV forget bases are orthonormal but are not eigenvectors of $\mSigma_{\setF}$; for any orthonormal $\QF$ with projector $\PF$, splitting $x=\PF x+(\mI-\PF)x$ and using $\|a+b\|^2\le 2\|a\|^2+2\|b\|^2$ gives
\begin{equation}
    \E_{x\sim\setF}\!\left[\|\tauR\,x\|^2\right] \leq 2\nu_1\left\|\tauR\PF\right\|_F^2 + 2\,\bigl\|(\mI-\PF)\mSigma_{\setF}(\mI-\PF)\bigr\|_{\mathrm{op}}\left\|\tauR\right\|_F^2,
\end{equation}
which reduces to \eqref{eq:forget-bound} without the factor $2$ when $\PF$ commutes with $\mSigma_{\setF}$. Term~1 therefore still controls the dominant component of forget leakage, and the tail constant $\nu_{k+1}$ is replaced by the largest forget variance left outside the span of $\QF$, which the captured-variance diagnostic of \S\ref{sec:instance} reports.
\end{remark}

\textbf{Proof of Corollary~\ref{cor:gain}.} Write the singular value decomposition $\mC=\mV\mSigma\mU^{\top}$ with $\mSigma=\mathrm{diag}(\sigma_i)$; since $\QF$ and $\QR$ are orthonormal, $\sigma_i\in[0,1]$ are the cosines of the principal angles between their spans. Then $\mC\mC^{\top}=\mV\mSigma^2\mV^{\top}$ and $\bigl((1+\gamma)\mI+\lambda\mC\mC^{\top}\bigr)^{-1}=\mV\,\mathrm{diag}(g_i)\,\mV^{\top}$, which gives the form of ${\mB}^{\star}$ from \eqref{eq:closed}. For the retain term, $\|{\mB}^{\star}\mC\|_F^2=\sum_i g_i^2\sigma_i^2\|\va_i\|^2$ with $\va_i=\taum\QF\vv_i$, against $\sum_i\sigma_i^2\|\va_i\|^2$ for the plain projection $\mB_0$. \qed

\textbf{Scope of the assumption and of the bounds.} \emph{Assumption~\ref{ass:input-subspace}.} Each SGD update of a linear (or unfolded convolutional) layer is an outer product between a backpropagated signal and that layer's inputs, so the rows of the accumulated update lie in the span of the inputs seen during training. The error term accounts for the drift of those inputs while earlier layers change, and its $\mathcal{O}(\eta^2)$ rate is a small-learning-rate idealization suited to finetuning from $\Wpt$. For from-scratch training with $\Winit{=}0$ we use the property heuristically: weight decay discounts the early updates, whose inputs differ most from the final activations, and we check the resulting low-rank structure empirically (\S\ref{sec:perlayer}). \emph{What \eqref{eq:forget-bound} measures.} Its left-hand side is the response of $\tauR$ on forget inputs, a proxy for forgetting rather than a distance to $\Wrt$: a retrained model still responds to the features that forget inputs share with retain inputs, which is the part Term~2 protects. \emph{Gain.} The first term of \eqref{eq:forget-bound} is smallest at unit gain, $\alpha/(1+\gamma+\lambda\sigma_i^2)=1$, whereas the selected $\alpha$ exceeds $1+\gamma$ (e.g.\ $\alpha/(1+\gamma)\approx1.7$ for the CAV bases on $\Wpt$): the edit over-corrects the $k$ captured directions. We attribute this to forget signal left outside the rank-$k$ span (the tail term of \eqref{eq:forget-bound}), which a unit-gain edit cannot out-vote at the classifier; the bounds motivate the three terms and are not minimized by the tuned edit. \emph{Layers.} The bounds are per layer, with bases computed on the un-edited network, and do not track how upstream edits shift downstream activations; \texttt{skip-layers} confines this interaction to the last $8$--$18$ layers, and we did not test sequential re-estimation of the bases.

\subsection{Requirements, Runtime and Cost}
\label{sec:requirements}
\label{sec:cost}
\vspace{-5pt}
\textbf{Requirements.} Tab.~\ref{tab:requirements} makes precise what each method consumes at unlearning time. No method in our comparison is data-free in the sense of touching no data; \textsc{Unmerge} is \emph{optimization-time} data-free: $\setF$ and $\setR$ are read in forward passes only (one for PCA, two over $\setR$ for CAV), to form second-moment statistics, and no gradient is ever taken with respect to a training sample. The retain statistics can be cached and reused across forget requests on the same checkpoint. Every method uses the retrained model for evaluation only.

\begin{table}[t]
\centering
\caption{Resources consumed at unlearning time (CIFAR-100 Trees, ResNet-50). $\Winit$: whether the pre-finetuning checkpoint is used ($\Winit{=}0$ is the convention in the $\Wzero$ regime, App.~\ref{sec:winit}). Runtimes: $\Wpt$ cells of Tab.~\ref{tab:cifar100}.}
\label{tab:requirements}
\vspace{-5pt}
\begin{adjustbox}{max width=\linewidth}
\begin{tabular}{l|cccccc}
\toprule
Method & Forget data & Retain data & Forget labels & Gradient updates & $\Winit$ & Runtime \\
\midrule
NegGrad+ & yes & yes & yes & 10 epochs (ascent on $\setF$, descent on $\setR$) & no & 24s \\
Random Label & yes & yes & replaced by random & 10 epochs & no & 110s \\
SalUn & yes & yes & replaced by random & 10 epochs + saliency pass & no & 111s \\
NegTV & yes & no & yes & 5 epochs on $\setF$ & no & 4s \\
NegMerge & yes & no & yes & $3\times$ 5 epochs on $\setF$ & no & 11s \\
Kodge et al. & 1 forward pass ($5$k images) & 1 forward pass ($5$k images) & no & none & no & 8s \\
\textsc{Unmerge-PCA} / \textsc{CAV-KMeans} & 1 forward pass & 1 forward pass & no & none & $\Wpt$ regime only & 8s / 26s \\
\textsc{Unmerge-CAV-Class} & 1 forward pass & 1 forward pass & yes & none & $\Wpt$ regime only & 22s \\
\bottomrule
\end{tabular}
\end{adjustbox}
\end{table}

\textbf{Runtime and cost model.} Tab.~\ref{tab:phases} times the three phases of Alg.~\ref{alg:unmerge} on the official Trees cells. The pipeline is data-bound: more than $95\%$ of the wall-clock is the single forward pass over $\setF\cup\setR$ plus basis construction, the optimization of $\{\mB^l\}$ takes under one second. Per layer, the costs are $\mathcal{O}(N d_l)$ to collect activations, $\mathcal{O}(N d_l^2)$ to accumulate the second-moment matrix, $\mathcal{O}(d_l^3)$ for its eigendecomposition (only over non-skipped layers; a randomized top-$k$ solver reduces this to $\mathcal{O}(d_l^2 k)$), and $\mathcal{O}(Tmk^2)$ for the $T$ Adam steps ($\mathcal{O}(mk^2+k^3)$ for the exact solve of \eqref{eq:closed}), so neither solver affects how the method scales. A gradient baseline that trains for $E$ epochs costs $E$ forward and backward passes over the same data, so the runtime ratio is governed by $E$ rather than by model size as long as the per-layer eigendecomposition stays cheaper than one forward pass over $N$ samples---true by orders of magnitude for $d_l$ up to $\sim\!10^4$. The measured ratios against Random Label and SalUn ($4$--$23\times$ on ResNet-50 and ViT-S/16 at $E{=}10$) vary with how much of \textsc{Unmerge}'s time goes to basis construction, which is larger for the CAV variants. One detail of the timing should be kept in mind: the baselines' training loops evaluate on the test set after every epoch to log progress, and their timers include it ($0.72$s per epoch on CIFAR-100, $7.2$s over $E{=}10$ epochs), whereas \textsc{Unmerge}'s timer stops before evaluation. This is under $7\%$ of Random Label's and SalUn's runtime but about a third of NegGrad+'s, whose epochs are short because they only traverse $\setF$ with as many retain samples ($20$ steps). Net of that logging pass NegGrad+ takes $16.7$s instead of $23.9$s on Trees ($\Wpt$), so \textsc{Unmerge-PCA} ($8.6$s) remains about $2\times$ faster than NegGrad+ and the CAV variants ($18$--$26$s) are on par with it; the $\sim$$5\times$ advantage over Random Label and SalUn is unaffected. Beyond the preliminary 3B study of App.~\ref{sec:llm}, language-model scale is scoped as future work.

\begin{table}[t]
\centering
\caption{Phase-wise runtime of \textsc{Unmerge} on CIFAR-100 (Trees), ResNet-50, single H100.}
\label{tab:phases}
\vspace{-5pt}
\begin{adjustbox}{max width=0.85\linewidth}
\begin{tabular}{l|ccc|c}
\toprule
Cell & Phase 1 (activations + bases + $\taum$) & Phase 2 (100 Adam steps) & Phase 3 (apply) & Total \\
\midrule
$\Wzero$, CAV-Class & 17.1s & 0.5s & 0.0s & 17.6s \\
$\Wpt$, PCA ($k{=}64$) & 8.2s & 0.35s & 0.0s & 8.6s \\
\bottomrule
\end{tabular}
\end{adjustbox}
\end{table}

\section{Experimental Details}
\label{sec:expdetails}
\vspace{-5pt}
\textbf{Forget tasks.} CIFAR-100 superclasses (five classes, $|\setF|{=}2500$): Trees \texttt{[47,52,56,59,96]}, Aquatic Mammals \texttt{[4,30,55,72,95]}, Vehicles-1 \texttt{[8,13,48,58,90]}. TinyImageNet semantic groups (ten classes): Dogs \& Cats \texttt{[24--33]} and Arthropods \texttt{[7,9,10,36,38,39,41,42,43,45]}. Within each task $\setF$ is the union of the chosen classes' training images and $\setR=\setS\setminus\setF$; the instance-level protocols are described in App.~\ref{sec:instance-supp}.

\textbf{Training.} All models are trained with SGD (momentum $0.9$, cosine schedule, batch size $128$). ResNet-50 on CIFAR-100: $100$ epochs at learning rate $0.1$ from $\Wzero$ and $50$ epochs at $0.01$ from $\Wpt$ (torchvision \texttt{IMAGENET1K\_V1}; hash in App.~\ref{sec:winit}), weight decay $5\!\times\!10^{-4}$; TinyImageNet uses the same schedules with weight decay $10^{-4}$; ViT-S/16 is finetuned for $10$ epochs at $0.01$. Each retrained reference $\Wrt$ uses the schedule of its $\Wft$ on $\setR$ only. All experiments run on a single NVIDIA H100.

\textbf{Baselines.} All gradient baselines run for $E{=}10$ epochs. CIFAR-100: NegGrad+ ($\alpha{=}0.5$, learning rate $10^{-4}$), Random Label (learning rate $0.01$), SalUn (saliency threshold $0.5$, learning rate $0.01$--$0.02$). TinyImageNet: NegGrad+ ($\alpha\in\{0.5,0.8\}$, learning rate $10^{-4}$), Random Label and SalUn at learning rate $10^{-3}$. The task-vector baselines subtract a vector obtained by gradient training on $\setF$: NegTV~\citep{ilharco2023editing} finetunes $\Wft$ on $\setF$ for $5$ epochs to $\mW_{\setF}$ and applies $\Wul=\Wft-\alpha(\mW_{\setF}-\Wft)$; NegMerge~\citep{kim2025negmerge} obtains $K{=}3$ such vectors at $\{0.5,1,2\}\times$ the base learning rate, keeps the entries whose sign agrees across all $K$, averages them, and subtracts. Both were given an equal-budget grid on Trees (learning rate $\{10^{-3},10^{-2}\}$, $\alpha\in[0.25,4]$, $28$ runs plus an $\alpha$ refinement), and the Trees configuration (learning rate $10^{-3}$, $5$ epochs) is transferred to the other tasks with only $\alpha$ re-swept over $\{0.25,0.35,0.5,0.75,1.0\}$; every reported optimum is interior. Their failure mode is retain collapse: the finetuned vector overlaps retain directions, so the best cells already lose $1$--$13$ points of retain accuracy or leave up to $26\%$ forget accuracy and every stronger $\alpha$ pushes retain toward chance, and the two methods are within $0.04$ ToW of each other in every cell. The activation-subspace baseline of \citet{kodge2024deep} is run as published, on every convolutional and linear layer, from the same forward-pass statistics as \textsc{Unmerge} ($5000$ images per set): its retain and forget scalings are tuned on Trees and Dogs \& Cats by ToW over $\alpha_r\in\{100,300,1000,3000\}$ and $\alpha_f\in\{3,10,30,100\}$, and $\alpha_f$ is re-swept on the transfer tasks; the grids were extended at the edges until every reported optimum is interior.

\textbf{\textsc{Unmerge} configurations.} Phase~2 always uses $T{=}100$ Adam steps at learning rate $10^{-3}$, never tuned. Tab.~\ref{tab:configs} lists the Trees configurations. They are tuned once per (variant, init) on Trees (CIFAR-100) and Dogs \& Cats (TinyImageNet) and transferred to the other tasks changing at most $\alpha$ (by $\le\!0.5$; \textsc{PCA} on $\Wpt$ transfers with $\alpha$ unchanged) and, in three $\Wzero$ cells, \texttt{skip-layers} by one step of five layers. The three metrics are defined in \S\ref{sec:setup}; attack details are in App.~\ref{sec:mia-supp}. The entanglement distances of \S\ref{sec:entangle} are computed on penultimate-layer features (the classifier input) of $\setF$ and a same-size retain subsample ($|\setR|{=}|\setF|$): the unbiased Gaussian-RBF MMD$^2$ at the median bandwidth $\sigma_{\mathrm{med}}$ and averaged over $\sigma\!\in\!\{0.5,1,2\}\!\cdot\!\sigma_{\mathrm{med}}$ (multi), the closed-form Bures--Wasserstein W$_2^2$ between Gaussian fits (mean-shift dominated when the feature dimension exceeds the sample size), and the Sliced-W$_2^2$ over $1000$ random one-dimensional projections.

\begin{table}[t]
\centering
\caption{\textsc{Unmerge} configurations on CIFAR-100 (Trees), ResNet-50.}
\label{tab:configs}
\vspace{-5pt}
\begin{adjustbox}{max width=0.8\linewidth}
\begin{tabular}{l|ccccc}
\toprule
Cell & rank $k$ & \texttt{skip-layers} & $\alpha$ & $\lambda$ & $\gamma$ \\
\midrule
$\Wpt$, PCA & 64 & 45 & 1.6 & 5 & 0.5 \\
$\Wpt$, CAV-KMeans & 16 & 35 & 2.5 & 5 & 0.5 \\
$\Wpt$, CAV-Class & 16 & 35 & 2.5 & 5 & 0.5 \\
$\Wzero$, PCA & 128 & 45 & 1.25 & 50 & 0.5 \\
$\Wzero$, CAV-KMeans & 16 & 40 & 1.0 & 50 & 0.5 \\
$\Wzero$, CAV-Class & 16 & 40 & 1.3 & 50 & 0.5 \\
\bottomrule
\end{tabular}
\end{adjustbox}
\end{table}

\section{Ablations of \textsc{Unmerge}}
\label{sec:ablations-all}
\vspace{-5pt}
\subsection{Per-Layer Diagnostics and Hyperparameters}
\label{sec:ablation}
\begin{figure}[t]
  \centering
  \begin{subfigure}{0.49\linewidth}
    \includegraphics[width=\linewidth]{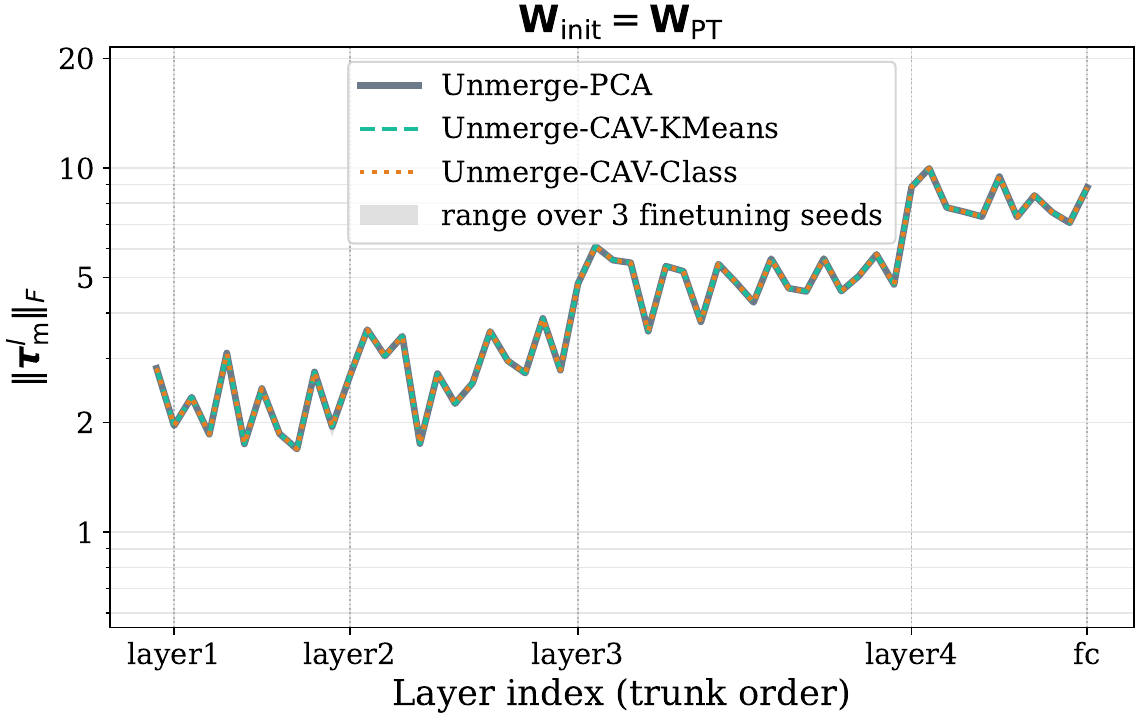}
    \caption{$\Winit=\Wpt$}
  \end{subfigure}
  \begin{subfigure}{0.49\linewidth}
    \includegraphics[width=\linewidth]{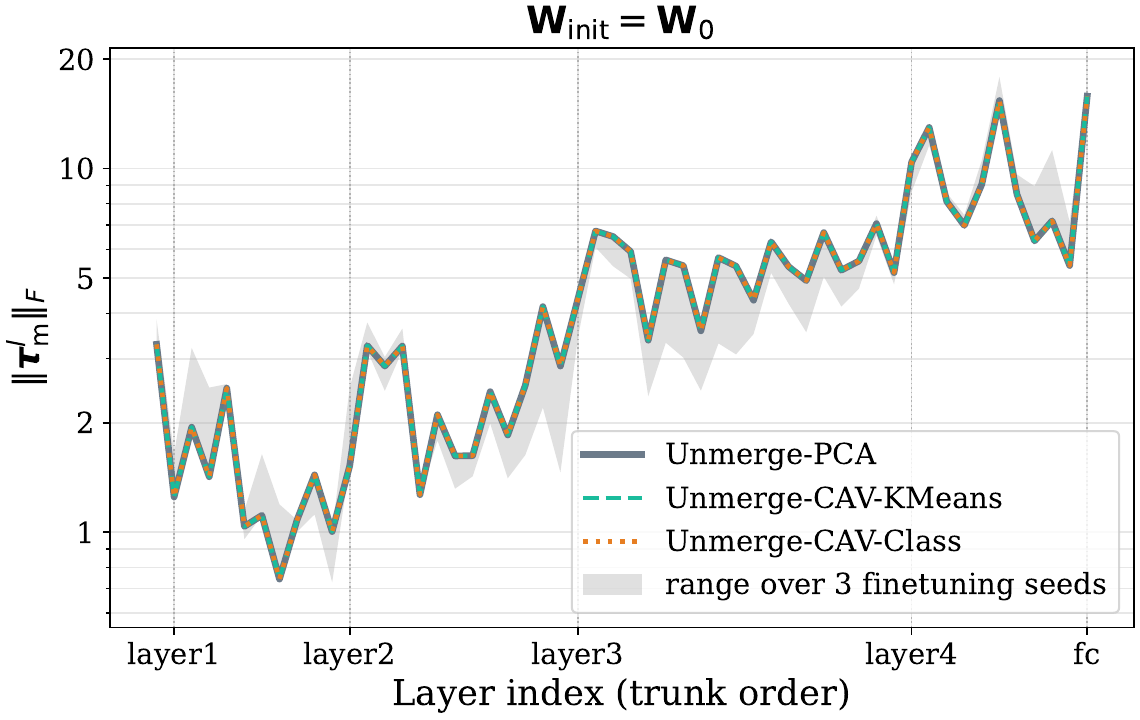}
    \caption{$\Winit=\Wzero$}
  \end{subfigure}
  \vspace{-5pt}
  \caption{Per-layer merged-task-vector magnitude $\|\taum^l\|_F$ on CIFAR-100 (Trees), ResNet-50, log-$y$. The three variant curves coincide exactly because $\taum$ is basis-independent; the shaded band is the range over three finetuning seeds. The cross-init gap is depth-dependent: shallow blocks (layer1--2) carry slightly larger magnitudes on $\Wpt$ (pretrained features must be shifted away from ImageNet), while at layer4 and the classifier $\|\taum\|$ is $\sim$$1.5$--$2\times$ larger on $\Wzero$, where the deepest task-specific weights are encoded from scratch.}
  \label{fig:taum}
\end{figure}
\vspace{-5pt}
This subsection covers the design choices that determine \textsc{Unmerge} performance---basis construction (PCA vs.\ CAV), rank $k$, the scaling $\alpha$, \texttt{skip-layers}, and the regularizers $\lambda,\gamma$---and how $\alpha$ is selected without the retrained model; the objective terms and the solver are ablated in App.~\ref{sec:terms} and~\ref{sec:closedform}. The per-layer figures of \S\ref{sec:perlayer} (Figs.~\ref{fig:crossfr},~\ref{fig:taum}) drive the recipe; the paragraphs below summarize what each knob controls and the ranges that worked across all reported runs.

\textbf{Recipe in brief.} The two diagnostics read off the hyperparameters used throughout: \texttt{skip-layers} at the overlap elbow (layer3$\to$layer4; CAV $35$/$40$, PCA $45$--$50$), $\alpha$ inversely to the deepest-layer $\|\taum\|$ ($[1.3,2.5]$ on $\Wpt$, $[0.5,1.3]$ on $\Wzero$), and CAV where entanglement is the binding constraint ($\Wzero$). They also explain why \textsc{Unmerge-PCA} wins ToW on $\Wpt$ yet loses on $\Wzero$ at the same entanglement (next paragraph).

\textbf{PCA vs.\ CAV.} PCA gives a strong, unsupervised baseline but its top-$k$ eigenvectors absorb dataset-generic variance that overlaps with retain---visible as the highest per-direction overlap at every depth in Fig.~\ref{fig:crossfr} ($0.54$--$0.58$ at layer4 and $0.46$--$0.56$ at the classifier, all bases at rank $16$). CAV's discriminative ridge probes target only forget-distinguishing directions ($0.40$--$0.46$ at layer4, $0.08$--$0.20$ at the classifier); the two CAV modes are essentially tied per direction, so the gap between them in the main tables comes from the number of directions ($|\setY_{\setF}|$ vs.\ $k$), not from entanglement. The empirical consequence is the regime-split observed in Tab.~\ref{tab:cifar100-summary}, which the two diagnostics of \S\ref{sec:perlayer} explain jointly: by \eqref{eq:retain-bound}, retain damage scales as $\alpha^2\mu_1\|\QF^\top\QR\|_F^2$ times the basis mass $\|\mB\|^2$, so entanglement and $\|\taum\|$ enter the retain budget together, and since the entanglement of each basis is nearly the same in both regimes it is the deep $\|\taum\|$ that decides. PCA wins ToW on $\Wpt$ ($0.9851$), where moderate deep $\|\taum\|$ tolerates the higher overlap at larger $\alpha$; on $\Wzero$, deep $\|\taum\|$ peaks at $\sim$$15$ and forces a smaller $\alpha$ to protect retain, which under-drives PCA ($0.9230$), while CAV-Class's less entangled basis ($0.45$ at layer4, $0.17$ at the classifier) lets $\alpha$ climb back and forget cleanly ($0.9909$). All three bases are nearly init-invariant per direction ($\le\!0.13$ change in any block between $\Wpt$ and $\Wzero$), and CAV-Class remains the safer default when the init is unknown.

\textbf{Rank $k$.} Per Lemma~\ref{lem:lowrank}, the residual $\|\mE\|_F^2$ is bounded by $N_{\setF}\sum_{i>k}\nu_i$, so accuracy saturates once $k$ covers the effective dimensionality of the per-layer forget covariance. We sweep $k\!\in\!\{8,16,32,64,128\}$; for CAV bases the elbow lies near $k\!=\!16$ on ResNet-50 and $k\!=\!16$--$32$ on ViT-S, beyond which retain accuracy degrades by $0.5$--$2\%$ at fixed $\alpha$ as the rank-$k$ form picks up more retain-aligned directions. PCA prefers larger ranks ($k\!=\!64$--$128$): its variance-ordered eigenvectors need more directions to cover the forget signal, and on $\Wzero$---where deep task vectors are large---$k\!=\!128$ captures more forget energy per unit $\alpha$ and clearly pays off. CAV-Class is invariant to $k$ by construction (one direction per forget class, $k'\!=\!|\setY_{\setF}|$), which removes this knob in the supervised regime.

\textbf{Scale $\alpha$.} $\alpha\!\in\![0.5,2.5]$ tracks the deepest-layer $\|\taum\|$ (Fig.~\ref{fig:taum}) \emph{inversely}: $\Wpt$ keeps $\alpha\!\in\![1.3,2.5]$ where deep $\|\taum\|\!\le\!10$, $\Wzero$ uses $\alpha\!\in\![0.5,1.3]$ to cap retain damage when deep $\|\taum\|\!\approx\!15$. The product $\alpha\|\taum\|$ is the operative quantity in \eqref{eq:retain-bound}.

\textbf{Skip-layers.} Restricting unmerging to deeper layers (\texttt{skip-layers}$\,=\,35$--$50$ of the $53$ convolutions of ResNet-50; the classifier is always edited: $35$ edits the last $18$ convolutions, from \texttt{layer3.3.conv2} on, $40$ the last $13$, from \texttt{layer3.5.conv1} on, $45$ the last $8$, all in layer4, and $50$ the last $3$) preserves the forget drop while \emph{improving} retain/test by $2$--$5$ points relative to all-layers updates. The cutoff is read directly off the overlap elbow in Fig.~\ref{fig:crossfr} (layer3$\to$layer4 transition) and is pushed deeper the more entangled the basis: CAV uses $35$/$40$, while PCA---whose variance-driven eigenvectors overlap retain most---works best at $45$--$50$. Shallower blocks have larger retain-damage constant $\mu_1$ and more entangled bases, so their unmerging budget costs more retain than it buys in forget.

\textbf{Regularizers $\lambda$, $\gamma$.} $\lambda\!\in\!\{5,20,50\}$ scales the retain-leakage penalty (Term~2 of \eqref{eqn:obj}), chosen per initialization regime: $\lambda\!=\!5$--$20$ on $\Wpt$ and $\lambda\!=\!50$ on $\Wzero$, where larger deep task vectors demand stronger retain protection. The penalty is most binding on PCA bases, whose overlap with retain is highest; CAV-Class runs are insensitive to $\lambda$ across an order of magnitude because the basis itself already separates forget from retain. $\gamma\!\in\!\{0.25,0.5\}$ acts as a uniform magnitude prior on $\mB^l$ (shrinking the closed-form solution of Term~1 by $1/(1{+}\gamma)$) and is most helpful on $\Wzero$ where $\tauF^l$ would otherwise over-expand along tail directions of the forget covariance. Both regularizers transfer across superclasses without retuning (\S\ref{sec:setup}), which we view as part of the unlearning quality of \textsc{Unmerge}.

\textbf{Selecting $\alpha$ without the retrained model.} All configurations in this paper, ours and the baselines', are selected by ToW, which uses $\Wrt$ and the test set, as is common practice~\citep{fan2024salun,kurmanji2023towards}. For a class-level request a practitioner has neither, but knows the target forget accuracy (zero). Because $\mB^l$ does not depend on $\alpha$, the update of \eqref{eq:update} can be line-searched after a single solve, at the cost of one forward pass over $\setF$ per candidate. We swept $\alpha$ on a uniform grid for the six Trees configurations. Forget and retain accuracy are monotone in $\alpha$, so we test the rule \emph{take the smallest $\alpha$ whose forget-set accuracy is at most $1\%$}. It selects the ToW-optimal grid point in five of the six cells and loses $0.003$ ToW in the sixth; the training-only score $\mathrm{Acc}_{\setR}(1-\mathrm{Acc}_{\setF})$ of \citet{kodge2024deep} loses at most $0.006$. On the coarser grids of the transfer superclasses the rule selects the ToW-optimal cell or loses at most $0.012$ (nine cells). On TinyImageNet/$\Wzero$ it applies only to \textsc{CAV-Class} (loss $\le\!0.007$): for \textsc{CAV-KMeans} no $\alpha$ on our grid reaches $1\%$ forget accuracy, and for \textsc{PCA} on Arthropods the rule over-shoots ($-0.15$ ToW), so the hardest regime still needs a reference. Rank, \texttt{skip-layers}, $\lambda$ and $\gamma$ are transferred from the tuning task unchanged (App.~\ref{sec:expdetails}).

\subsection{Objective Terms}
\label{sec:terms}
\vspace{-5pt}
Tab.~\ref{tab:term-ablation} removes the terms of \eqref{eqn:obj} one at a time with the reported estimator and re-sweeps $\alpha$ for every variant. With Term~1 alone the objective is minimized at the initialization, so the edit is $\alpha$ times the projection $\taum\QF$: it collapses retain for the PCA basis (ToW $0.46$--$0.48$, retain $71$--$94\%$) and trails the full objective by $0.04$--$0.35$ for the CAV bases, whose directions overlap retain less. Term~2 accounts for essentially all of the gain: with Terms~1 and~2 every cell is within $0.02$ of the full objective, and above it in one. Term~3 is a uniform shrinkage that $\alpha$ already provides: on its own it closes at most part of the gap (Terms~1+3: $0.50$--$0.96$, below Terms~1+2 in every cell), and at the reported $\lambda$ its effect is within $0.02$ everywhere. The optimization of Phase~2 is therefore meaningful precisely as the solver of the Term-2-regularized problem---the iterate ends at relative distance $0.7$--$1.9$ from its initialization (App.~\ref{sec:closedform})---and not because the initialization is already a good edit. The task-vector baselines corroborate this from outside the method: NegTV and NegMerge subtract a vector finetuned on $\setF$ with no retain penalty, and they lose retain accuracy in the same way as Term~1 alone (App.~\ref{sec:expdetails}).

\begin{table}[t]
\centering
\caption{Term ablation on CIFAR-100 (Trees) with the reported estimator ($100$ Adam steps): best ToW over an $\alpha$ sweep for each variant, with the selected $\alpha$ in parentheses. Term~1 only: $\lambda{=}\gamma{=}0$ (the edit is $\alpha\,\taum\QF$). Terms~1+3: $\lambda{=}0$, $\gamma{=}0.5$. Terms~1+2: $\lambda$ as reported, $\gamma{=}0$. All three: $\lambda$ and $\gamma$ as reported. Every optimum is interior to its grid.}
\label{tab:term-ablation}
\vspace{-5pt}
\begin{adjustbox}{max width=0.9\linewidth}
\begin{tabular}{l|cccc}
\toprule
Cell & Term 1 only & Terms 1+3 & Terms 1+2 & All three \\
\midrule
$\Wzero$, PCA & 0.4622 (0.5) & 0.4978 (0.75) & 0.9608 (1.25) & \best{0.9609 (1.25)} \\
$\Wzero$, CAV-KMeans & 0.9419 (0.3) & 0.9433 (0.4) & 0.9819 (1) & \best{0.9834 (1)} \\
$\Wzero$, CAV-Class & 0.9530 (0.5) & 0.9553 (0.6) & 0.9863 (1.5) & \best{0.9895 (1.4)} \\
$\Wpt$, PCA & 0.4802 (1) & 0.8174 (1.5) & 0.9843 (1.6) & \best{0.9868 (1.6)} \\
$\Wpt$, CAV-KMeans & 0.6162 (1.5) & 0.6897 (1.875) & \best{0.9775 (1.6)} & 0.9699 (2.4) \\
$\Wpt$, CAV-Class & 0.8089 (1.5) & 0.8600 (2.25) & 0.9294 (2) & \best{0.9459 (2.4)} \\
\bottomrule
\end{tabular}
\end{adjustbox}
\end{table}

\subsection{The Budgeted Solver and the Exact Minimizer}
\label{sec:closedform}
\vspace{-5pt}
Phase~2 of Alg.~\ref{alg:unmerge} moves $\mB^l$ from the projection $\mB^l_0=\taum^l\QF^l$ toward the exact minimizer ${\mB^l}^\star$ of \eqref{eq:closed} under a fixed budget ($T{=}100$ Adam steps, learning rate $10^{-3}$). This subsection characterizes the estimator that results, compares it with the exact minimizer, and states what depends on the budget, on the six official Trees cells. (Implementation note: the code averages each loss term over its entries rather than summing, so its exact minimizer uses $\lambda_{\text{eff}}=\lambda\,k_F/k_R$; this differs from $\lambda$ only for CAV-Class, where $k_F=|\setY_{\setF}|=5$ and $k_R=16$.)

\textbf{What the budgeted iterate computes.} Adam's normalized updates move every entry of $\mB^l$ by at most the step budget $\delta=\mathrm{lr}\times T=0.1$, several times the typical entry of the projection (rms $0.012$--$0.025$). Entries closer than $\delta$ to the minimizer therefore converge, and the others move by about $\delta$, so the iterate should be close to
\begin{equation}
\label{eq:clip}
    \mB^l_{\delta}={\mB^l}^{\star}+S_{\delta}\bigl(\mB^l_0-{\mB^l}^{\star}\bigr),
\end{equation}
with $S_\delta$ the entrywise soft-threshold: the exact minimizer plus a sparse residual of the largest projection coefficients, interpolating between the projection ($\delta{=}0$) and the minimizer ($\delta{\to}\infty$). Tab.~\ref{tab:clipmodel} tests this model, which has no free parameter. Entrywise, $\mB_\delta$ is the closest of the three candidates to the Adam iterate in every cell (relative distance $0.01$--$0.32$, against $0.05$--$0.96$ for $\mB^\star$ and $0.74$--$1.85$ for $\mB_0$), and applying $\mB_{0.1}$ in place of the iterate, at the reported hyperparameters, reproduces the reported ToW within $0.005$ in five of the six cells ($0.023$ in the sixth). The model also explains the split between bases. For the CAV bases the residual is nearly empty (at most $0.15\%$ of the entries survive the threshold) and ToW moves by at most $0.02$ over $\delta\in[0.05,0.1]$: the solver hardly matters, which is why the exact minimizer can replace the iterate there. For PCA the residual carries the edit. On $\Wzero$ the minimizer is a no-op at the reported $\lambda$ and $0.36\%$ of the coefficients do the forgetting; this is a selection over (output unit, direction) pairs---the iterate places $37\%$ of its energy in the top $5\%$ of output units, against $22\%$ for $\mB^\star$---that no ridge solution, which filters all output units identically, can express. The price is a dependence on the step budget that the CAV variants do not have (last columns of Tab.~\ref{tab:clipmodel}). All our experiments use one budget ($T{=}100$, learning rate $10^{-3}$), never tuned per cell.

\begin{table}[t]
\centering
\caption{The early-stopped Adam iterate vs.\ the closed-form model $\mB_\delta$ of \eqref{eq:clip} on the six official CIFAR-100 (Trees) cells, at the reported $(\lambda,\gamma,\alpha)$, seed~1. Rel.\ dist.: distance of the Adam iterate to the projection $\mB_0$, to the minimizer $\mB^\star$, and to $\mB_\delta$ at the best $\delta\in\{0.02,\dots,0.12\}$, relative to $\|\mB_{\text{Adam}}\|_F$ and aggregated over edited layers. Resid.: share of entries with $|\mB_0-\mB^\star|>0.1$. ToW: the reported cell, and $\mB_\delta$ applied in its place.}
\label{tab:clipmodel}
\vspace{-5pt}
\begin{adjustbox}{max width=\linewidth}
\begin{tabular}{l|ccc|c|c|ccc}
\toprule
 & \multicolumn{3}{c|}{Rel.\ dist.\ of the iterate to} & Resid. & ToW & \multicolumn{3}{c}{ToW of $\mB_\delta$} \\
Cell & $\mB_0$ & $\mB^\star$ & $\mB_\delta$ & (\%) & Adam & $\delta{=}0.1$ & $0.075$ & $0.05$ \\
\midrule
$\Wzero$, CAV-Class & 1.46 & 0.35 & 0.09 & 0.07 & 0.9851 & 0.9804 & 0.9864 & 0.9860 \\
$\Wzero$, CAV-KMeans & 1.64 & 0.49 & 0.32 & 0.15 & 0.9834 & 0.9604 & 0.9784 & 0.9795 \\
$\Wzero$, PCA ($k{=}128$) & 1.68 & 0.96 & 0.31 & 0.36 & 0.9609 & 0.9610 & 0.9088 & 0.5926 \\
$\Wpt$, CAV-Class & 0.74 & 0.07 & 0.01 & 0.01 & 0.9418 & 0.9380 & 0.9418 & 0.9429 \\
$\Wpt$, CAV-KMeans & 1.31 & 0.05 & 0.01 & 0.00 & 0.9652 & 0.9651 & 0.9654 & 0.9648 \\
$\Wpt$, PCA ($k{=}64$) & 1.85 & 0.45 & 0.16 & 0.10 & 0.9868 & 0.9867 & 0.9779 & 0.9452 \\
\bottomrule
\end{tabular}
\end{adjustbox}
\end{table}

\textbf{The exact minimizer, tuned for itself.} Hyperparameters do not transfer between the two estimators. At the $(\lambda,\gamma,\alpha)$ tuned for the budgeted iterate, ${\mB^l}^\star$ reproduces the CAV cells on $\Wpt$ (ToW $0.932$/$0.965$ vs.\ $0.942$/$0.965$; the iterate is within relative distance $0.05$--$0.07$ of it there), under-forgets the CAV cells on $\Wzero$ ($0.89$/$0.68$) and is a no-op for both PCA cells, because at that $\lambda$ the retain penalty shrinks $\tauF$ to $3$--$10\%$ of $\taum$ along PCA's heavily overlapping directions; the plain projection $\mB^l_0$ at the same $\alpha$ collapses retain instead ($1.1\%$/$14.6\%$). Tuned for itself over $\lambda\in\{0,\dots,200\}$, $\gamma\in\{0,0.5\}$ and $\alpha$ ($34$--$50$ runs per cell, all optima interior), the exact minimizer prefers $\gamma{=}0$ and a larger $\lambda$ ($10$--$100$) with a correspondingly larger $\alpha$---it scales the component of $\taum^l\QF^l$ along the $i$-th principal direction between the forget and retain spans by $1/(1+\gamma+\lambda\sigma_i^2)$, so a larger $\lambda$ shrinks only the shared directions---and is then within $0.01$ of the reported iterate or above it in five of six cells ($0.990$/$0.975$ for CAV-Class/KMeans on $\Wzero$; $0.999$/$0.996$/$0.991$ for CAV-Class/KMeans/PCA on $\Wpt$), while it fails for PCA on $\Wzero$ ($0.65$). The reported estimator run at those hyperparameters reproduces the closed form for the CAV bases ($0.999$, $0.995$, $0.991$ and $0.979$), so the gap to the reported CAV cells reflects the $\lambda$ range of the original Trees grid (up to $5$ on $\Wpt$) rather than the solver, and those cells are conservative; for PCA the two estimators remain different objects ($0.40$ and $0.0001$ at the closed form's optimum).

\textbf{Scope.} Both solvers are negligible in cost ($0.2$--$0.8$s for the iterate and $0.01$--$0.03$s for the solve, against $8$--$21$s for Phase~1), so the choice is about universality, not efficiency. We report the budgeted iterate throughout because, at one fixed budget, it is the only estimator that works in all six cells. For the CAV variants the exact minimizer is an equivalent, budget-free alternative, and \eqref{eq:clip} gives the iterate an explicit form in which the threshold $\delta$ replaces the step count; the PCA variants, in contrast, depend on the budget (last columns of Tab.~\ref{tab:clipmodel}), which is a limitation of those variants. Adopting an explicit form as the default changes the scale of $(\lambda,\alpha)$ and needs a multi-seed, multi-dataset validation that we leave to future work. All cells of this subsection are single-seed ablations on Trees.

\subsection{\texorpdfstring{$\Winit$}{W\_init} Ablation}
\label{sec:winit}
\vspace{-5pt}
\textsc{Unmerge} forms $\taum=\Wft-\Winit$, so we ask how much the pre-finetuning checkpoint matters (Tab.~\ref{tab:winit}; Trees, $\Wpt$ regime). Setting $\Winit{=}0$---our convention throughout the $\Wzero$ regime---makes $\taum$ larger (it now contains the pretrained weights), so the paper's $\alpha$ over-shoots; with $\alpha$ retuned on the same grid ($0.9$ for CAV and $1.2$ for PCA, against $2.5$ and $1.6$ at the true init) zero-init matches the CAV cells and trails only on PCA, whose variance basis otherwise absorbs the shared pretrained content that the true init subtracts. Substituting a \emph{different} pretrained checkpoint (torchvision \texttt{IMAGENET1K\_V2} in place of \texttt{V1}) is catastrophic for every variant: the subtracted vector then contains the difference between two unrelated pretrainings. The exact checkpoint is therefore not required---but a wrong one must not be used, and $\Winit{=}0$ is the safe fallback. The $\Wpt$ checkpoint is part of the method's input and we release it: torchvision ResNet-50 \texttt{IMAGENET1K\_V1} with CIFAR-adapted first convolution and classifier (SHA-256 prefix \texttt{2a54223ea63a790f}; full hash in the code release). The adapted layers are seeded, so the file, not the recipe, is the artifact.

\begin{table}[t]
\centering
\caption{$\Winit$ ablation on CIFAR-100 (Trees), $\Wpt$ regime: ToW per \textsc{Unmerge} variant (retrained reference F/R/T $0.00$/$99.98$/$80.67$).}
\label{tab:winit}
\vspace{-5pt}
\begin{adjustbox}{max width=0.8\linewidth}
\begin{tabular}{l|ccc}
\toprule
$\Winit$ & CAV-Class & CAV-KMeans & PCA \\
\midrule
True init (\texttt{V1}, paper cells) & 0.9418 & \best{0.9652} & \best{0.9868} \\
Zero, paper $\alpha$ & 0.7249 & 0.6036 & 0.8291 \\
Zero, $\alpha$ retuned & \best{0.9629} & 0.9639 & 0.9422 \\
Wrong checkpoint (\texttt{V2}) & 0.4742 & 0.0745 & 0.0628 \\
\bottomrule
\end{tabular}
\end{adjustbox}
\end{table}

\section{Additional Results}
\label{sec:results-supp}
\vspace{-5pt}
\subsection{Detailed Per-Task Results}
\label{sec:detail-tables}
\vspace{-5pt}
The summary in Tab.~\ref{tab:cifar100-summary} averages over three CIFAR-100 superclasses (Trees, Aquatic Mammals, Vehicles); we expose the per-superclass breakdown here. Tabs.~\ref{tab:cifar100}, \ref{tab:cifar100-aquatic}, and \ref{tab:cifar100-vehicle} report the three CIFAR-100 superclasses individually, and Tab.~\ref{tab:tinyimgnt-arthropods} reports the second TinyImageNet superclass (Arthropods). The hyperparameter set is \emph{tuned once per (method, init)} on Trees and Dogs \& Cats and \emph{transferred} to Aquatic Mammals, Vehicles, and Arthropods with at most a small change of $\alpha$ (App.~\ref{sec:expdetails}), so these tables also stress-test transferability. The pattern from the main paper is preserved: in every (superclass, init) cell at least one \textsc{Unmerge} variant leads every baseline on ToW, including the task-vector baselines and the projection of \citet{kodge2024deep}; on the CIFAR-100 superclasses at least two of three lead every gradient-based baseline, whereas on TinyImageNet/$\Wzero$ (Dogs \& Cats and Arthropods) only \textsc{Unmerge-CAV-Class} leads and PCA/KMeans trail SalUn. \textsc{Unmerge-CAV-Class} is consistently strongest on $\Wzero$; on $\Wpt$ the lead goes to \textsc{Unmerge-PCA} on the CIFAR-100 superclasses, to \textsc{Unmerge-CAV-KMeans} on Arthropods and to \textsc{Unmerge-CAV-Class} on Dogs \& Cats.

\begin{table}[t]
\centering
\caption{CIFAR-100 unlearning results (Trees). Rows: evaluation metric (Forget/Retain/Test accuracy, \%) and tug-of-war score (\eqref{eqn:tow}; higher is better). Columns: unlearning method. $\Wft$ and $\Wrt$ are the finetuned upper bound and retrained gold standard. Best non-retrained in \bestlegend{bold}.}
\label{tab:cifar100}
\vspace{-5pt}
\begin{subtable}{\linewidth}
\centering
\label{tab:cifar100-wpt}
\begin{adjustbox}{max width=\linewidth}
\begin{tabular}{l|c|cccccc|ccc}
\toprule
$\Winit= \Wpt$ & $\Wrt$ & NegGrad+ & Rand.\ Label & SalUn & NegTV & NegMerge & Kodge et al. & Unmerge-PCA & \shortstack[c]{Unmerge-CAV\\KMeans} & \shortstack[c]{Unmerge-CAV\\Class} \\
\midrule
Forget Acc $\downarrow$ & 0.0 & 0.36 & 12.24 & 4.32 & 1.72 & 2.60 & 0.16 & 0.08 & 0.00 & 0.12 \\
Retain Acc $\uparrow$ & 99.98 & 97.21 & 99.94 & 99.92 & 96.87 & 97.50 & 99.67 & 99.80 & 99.40 & 98.82 \\
Test Acc $\uparrow$ & 80.67 & 74.48 & 80.62 & 80.64 & 74.59 & 75.24 & 78.25 & 79.60 & 77.75 & 76.07 \\
ToW $\uparrow$ & 1 & 0.9089 & 0.8768 & 0.9560 & 0.8943 & 0.8983 & \textbf{0.9712} & \best{0.9868} & 0.9652 & 0.9418 \\
Runtime $\downarrow$ &  & 24s & 110s & 111s & \best{4s} & 11s & 8s & 8s & 26s & 22s \\
\bottomrule
\end{tabular}
\end{adjustbox}
\end{subtable}
\begin{subtable}{\linewidth}
\centering
\label{tab:cifar100-w0}
\begin{adjustbox}{max width=\linewidth}
\begin{tabular}{l|c|cccccc|ccc}
\toprule
$\Winit=\Wzero$ & $\Wrt$ & NegGrad+ & Rand.\ Label & SalUn & NegTV & NegMerge & Kodge et al. & Unmerge-PCA & \shortstack[c]{Unmerge-CAV\\KMeans} & \shortstack[c]{Unmerge-CAV\\Class} \\
\midrule
Forget Acc $\downarrow$ & 0.0 & 4.40 & 15.44 & 3.08 & 1.40 & 1.96 & 0.08 & 0.12 & 0.68 & 1.12 \\
Retain Acc $\uparrow$ & 99.97 & 96.04 & 99.96 & 99.94 & 95.60 & 96.50 & 99.01 & 98.75 & 99.83 & 99.87 \\
Test Acc $\uparrow$ & 72.92 & 66.05 & 74.00 & 74.14 & 66.19 & 66.91 & 70.05 & 70.31 & 72.07 & 72.65 \\
ToW $\uparrow$ & 1 & 0.8553 & 0.8364 & 0.9571 & 0.8795 & 0.8895 & 0.9613 & 0.9609 & \textbf{0.9834} & \best{0.9851} \\
Runtime $\downarrow$ &  & 24s & 111s & 115s & \best{4s} & 12s & 9s & 9s & 21s & 18s \\
\bottomrule
\end{tabular}
\end{adjustbox}
\end{subtable}
\end{table}

\begin{table}[t]
\centering
\caption{CIFAR-100 unlearning results (Aquatic Mammals). Best non-retrained in \bestlegend{bold}.}
\label{tab:cifar100-aquatic}
\vspace{-5pt}
\begin{subtable}{\linewidth}
\centering
\begin{adjustbox}{max width=\linewidth}
\begin{tabular}{l|c|cccccc|ccc}
\toprule
$\Winit= \Wpt$ & $\Wrt$ & NegGrad+ & Rand.\ Label & SalUn & NegTV & NegMerge & Kodge et al. & Unmerge-PCA & \shortstack[c]{Unmerge-CAV\\KMeans} & \shortstack[c]{Unmerge-CAV\\Class} \\
\midrule
Forget Acc $\downarrow$ & 0.0 & 0.56 & 3.36 & 5.52 & 4.68 & 7.24 & 0.44 & 0.20 & 0.20 & 1.40 \\
Retain Acc $\uparrow$ & 99.99 & 95.34 & 99.94 & 99.98 & 98.00 & 98.68 & 99.67 & 99.72 & 99.71 & 99.52 \\
Test Acc $\uparrow$ & 80.74 & 73.16 & 80.26 & 81.05 & 75.64 & 76.46 & 78.54 & 79.20 & 79.16 & 77.95 \\
ToW $\uparrow$ & 1 & 0.8763 & 0.9613 & 0.9417 & 0.8866 & 0.8763 & 0.9705 & \best{0.9799} & \textbf{0.9795} & 0.9540 \\
Runtime $\downarrow$ &  & 23s & 110s & 112s & \best{4s} & 11s & 8s & 9s & 27s & 22s \\
\bottomrule
\end{tabular}
\end{adjustbox}
\end{subtable}
\begin{subtable}{\linewidth}
\centering
\begin{adjustbox}{max width=\linewidth}
\begin{tabular}{l|c|cccccc|ccc}
\toprule
$\Winit=\Wzero$ & $\Wrt$ & NegGrad+ & Rand.\ Label & SalUn & NegTV & NegMerge & Kodge et al. & Unmerge-PCA & \shortstack[c]{Unmerge-CAV\\KMeans} & \shortstack[c]{Unmerge-CAV\\Class} \\
\midrule
Forget Acc $\downarrow$ & 0.0 & 5.72 & 3.40 & 4.32 & 5.72 & 6.32 & 0.72 & 0.72 & 0.56 & 0.16 \\
Retain Acc $\uparrow$ & 99.97 & 97.18 & 99.97 & 99.97 & 96.93 & 97.18 & 99.00 & 95.80 & 99.66 & 99.81 \\
Test Acc $\uparrow$ & 72.8 & 69.02 & 74.55 & 74.69 & 68.82 & 68.85 & 70.68 & 66.66 & 72.57 & 73.48 \\
ToW $\uparrow$ & 1 & 0.8819 & 0.9491 & 0.9387 & 0.8778 & 0.8747 & 0.9623 & 0.8929 & \textbf{0.9891} & \best{0.9900} \\
Runtime $\downarrow$ &  & 23s & 109s & 112s & \best{4s} & 11s & 8s & 6s & 21s & 18s \\
\bottomrule
\end{tabular}
\end{adjustbox}
\end{subtable}
\end{table}

\begin{table}[t]
\centering
\caption{CIFAR-100 unlearning results (Vehicles). Best non-retrained in \bestlegend{bold}.}
\label{tab:cifar100-vehicle}
\vspace{-5pt}
\begin{subtable}{\linewidth}
\centering
\begin{adjustbox}{max width=\linewidth}
\begin{tabular}{l|c|cccccc|ccc}
\toprule
$\Winit= \Wpt$ & $\Wrt$ & NegGrad+ & Rand.\ Label & SalUn & NegTV & NegMerge & Kodge et al. & Unmerge-PCA & \shortstack[c]{Unmerge-CAV\\KMeans} & \shortstack[c]{Unmerge-CAV\\Class} \\
\midrule
Forget Acc $\downarrow$ & 0.0 & 0.00 & 12.12 & 8.52 & 0.40 & 0.44 & 0.36 & 0.04 & 0.04 & 0.56 \\
Retain Acc $\uparrow$ & 99.97 & 97.38 & 99.94 & 99.91 & 90.43 & 91.31 & 98.82 & 99.89 & 99.78 & 99.32 \\
Test Acc $\uparrow$ & 80.15 & 73.82 & 80.05 & 79.87 & 68.21 & 68.85 & 76.07 & 79.14 & 78.09 & 76.68 \\
ToW $\uparrow$ & 1 & 0.9125 & 0.8777 & 0.9117 & 0.7934 & 0.8066 & 0.9448 & \best{0.9887} & \textbf{0.9771} & 0.9537 \\
Runtime $\downarrow$ &  & 23s & 110s & 112s & \best{4s} & 11s & 8s & 8s & 26s & 22s \\
\bottomrule
\end{tabular}
\end{adjustbox}
\end{subtable}
\begin{subtable}{\linewidth}
\centering
\begin{adjustbox}{max width=\linewidth}
\begin{tabular}{l|c|cccccc|ccc}
\toprule
$\Winit=\Wzero$ & $\Wrt$ & NegGrad+ & Rand.\ Label & SalUn & NegTV & NegMerge & Kodge et al. & Unmerge-PCA & \shortstack[c]{Unmerge-CAV\\KMeans} & \shortstack[c]{Unmerge-CAV\\Class} \\
\midrule
Forget Acc $\downarrow$ & 0.0 & 3.48 & 9.36 & 4.36 & 14.84 & 15.24 & 1.80 & 0.68 & 0.00 & 0.00 \\
Retain Acc $\uparrow$ & 99.96 & 97.46 & 99.96 & 99.93 & 94.66 & 94.94 & 97.04 & 96.59 & 99.62 & 99.86 \\
Test Acc $\uparrow$ & 71.7 & 67.56 & 73.29 & 73.70 & 65.15 & 65.09 & 66.83 & 67.07 & 70.74 & 71.85 \\
ToW $\uparrow$ & 1 & 0.9021 & 0.8920 & 0.9370 & 0.7536 & 0.7519 & 0.9069 & 0.9153 & \textbf{0.9870} & \best{0.9975} \\
Runtime $\downarrow$ &  & 23s & 110s & 111s & \best{4s} & 11s & 8s & 9s & 21s & 18s \\
\bottomrule
\end{tabular}
\end{adjustbox}
\end{subtable}
\end{table}

\begin{table}[t]
\centering
\caption{TinyImageNet unlearning results (Arthropods). Best non-retrained in \bestlegend{bold}.}
\label{tab:tinyimgnt-arthropods}
\vspace{-5pt}
\begin{subtable}{\linewidth}
\centering
\begin{adjustbox}{max width=\linewidth}
\begin{tabular}{l|c|cccccc|ccc}
\toprule
$\Winit= \Wpt$ & $\Wrt$ & NegGrad+ & Rand.\ Label & SalUn & NegTV & NegMerge & Kodge et al. & Unmerge-PCA & \shortstack[c]{Unmerge-CAV\\KMeans} & \shortstack[c]{Unmerge-CAV\\Class} \\
\midrule
Forget Acc $\downarrow$ & 0.0 & 4.42 & 15.12 & 14.70 & 1.30 & 2.04 & 23.68 & 1.84 & 0.00 & 0.00 \\
Retain Acc $\uparrow$ & 99.98 & 94.33 & 99.98 & 99.98 & 87.49 & 90.45 & 98.83 & 99.04 & 99.96 & 99.95 \\
Test Acc $\uparrow$ & 73.62 & 65.38 & 74.17 & 73.91 & 60.69 & 62.65 & 69.68 & 71.44 & 73.20 & 72.72 \\
ToW $\uparrow$ & 1 & 0.8275 & 0.8441 & 0.8505 & 0.7520 & 0.7890 & 0.7247 & 0.9512 & \best{0.9956} & \textbf{0.9907} \\
Runtime $\downarrow$ &  & 94s & 671s & 674s & \best{18s} & 53s & 22s & 71s & 104s & 100s \\
\bottomrule
\end{tabular}
\end{adjustbox}
\end{subtable}
\begin{subtable}{\linewidth}
\centering
\begin{adjustbox}{max width=\linewidth}
\begin{tabular}{l|c|cccccc|ccc}
\toprule
$\Winit=\Wzero$ & $\Wrt$ & NegGrad+ & Rand.\ Label & SalUn & NegTV & NegMerge & Kodge et al. & Unmerge-PCA & \shortstack[c]{Unmerge-CAV\\KMeans} & \shortstack[c]{Unmerge-CAV\\Class} \\
\midrule
Forget Acc $\downarrow$ & 0.0 & 0.70 & 10.02 & 9.56 & 23.22 & 25.86 & 12.42 & 4.12 & 9.30 & 0.12 \\
Retain Acc $\uparrow$ & 99.98 & 80.14 & 99.98 & 99.97 & 96.34 & 96.42 & 85.35 & 94.91 & 98.85 & 98.18 \\
Test Acc $\uparrow$ & 61.86 & 44.78 & 60.97 & 60.78 & 53.28 & 53.75 & 47.46 & 52.26 & 56.51 & 55.76 \\
ToW $\uparrow$ & 1 & 0.6600 & 0.8918 & \textbf{0.8946} & 0.6764 & 0.6570 & 0.6400 & 0.8228 & 0.8488 & \best{0.9210} \\
Runtime $\downarrow$ &  & 93s & 666s & 673s & \best{18s} & 53s & 22s & 38s & 85s & 61s \\
\bottomrule
\end{tabular}
\end{adjustbox}
\end{subtable}
\end{table}

\subsection{Seed Variance}
\label{sec:seeds}
\vspace{-5pt}
Tab.~\ref{tab:seeds} repeats the Trees experiments for three seeds with frozen configurations. The seed controls model initialization only: the data order and augmentation streams are fixed across seeds. In the $\Wzero$ regime it therefore changes the initialization of the finetuned and the retrained model; in the $\Wpt$ regime retraining from the fixed checkpoint is deterministic given the data order, so all seeds share one retrained reference and the variance reflects finetuning and method stochasticity ($\Winit$ is held fixed across seeds, including its seeded stem and classifier entries). Every ordering between the \textsc{Unmerge} variants and SalUn/Random Label in Tab.~\ref{tab:cifar100} is preserved in the means; NegGrad+ and Random Label swap on $\Wzero$ within noise. On $\Wzero$ the \textsc{Unmerge} standard deviations ($0.008$--$0.021$) are $1.7$--$4.5\times$ smaller than those of Random Label and SalUn ($0.036$--$0.037$), and \textsc{Unmerge-PCA}'s mean ($0.937$) sits above SalUn's ($0.926$) although the single-seed cells are close. On $\Wpt$ SalUn is the most stable method ($0.003$) but trails \textsc{Unmerge-PCA} and \textsc{CAV-KMeans} by $1.0$--$1.3$ points. The projection of \citet{kodge2024deep} is on par with \textsc{Unmerge-PCA} on $\Wpt$ ($0.974\!\pm\!0.005$ vs.\ $0.972\!\pm\!0.015$) and below the CAV variants on $\Wzero$ ($0.944\!\pm\!0.016$ vs.\ $0.969$--$0.976$); the task-vector baselines stay at $0.86$--$0.90$.

\begin{table}[t]
\centering
\caption{ToW on CIFAR-100 (Trees) over three seeds (seeds $1$--$3$ control the finetuning run, the from-scratch retraining on $\Wzero$, and the unlearning method; $\Winit$ and the data order are fixed), mean $\pm$ sample std. Seed~1 is the cell reported in Tab.~\ref{tab:cifar100}. Best per column in \bestlegend{bold}.}
\label{tab:seeds}
\vspace{-5pt}
\begin{adjustbox}{max width=0.8\linewidth}
\begin{tabular}{l|cc}
\toprule
Method & $\Winit=\Wpt$ & $\Winit=\Wzero$ \\
\midrule
NegGrad+ & 0.9142\,$\pm$\,0.0099 & 0.8280\,$\pm$\,0.0237 \\
Rand.\ Label & 0.8803\,$\pm$\,0.0179 & 0.8391\,$\pm$\,0.0374 \\
SalUn & 0.9587\,$\pm$\,0.0025 & 0.9263\,$\pm$\,0.0357 \\
NegTV & 0.8947\,$\pm$\,0.0054 & 0.8571\,$\pm$\,0.0220 \\
NegMerge & 0.9010\,$\pm$\,0.0097 & 0.8633\,$\pm$\,0.0237 \\
Kodge et al. & \best{0.9744\,$\pm$\,0.0046} & 0.9439\,$\pm$\,0.0156 \\
\hline
Unmerge-PCA & \textbf{0.9716\,$\pm$\,0.0146} & 0.9370\,$\pm$\,0.0208 \\
Unmerge-CAV-KMeans & 0.9688\,$\pm$\,0.0103 & \textbf{0.9688\,$\pm$\,0.0127} \\
Unmerge-CAV-Class & 0.9408\,$\pm$\,0.0102 & \best{0.9756\,$\pm$\,0.0083} \\
\bottomrule
\end{tabular}
\end{adjustbox}
\end{table}

\subsection{Unlearning with ViT}
\label{sec:vit}
\vspace{-5pt}
We evaluate \textsc{Unmerge} on ViT-S/16~\cite{dosovitskiy2021image} (ImageNet-1k weights, $224\times224$ inputs) finetuned on CIFAR-100 (test $90.2\%$) with the Trees superclass as forget set, in the $\Wpt$ regime. The retrained reference uses the same schedule and reaches $86.1\%$ test accuracy, so \eqref{eqn:tow} is well calibrated. The per-layer pipeline (Alg.~\ref{alg:unmerge}) is applied to the linear projections of the attention and MLP blocks; the \texttt{skip-layers} recipe of \S\ref{sec:perlayer} carries over to the transformer's stack of 49 linear layers and, as in ResNet-50, deep-block editing wins: the best cells skip the first 32 (PCA) and 40 (CAV-KMeans) linear layers and edit only the last blocks and the head. Tab.~\ref{tab:vit} reports all three axes of \S\ref{sec:exp}.

\textbf{Result.} \textsc{Unmerge-CAV-KMeans} leads every baseline on every axis simultaneously: ToW $0.9975$ vs.\ $0.9949$ (SalUn) and $0.9925$ (Random Label), at $34$s vs.\ $\sim$$474$s ($14\times$), with the MIA gap to retraining $1.7$--$1.8\times$ smaller ($0.061$ vs.\ $0.105$--$0.109$) and the feature-entanglement gap $6.5$--$7.4\times$ smaller ($0.101$ vs.\ $0.653$--$0.743$). \textsc{Unmerge-PCA} has the smallest MIA gap ($0.056$) and runs in $21$s; NegGrad+ under-forgets ($20\%$ forget accuracy) and damages retain. The retrained reference is strong (test $86.1\%$), and the deep-layer locality finding of \S\ref{sec:perlayer} transfers to attention-based backbones.

\textbf{Bias rule.} ViT-S/16 has a bias in every edited layer, so the bias treatment of App.~\ref{sec:algo-supp} (implementation note iv) could matter here. It does not: with biases left at their finetuned values the two cells move from $0.9975$ to $0.9977$ (\textsc{CAV-KMeans}) and from $0.9810$ to $0.9792$ (\textsc{PCA}), and the $\alpha$ profiles are unchanged to within $0.002$.

\begin{table}[t]
\centering
\caption{ViT-S/16 on CIFAR-100 (Trees), $\Winit=\Wpt$: accuracies (\%), ToW (\eqref{eqn:tow}), runtime, mean MIA gap to retraining over 9 attacks (\S\ref{sec:mia}), and mean relative feature-entanglement gap (\S\ref{sec:entangle}). Best non-retrained in \bestlegend{bold}. NegGrad+ was not evaluated on the MIA/entanglement axes.}
\label{tab:vit}
\vspace{-5pt}
\begin{adjustbox}{max width=\linewidth}
\begin{tabular}{l|c|ccc|cc}
\toprule
& $\Wrt$ & NegGrad+ & Rand.\ Label & SalUn & Unmerge-PCA & \shortstack[c]{Unmerge-CAV\\KMeans} \\
\midrule
Forget Acc $\downarrow$ & 0.00 & 20.00 & 0.12 & 0.24 & 0.92 & 0.16 \\
Retain Acc $\uparrow$ & 99.97 & 86.03 & 99.98 & 99.98 & 99.64 & 99.94 \\
Test Acc $\uparrow$ & 86.11 & 73.43 & 85.49 & 85.85 & 85.44 & 86.05 \\
ToW $\uparrow$ & 1 & 0.6012 & 0.9925 & \textbf{0.9949} & 0.9810 & \best{0.9975} \\
Runtime $\downarrow$ & & 88s & 472s & 474s & \best{21s} & 34s \\
MIA gap $\downarrow$ & 0 & -- & 0.105 & 0.109 & \best{0.056} & \textbf{0.061} \\
Entanglement gap $\downarrow$ & 0 & -- & 0.653 & 0.743 & \textbf{0.171} & \best{0.101} \\
\bottomrule
\end{tabular}
\end{adjustbox}
\end{table}

\subsection{Preliminary Study on a Language Model}
\label{sec:llm}
\vspace{-5pt}
We test whether unmerging carries over to language models on TOFU~\cite{maini2024tofu} (\texttt{forget10}: the $400$ question--answer pairs of $20$ fictitious authors; $3600$ retain pairs) with Llama-3.2-3B-Instruct~\cite{grattafiori2024llama3}, using the checkpoints and the evaluation harness of OpenUnlearning~\cite{dorna2025openunlearning}: $\Winit$ is the public instruction-tuned model, $\Wft$ the released model finetuned on all of TOFU, and $\Wrt$ the released model finetuned on the retain split, so nothing is trained. We apply the per-layer objective to the linear projections of the last $12$--$16$ of $28$ decoder layers, with activation statistics taken at question and answer tokens, and use the closed-form solver of \eqref{eq:closed} (the entries of $\taum\QF$ are about two orders of magnitude smaller here than in the vision models, so the fixed-budget iterate would simply converge to the minimizer). This is a preliminary exploration: each variant receives only light tuning of $\alpha$, the rank and the edited layers on a proxy that needs no generation (TOFU's normalized answer probability on $\setF$ and on held-out retain pairs, relative to $\Wrt$), and the baselines run at OpenUnlearning's default settings.

\textbf{Results.} With a retain-deflated PCA forget basis---the forget covariance after projecting out the top $1024$ retain directions, i.e.\ the hard orthogonal projection that our vision runs leave switched off---and $\lambda{=}0$, $k{=}64$, $\alpha{=}64$, the edit is selective (Tab.~\ref{tab:tofu}). Under the official evaluation its forget quality is $0.131$, i.e.\ the Kolmogorov--Smirnov test no longer separates the edited model from $\Wrt$ at the $5\%$ level, at a model utility of $0.630$ with the MLP projections of the last $16$ layers edited ($\Wrt$: $0.650$). RMU~\cite{li2024wmdp}, the strongest of three baselines, reaches $0.094$ and $0.633$ (GradDiff: $1.6\!\times\!10^{-27}$ and $0.618$; NPO: $1.8\!\times\!10^{-14}$ and $0.601$): \textsc{Unmerge} attains a higher forget quality at a comparable utility ($0.003$ lower) and lands closer to $\Wrt$ on the forget set, where RMU over-forgets (answer probability $0.04$ against $0.12$ for $\Wrt$ and $0.20$ for \textsc{Unmerge}). The edit needs one pass for activation statistics ($43$s) and no training, whereas the baselines train for $70$--$290$s. A training-free, closed-form edit is thus already competitive with a dedicated language-model unlearning method, which makes scaling \textsc{Unmerge} a promising extension.

\textbf{What changes from vision.} Two observations guide that extension. \emph{The basis matters more.} With the plain PCA forget basis the forget and retain bases overlap almost completely (mean per-direction overlap $0.87$--$0.91$) and the edit lowers the answer probability on $\setF$ and on $\setR$ in lockstep, the signature of the random subsets of \S\ref{sec:instance}; deflating the retain directions restores selectivity. The calibrated separability ratio reads $2.8$--$3.4$ even for the plain basis, so on this model the small absolute score ($S{=}0.03$--$0.05$) is the more informative signal. \emph{The CAV bases trail.} Token-level ridge probes, one per forget author (\textsc{CAV-Class}) or per $k$-means cluster of forget tokens (\textsc{CAV-KMeans}), are clean (overlap with the retain basis below $0.13$) but capture only $1$--$8\%$ of the forget-token variance, so retain degrades before forgetting completes (best proxy score $0.55$--$0.59$ against $0.85$ for the deflated PCA basis and $0.18$ for the un-edited model). Concept probes better matched to language, e.g.\ on subject tokens, are a natural next step.

\textbf{Scope.} The study uses one split, one model and one seed. \textsc{Unmerge} is tuned lightly on a proxy that uses $\Wrt$, whereas the baselines are untuned. The scale $\alpha{=}64$ is far above the vision range: the deflated directions carry little of $\taum$, so the edit extrapolates along forget-private input directions rather than removing a finetuning delta.

\begin{table}[t]
\centering
\caption{TOFU \texttt{forget10} with Llama-3.2-3B-Instruct under the official OpenUnlearning evaluation. Forget quality is the $p$-value of a two-sample test against $\Wrt$ (higher is better; $1$ for $\Wrt$ by definition). \textsc{Unmerge}: closed form, retain-deflated PCA forget basis, MLP projections of the last $16$ layers, lightly tuned. RMU: OpenUnlearning default settings.}
\label{tab:tofu}
\vspace{-5pt}
\begin{adjustbox}{max width=0.85\linewidth}
\begin{tabular}{l|cccc}
\toprule
Model & Forget quality $\uparrow$ & Model utility $\uparrow$ & Forget answer prob. & Forget ROUGE \\
\midrule
$\Wft$ (finetuned) & $3.6\!\times\!10^{-27}$ & 0.666 & 0.951 & 0.926 \\
$\Wrt$ (retain-only) & 1 & 0.650 & 0.124 & 0.386 \\
RMU & 0.094 & \textbf{0.633} & 0.037 & 0.225 \\
\textsc{Unmerge} & \textbf{0.131} & 0.630 & 0.200 & 0.364 \\
\bottomrule
\end{tabular}
\end{adjustbox}
\end{table}

\subsection{Feature-Level Entanglement on TinyImageNet}
\label{sec:entangle-tin}
\vspace{-5pt}
The CIFAR-100 picture in Tab.~\ref{tab:entangle-feat} is sharpened on TinyImageNet (Tab.~\ref{tab:entangle-feat-tin}), measured on the Dogs \& Cats split with the same penultimate-feature protocol and same-size retain subsample. \textsc{Unmerge-CAV-Class}/$\Wzero$ matches the retrained reference to within $\sim$\emph{6\%} relative gap (mean $0.062$, with MMD$^2$ at $0.000$ indistinguishable from retrained), while Random Label and SalUn deviate by $3.97$--$3.99\times$, roughly \emph{64$\times$} the unmerge gap. The projection of \citet{kodge2024deep} posts a smaller mean gap on $\Wzero$ ($0.036$), but the two cells are close on this axis: both match the retrained distance to within $2.4\%$ on the two MMD$^2$ variants and deviate by the same $13\%$ on W$_2^2$-Bures (in opposite directions), and the difference in the mean comes from Sliced-W$_2^2$, where \textsc{CAV-Class} lands $9\%$ below the retrained distance (forget and retain features slightly more mixed than under retraining) and the projection $0.3\%$ above. What distinguishes them is how they get there: the projection reaches the retrained feature distance while still classifying $18\%$ of the forget set correctly and losing $7$ test points (Tab.~\ref{tab:tinyimgnt}), i.e.\ by degrading features overall, which ToW and the MIA gap ($0.157$ vs.\ $0.058$) expose; the entanglement axis alone cannot. On $\Wpt$ the spread compresses but Unmerge still leads: \textsc{CAV-Class} ($0.082$) is $\sim$$6\times$ tighter than NegGrad+, $\sim$$50\times$ tighter than RL/SalUn, and slightly tighter than the projection ($0.099$). The W$_2^2$-Bures column is mean-shift dominated in the $D{\gg}N$ regime; the MMD$^2$ and Sliced-W$_2^2$ metrics, sensitive to higher moments and full distribution shape, rank the best \textsc{Unmerge} variant ahead of every gradient and task-vector baseline on both initializations.

\begin{table}[t]
\centering
\caption{Feature-level entanglement on TinyImageNet (Dogs \& Cats), same protocol as Tab.~\ref{tab:entangle-feat}.}
\label{tab:entangle-feat-tin}
\begin{subtable}{\linewidth}
\centering
\label{tab:entangle-feat-tin-wpt}
\vspace{-5pt}
\begin{adjustbox}{max width=\linewidth}
\begin{tabular}{l|c|cccccc|ccc}
\toprule
$\Winit=\Wpt$ & $\Wrt$ & NegGrad+ & Rand.\ Label & SalUn & NegTV & NegMerge & Kodge et al. & Unmerge-PCA & \shortstack[c]{Unmerge-CAV\\KMeans} & \shortstack[c]{Unmerge-CAV\\Class} \\
\midrule
MMD$^2$ & 0.000 & 0.514 & 4.531 & 4.368 & 0.508 & 0.542 & 0.103 & 0.178 & 0.090 & \best{0.058} \\
MMD$^2$-multi & 0.000 & 0.545 & 5.373 & 5.179 & 0.562 & 0.579 & 0.188 & 0.076 & \best{0.070} & 0.074 \\
W$_2^2$-Bures & 0.000 & 0.419 & 2.174 & 2.084 & 0.477 & 0.453 & \best{0.031} & 0.756 & 0.176 & 0.064 \\
Sliced-W$_2^2$ & 0.000 & 0.545 & 4.668 & 4.458 & 0.562 & 0.564 & \best{0.074} & 0.221 & 0.083 & 0.132 \\
\midrule
Mean rel.\ gap $\downarrow$ & 0.000 & 0.506 & 4.187 & 4.022 & 0.527 & 0.534 & 0.099 & 0.308 & 0.105 & \best{0.082} \\
\bottomrule
\end{tabular}
\end{adjustbox}
\end{subtable}
\begin{subtable}{\linewidth}
\centering
\label{tab:entangle-feat-tin-w0}
\begin{adjustbox}{max width=\linewidth}
\begin{tabular}{l|c|cccccc|ccc}
\toprule
$\Winit=\Wzero$ & $\Wrt$ & NegGrad+ & Rand.\ Label & SalUn & NegTV & NegMerge & Kodge et al. & Unmerge-PCA & \shortstack[c]{Unmerge-CAV\\KMeans} & \shortstack[c]{Unmerge-CAV\\Class} \\
\midrule
MMD$^2$ & 0.000 & 0.439 & 4.187 & 4.186 & 0.352 & 0.342 & 0.012 & 0.098 & 0.268 & \best{0.000} \\
MMD$^2$-multi & 0.000 & 0.469 & 4.968 & 4.978 & 0.371 & 0.359 & \best{0.003} & 0.209 & 0.246 & 0.024 \\
W$_2^2$-Bures & 0.000 & 0.331 & 2.302 & 2.317 & 0.136 & 0.123 & 0.125 & 0.490 & \best{0.095} & 0.129 \\
Sliced-W$_2^2$ & 0.000 & 0.474 & 4.441 & 4.469 & 0.315 & 0.301 & \best{0.003} & 0.433 & 0.194 & 0.093 \\
\midrule
Mean rel.\ gap $\downarrow$ & 0.000 & 0.428 & 3.974 & 3.988 & 0.293 & 0.281 & \best{0.036} & 0.308 & 0.201 & 0.062 \\
\bottomrule
\end{tabular}
\end{adjustbox}
\end{subtable}
\end{table}

\subsection{Membership Inference: Detailed Results}
\label{sec:mia-supp}
\vspace{-5pt}
We provide per-attack MIA tables for both datasets, expanding the summary in \S\ref{sec:mia}. Both protocols are identical (RUM/SalUn-style shadow/target split, nine attack signals, gap-to-retrained metric). Three details matter when reading the numbers. The attack accuracy on $\setF$ averages the attacker's true-positive rate on retain samples and its true-negative rate on forget samples, so damage to retain accuracy also registers as a privacy gap. The SVC attackers are trained on an unbalanced shadow set ($\sim$$83\%$ members) without class weighting, and the correctness signal appears twice (threshold and SVC give the same attacker). Finally, the class-level tables report the mean over signals of the \emph{absolute} gap, whereas the instance-level table (Tab.~\ref{tab:instance}) reports the \emph{signed} gap of the signal-averaged attack accuracy, in which per-signal errors of opposite sign can cancel; App.~\ref{sec:instance-supp} gives both aggregates for the coherent protocol.

\textbf{CIFAR-100 (Trees).} Tab.~\ref{tab:mia-cifar100} reports per-attack gaps. On $\Wzero$ the projection of \citet{kodge2024deep} has the smallest mean gap ($0.017$), followed by \textsc{Unmerge-CAV-KMeans} ($0.026$) and \textsc{Unmerge-CAV-Class} ($0.029$), $1.6$--$1.8\times$ tighter than SalUn ($0.047$), $2.1$--$4.2\times$ tighter than Random Label ($0.062$) and NegGrad+ ($0.109$), and $2.8$--$3.5\times$ tighter than NegTV and NegMerge ($0.082$--$0.092$); the SVC-probs and SVC-entropy attacks (full softmax-vector attackers, the strongest in our set) are where the gap closes most, with the CAV variants cutting both by $1.5$--$2.3\times$ over the saliency baselines. On $\Wpt$ the projection leads again ($0.028$) and the spread among the other methods is tight: NegGrad+ $0.067$, \textsc{Unmerge-CAV-Class} $0.070$, NegMerge $0.072$, NegTV $0.073$, \textsc{Unmerge-CAV-KMeans} $0.075$, \textsc{Unmerge-PCA} $0.078$, SalUn $0.082$, Random Label $0.093$. NegGrad+'s small edge over \textsc{Unmerge} comes at a $2.8$-point retain and $6.2$-point test drop (Tab.~\ref{tab:cifar100}), so the joint utility$\,+\,$privacy ranking still favors \textsc{Unmerge}.

\begin{table}[tb]
\centering
\caption{MIA gap to the retrained reference on CIFAR-100 (Trees): $|\mathrm{att}_{\setF}^{\mathrm{unl}}-\mathrm{att}_{\setF}^{\mathrm{RT}}|$ per attack signal (lower is better; $\Wrt$ is 0 by construction).}
\label{tab:mia-cifar100}
\begin{subtable}{\linewidth}
\centering
\label{tab:mia-cifar100-wpt}
\vspace{-5pt}
\begin{adjustbox}{max width=\linewidth}
\begin{tabular}{l|c|cccccc|ccc}
\toprule
$\Winit=\Wpt$ & $\Wrt$ & NegGrad+ & Rand.\ Label & SalUn & NegTV & NegMerge & Kodge et al. & Unmerge-PCA & \shortstack[c]{Unmerge-CAV\\KMeans} & \shortstack[c]{Unmerge-CAV\\Class} \\
\midrule
Threshold-Conf. & 0.000 & 0.034 & 0.017 & \best{0.011} & 0.036 & 0.035 & 0.015 & 0.024 & 0.028 & 0.033 \\
Threshold-Ent. & 0.000 & 0.033 & 0.171 & 0.173 & \best{0.027} & 0.048 & 0.104 & 0.160 & 0.154 & 0.144 \\
Threshold-M-Ent. & 0.000 & 0.035 & 0.021 & \best{0.007} & 0.036 & 0.035 & 0.018 & 0.024 & 0.027 & 0.040 \\
Threshold-Corr. & 0.000 & 0.016 & 0.061 & 0.022 & 0.025 & 0.025 & 0.002 & \best{0.001} & 0.003 & 0.007 \\
SVC-Conf. & 0.000 & 0.009 & \best{0.002} & 0.003 & 0.020 & 0.021 & 0.004 & 0.003 & \best{0.002} & 0.006 \\
SVC-Ent. & 0.000 & 0.211 & 0.209 & 0.207 & 0.261 & 0.242 & \best{0.065} & 0.197 & 0.194 & 0.154 \\
SVC-M-Ent. & 0.000 & 0.049 & \best{0.001} & 0.002 & 0.024 & 0.027 & 0.003 & 0.002 & 0.003 & 0.006 \\
SVC-Corr. & 0.000 & 0.015 & 0.061 & 0.022 & 0.023 & 0.025 & 0.002 & \best{0.001} & 0.003 & 0.007 \\
SVC-Probs. & 0.000 & 0.203 & 0.291 & 0.290 & 0.201 & 0.190 & \best{0.038} & 0.288 & 0.258 & 0.233 \\
\midrule
Mean$|$gap$|$ $\downarrow$ & 0.000 & 0.067 & 0.093 & 0.082 & 0.073 & 0.072 & \best{0.028} & 0.078 & 0.075 & 0.070 \\
\bottomrule
\end{tabular}
\end{adjustbox}
\end{subtable}
\begin{subtable}{\linewidth}
\centering
\label{tab:mia-cifar100-w0}
\begin{adjustbox}{max width=\linewidth}
\begin{tabular}{l|c|cccccc|ccc}
\toprule
$\Winit=\Wzero$ & $\Wrt$ & NegGrad+ & Rand.\ Label & SalUn & NegTV & NegMerge & Kodge et al. & Unmerge-PCA & \shortstack[c]{Unmerge-CAV\\KMeans} & \shortstack[c]{Unmerge-CAV\\Class} \\
\midrule
Threshold-Conf. & 0.000 & 0.037 & 0.010 & 0.003 & 0.022 & 0.026 & 0.007 & 0.008 & \best{0.001} & \best{0.001} \\
Threshold-Ent. & 0.000 & 0.103 & 0.063 & 0.065 & 0.072 & 0.057 & \best{0.017} & 0.040 & 0.046 & 0.054 \\
Threshold-M-Ent. & 0.000 & 0.040 & 0.007 & 0.002 & 0.025 & 0.032 & 0.004 & 0.005 & 0.006 & \best{0.001} \\
Threshold-Corr. & 0.000 & 0.042 & 0.077 & 0.016 & 0.029 & 0.027 & 0.005 & 0.006 & \best{0.004} & 0.006 \\
SVC-Conf. & 0.000 & 0.041 & \best{0.000} & \best{0.000} & 0.019 & 0.021 & 0.003 & 0.003 & 0.003 & 0.002 \\
SVC-Ent. & 0.000 & 0.281 & 0.143 & 0.139 & 0.295 & 0.231 & \best{0.017} & 0.354 & 0.084 & 0.093 \\
SVC-M-Ent. & 0.000 & 0.079 & \best{0.000} & 0.001 & 0.023 & 0.025 & 0.004 & 0.005 & 0.002 & 0.001 \\
SVC-Corr. & 0.000 & 0.041 & 0.077 & 0.016 & 0.030 & 0.027 & 0.005 & 0.006 & \best{0.004} & 0.006 \\
SVC-Probs. & 0.000 & 0.314 & 0.184 & 0.183 & 0.309 & 0.294 & 0.094 & 0.316 & \best{0.080} & 0.094 \\
\midrule
Mean$|$gap$|$ $\downarrow$ & 0.000 & 0.109 & 0.062 & 0.047 & 0.092 & 0.082 & \best{0.017} & 0.083 & 0.026 & 0.029 \\
\bottomrule
\end{tabular}
\end{adjustbox}
\end{subtable}
\end{table}

\textbf{TinyImageNet (Dogs \& Cats).} On $\Wpt$, all three \textsc{Unmerge} variants sweep the table (Tab.~\ref{tab:mia-tin}): \textsc{Unmerge-PCA} ($0.009$) and \textsc{Unmerge-CAV-Class}/\textsc{Unmerge-CAV-KMeans} ($0.018$/$0.020$) are $2$--$5\times$ tighter than the saliency baselines (Random Label/SalUn $\sim$$0.043$), $9$--$20\times$ tighter than NegGrad+ ($0.177$), and $13$--$29\times$ tighter than the task-vector baselines and the projection of \citet{kodge2024deep} ($0.254$--$0.261$). On $\Wzero$, Random Label ($0.034$) and SalUn ($0.040$) appear to lead the CAV variants ($0.058$). This is an artifact of \emph{under-unlearning}: the baseline configurations used on this split (Random Label lr$=$$10^{-3}$, SalUn $\tau{=}0.5$/lr$=$$10^{-3}$) leave about $9\%$ forget accuracy (Tab.~\ref{tab:tinyimgnt}), so their predictions on $\setF$ have barely moved from the finetuned model and the MIA gap closes by inaction rather than by erasure. The entanglement axis exposes it: the same checkpoints deviate from the retrained reference by $\sim$$4\times$ its value, while \textsc{Unmerge-CAV-Class}/$\Wzero$ matches retraining to within $6\%$ (Tab.~\ref{tab:entangle-feat-tin}). Neither axis suffices alone: the MIA gap can be closed by leaving the model unchanged, and the feature distance can be matched by degrading the features overall, as the projection does on this split (App.~\ref{sec:entangle-tin}). Read together with accuracy, the two axes single out the methods that forget without either failure.

\begin{table}[t]
\centering
\caption{MIA gap to the retrained reference on TinyImageNet (Dogs \& Cats), same protocol as Tab.~\ref{tab:mia-cifar100}.}
\label{tab:mia-tin}
\vspace{-5pt}
\begin{subtable}{\linewidth}
\centering
\label{tab:mia-tin-wpt}
\begin{adjustbox}{max width=\linewidth}
\begin{tabular}{l|c|cccccc|ccc}
\toprule
$\Winit=\Wpt$ & $\Wrt$ & NegGrad+ & Rand.\ Label & SalUn & NegTV & NegMerge & Kodge et al. & Unmerge-PCA & \shortstack[c]{Unmerge-CAV\\KMeans} & \shortstack[c]{Unmerge-CAV\\Class} \\
\midrule
Threshold-Conf. & 0.000 & 0.044 & \best{0.007} & 0.009 & 0.051 & 0.044 & 0.046 & 0.016 & 0.008 & 0.012 \\
Threshold-Ent. & 0.000 & 0.108 & 0.012 & 0.011 & 0.113 & 0.095 & 0.076 & \best{0.005} & 0.012 & 0.006 \\
Threshold-M-Ent. & 0.000 & 0.048 & \best{0.006} & 0.011 & 0.051 & 0.048 & 0.048 & 0.018 & 0.008 & 0.009 \\
Threshold-Corr. & 0.000 & 0.025 & 0.081 & 0.077 & 0.052 & 0.046 & 0.042 & 0.007 & \best{0.000} & \best{0.000} \\
SVC-Conf. & 0.000 & 0.028 & \best{0.001} & \best{0.001} & 0.500 & 0.500 & 0.500 & 0.002 & \best{0.001} & \best{0.001} \\
SVC-Ent. & 0.000 & 0.431 & 0.066 & 0.066 & 0.431 & 0.431 & 0.413 & \best{0.021} & 0.055 & 0.043 \\
SVC-M-Ent. & 0.000 & 0.048 & \best{0.000} & \best{0.000} & 0.287 & 0.278 & 0.301 & 0.002 & \best{0.000} & 0.001 \\
SVC-Corr. & 0.000 & 0.500 & 0.081 & 0.077 & 0.500 & 0.500 & 0.500 & 0.007 & \best{0.000} & \best{0.000} \\
SVC-Probs. & 0.000 & 0.360 & 0.137 & 0.137 & 0.363 & 0.362 & 0.362 & \best{0.007} & 0.099 & 0.092 \\
\midrule
Mean$|$gap$|$ $\downarrow$ & 0.000 & 0.177 & 0.044 & 0.043 & 0.261 & 0.256 & 0.254 & \best{0.009} & 0.020 & 0.018 \\
\bottomrule
\end{tabular}
\end{adjustbox}
\end{subtable}
\begin{subtable}{\linewidth}
\centering
\label{tab:mia-tin-w0}
\begin{adjustbox}{max width=\linewidth}
\begin{tabular}{l|c|cccccc|ccc}
\toprule
$\Winit=\Wzero$ & $\Wrt$ & NegGrad+ & Rand.\ Label & SalUn & NegTV & NegMerge & Kodge et al. & Unmerge-PCA & \shortstack[c]{Unmerge-CAV\\KMeans} & \shortstack[c]{Unmerge-CAV\\Class} \\
\midrule
Threshold-Conf. & 0.000 & 0.064 & \best{0.005} & 0.009 & 0.064 & 0.063 & 0.063 & 0.047 & 0.028 & 0.025 \\
Threshold-Ent. & 0.000 & 0.171 & 0.018 & \best{0.017} & 0.092 & 0.089 & 0.061 & 0.107 & 0.039 & 0.040 \\
Threshold-M-Ent. & 0.000 & 0.067 & \best{0.003} & 0.008 & 0.065 & 0.064 & 0.065 & 0.052 & 0.029 & 0.025 \\
Threshold-Corr. & 0.000 & 0.072 & 0.047 & 0.045 & 0.076 & 0.076 & 0.107 & 0.104 & 0.025 & \best{0.013} \\
SVC-Conf. & 0.000 & 0.500 & \best{0.001} & \best{0.001} & 0.111 & 0.117 & 0.170 & 0.500 & 0.028 & 0.006 \\
SVC-Ent. & 0.000 & 0.435 & \best{0.083} & 0.137 & 0.435 & 0.435 & 0.304 & 0.435 & 0.128 & 0.171 \\
SVC-M-Ent. & 0.000 & 0.137 & \best{0.001} & 0.002 & 0.214 & 0.200 & 0.191 & 0.311 & 0.052 & 0.010 \\
SVC-Corr. & 0.000 & 0.500 & 0.047 & 0.045 & 0.500 & 0.500 & 0.106 & 0.500 & 0.024 & \best{0.011} \\
SVC-Probs. & 0.000 & 0.397 & \best{0.101} & \best{0.101} & 0.383 & 0.380 & 0.349 & 0.397 & 0.172 & 0.220 \\
\midrule
Mean$|$gap$|$ $\downarrow$ & 0.000 & 0.260 & \best{0.034} & 0.040 & 0.215 & 0.214 & 0.157 & 0.272 & 0.058 & 0.058 \\
\bottomrule
\end{tabular}
\end{adjustbox}
\end{subtable}
\end{table}

\subsection{Instance-Level Forgetting: Details}
\label{sec:instance-supp}
\vspace{-5pt}
\textbf{Protocols and references.} Coherent: a fixed random half of the $2500$ Trees training images forms $\setF$ ($|\setF|{=}1250$; the draw is not stratified, giving $237$--$262$ images per class) and the other half stays in $\setR$. Random: $2500$ training images drawn uniformly from all 100 classes. Each protocol and initialization has its own retrained reference (coherent: $\Wzero$ F/R/T $70.64$/$99.96$/$76.27$, $\Wpt$ $74.24$/$99.98$/$84.93$; random: $\Wzero$ $74.92$/$99.97$/$74.73$, $\Wpt$ $84.56$/$99.98$/$84.02$). Baselines use the class-level configurations of \S\ref{sec:setup}, except NegGrad+ on the coherent protocol: its class-level setting over-forgets there (gradient ascent on half a class erases the class; forget accuracy $26\%$/$11\%$, ToW $0.4970$/$0.3251$), so we tune it on a 40-run grid over its forget/retain weight $\{0.3,\dots,0.99\}$, learning rate $\{10^{-5},\dots,10^{-4}\}$ and epochs $\{5,\dots,12\}$. \textsc{Unmerge} uses, on the random protocol, the protocol grid of App.~\ref{sec:ablation} (\texttt{skip-layers} $\{35,\dots,50\}$, $\lambda\in\{5,20,50\}$, $\gamma\in\{0.25,0.5\}$, PCA $k\in\{64,128\}$, CAV $k{=}16$, fine $\alpha$), and on the coherent protocol its class-level configuration with only $\alpha$ swept (five values per basis). All cells report the best final ToW of their grid, and every reported optimum is interior in the swept quantity.

\begin{table}[t]
\centering
\caption{Instance-level forgetting on CIFAR-100 (Trees), ResNet-50. \emph{Coherent}: half of the training images of each Trees class ($|\setF|{=}1250$); \emph{Random}: $2500$ images drawn uniformly from all 100 classes. Accuracies in \%. $\mathrm{att}_{\setF}$: mean attack accuracy on $\setF$ over the 9 MIA signals of \S\ref{sec:mia}, with the signed gap to $\Wrt$ in parentheses (positive = forget samples more identifiable than under retraining). Ent.: mean relative entanglement gap of Tab.~\ref{tab:entangle-feat}. Best non-retrained ToW and entanglement per protocol in \bestlegend{bold}. $^{\dagger}$On the coherent protocol NegGrad+ is tuned on a 40-run grid (forget/retain weight, learning rate, epochs), because its class-level setting over-forgets there (ToW $0.4970$/$0.3251$); Random Label and SalUn keep their class-level settings, and the \textsc{Unmerge} cells sweep only $\alpha$ (five values per basis). The identical forget/retain accuracies of \textsc{Unmerge-PCA} on $\Wzero$ and \textsc{Unmerge-CAV-KMeans} on $\Wpt$ are genuine (901 of 1250 forget images correct in both; test accuracies and losses differ).}
\label{tab:instance}
\vspace{-5pt}
\begin{subtable}{\linewidth}
\centering
\begin{adjustbox}{max width=\linewidth}
\begin{tabular}{l|ccccc|ccccc}
\toprule
$\Winit=\Wzero$ & \multicolumn{5}{c|}{Coherent ($|\setF|{=}1250$)} & \multicolumn{5}{c}{Random ($|\setF|{=}2500$)} \\
 & Forget & Retain & ToW $\uparrow$ & $\mathrm{att}_{\setF}$ (gap) & Ent.\ $\downarrow$ & Forget & Retain & ToW $\uparrow$ & $\mathrm{att}_{\setF}$ (gap) & Ent.\ $\downarrow$ \\
\midrule
$\Wrt$ & 70.64 & 99.96 & 1 & 0.700 & 0 & 74.92 & 99.97 & 1 & 0.681 & 0 \\
NegGrad+$^{\dagger}$ & 71.60 & 98.62 & 0.9368 & 0.624 ($-$0.076) & 0.46 & 99.92 & 99.97 & 0.7302 & 0.502 ($-$0.180) & 0.62 \\
Rand.\ Label & 88.72 & 99.96 & 0.8137 & 0.839 ($+$0.140) & 0.41 & 69.80 & 99.96 & 0.9405 & 0.872 ($+$0.191) & 21.9 \\
SalUn & 87.44 & 99.93 & 0.8243 & 0.825 ($+$0.125) & 0.29 & 72.20 & 99.89 & \best{0.9595} & 0.839 ($+$0.158) & 15.5 \\
Unmerge-PCA & 72.08 & 99.15 & \best{0.9665} & 0.712 ($+$0.013) & \best{0.15} & 99.64 & 99.77 & 0.7509 & 0.502 ($-$0.180) & 0.67 \\
Unmerge-CAV-KMeans & 73.60 & 99.25 & 0.9534 & 0.691 ($-$0.009) & 0.27 & 97.32 & 98.19 & 0.7515 & 0.506 ($-$0.176) & \best{0.60} \\
\bottomrule
\end{tabular}
\end{adjustbox}
\end{subtable}
\begin{subtable}{\linewidth}
\centering
\begin{adjustbox}{max width=\linewidth}
\begin{tabular}{l|ccccc|ccccc}
\toprule
$\Winit=\Wpt$ & \multicolumn{5}{c|}{Coherent ($|\setF|{=}1250$)} & \multicolumn{5}{c}{Random ($|\setF|{=}2500$)} \\
 & Forget & Retain & ToW $\uparrow$ & $\mathrm{att}_{\setF}$ (gap) & Ent.\ $\downarrow$ & Forget & Retain & ToW $\uparrow$ & $\mathrm{att}_{\setF}$ (gap) & Ent.\ $\downarrow$ \\
\midrule
$\Wrt$ & 74.24 & 99.98 & 1 & 0.686 & 0 & 84.56 & 99.98 & 1 & 0.630 & 0 \\
NegGrad+$^{\dagger}$ & 74.96 & 99.08 & \best{0.9427} & 0.620 ($-$0.066) & 0.48 & 100.0 & 99.97 & 0.8406 & 0.500 ($-$0.130) & 0.55 \\
Rand.\ Label & 85.84 & 99.92 & 0.8639 & 0.852 ($+$0.165) & 0.55 & 69.04 & 99.93 & 0.8328 & 0.879 ($+$0.249) & 81.7 \\
SalUn & 83.12 & 99.92 & 0.8970 & 0.841 ($+$0.155) & 0.41 & 73.68 & 99.88 & \best{0.8788} & 0.850 ($+$0.221) & 50.1 \\
Unmerge-PCA & 67.12 & 99.10 & 0.9002 & 0.642 ($-$0.044) & 0.16 & 87.40 & 90.89 & 0.7804 & 0.519 ($-$0.111) & \best{0.50} \\
Unmerge-CAV-KMeans & 72.08 & 99.15 & 0.9417 & 0.677 ($-$0.010) & \best{0.11} & 98.64 & 99.09 & 0.8423 & 0.505 ($-$0.125) & 0.54 \\
\bottomrule
\end{tabular}
\end{adjustbox}
\end{subtable}
\end{table}

\textbf{Coherent protocol, per group: the aggregate hides a concept-level edit.} The forget half and the retained half of a Trees class are i.i.d.\ draws, so no activation subspace separates them. Tab.~\ref{tab:coherent-breakdown} splits the retain and test accuracies of Tab.~\ref{tab:instance} accordingly. \textsc{Unmerge} matches the retrained accuracy on the forget half by lowering the \emph{whole class}: the retained half falls to the same level ($67$--$73\%$, against $\ge\!99.9\%$ under retraining) and the Trees test accuracy falls $18$--$26$ points below the retrained model's, while the other $95$ classes stay within $2.2$ test points of retraining. The tuned NegGrad+ does the same. The aggregate retain accuracy moves by at most $1.3$ points because the retained half is $2.6\%$ of $\setR$, which is why ToW ranks these cells first. Random Label and SalUn are the mirror image: they keep the retained half ($\ge\!99\%$) and most of the Trees test accuracy, and leave the forget half $9$--$18$ points above the retrained level while marking it (attack accuracy above retraining, Tab.~\ref{tab:instance}). No method reproduces retraining on this protocol; \textsc{Unmerge} treats the request as a partial removal of the concept, which is the behavior its construction implies and the reason we scope it to concept-level requests (\S\ref{sec:instance}).

\begin{table}[t]
\centering
\caption{Coherent protocol, per-group accuracy (\%) of the cells of Tab.~\ref{tab:instance}. Forget half: the $1250$ forget images; retained half: the $1250$ Trees training images that stay in $\setR$; other: the $47{,}500$ retain images of the other $95$ classes; Trees test / other test: the $500$ / $9500$ test images. Retraining keeps the retained half at $\ge\!99.9\%$ while the forget half drops to test level. \textsc{Unmerge} and the tuned NegGrad+ ($^{\dagger}$) lower both halves alike and the Trees test accuracy with them; Random Label and SalUn keep the retained half and under-forget the forget half.}
\label{tab:coherent-breakdown}
\vspace{-5pt}
\begin{adjustbox}{max width=\linewidth}
\begin{tabular}{l|ccc|cc|ccc|cc}
\toprule
 & \multicolumn{5}{c|}{$\Winit=\Wzero$} & \multicolumn{5}{c}{$\Winit=\Wpt$} \\
 & \shortstack[c]{Forget\\half} & \shortstack[c]{Retained\\half} & \shortstack[c]{Other\\retain} & \shortstack[c]{Trees\\test} & \shortstack[c]{Other\\test} & \shortstack[c]{Forget\\half} & \shortstack[c]{Retained\\half} & \shortstack[c]{Other\\retain} & \shortstack[c]{Trees\\test} & \shortstack[c]{Other\\test} \\
\midrule
$\Wrt$ & 70.64 & 99.92 & 99.96 & 66.40 & 76.79 & 74.24 & 100.00 & 99.97 & 74.80 & 85.46 \\
$\Wft$ (un-edited) & 99.92 & 100.00 & 99.96 & 73.80 & 77.69 & 99.92 & 100.00 & 99.97 & 74.00 & 85.27 \\
NegGrad+$^{\dagger}$ & 71.60 & 71.12 & 99.35 & 44.20 & 73.61 & 74.96 & 75.44 & 99.70 & 47.20 & 82.51 \\
Rand.\ Label & 88.72 & 99.92 & 99.96 & 67.40 & 77.44 & 85.84 & 99.76 & 99.93 & 67.80 & 83.49 \\
SalUn & 87.44 & 99.36 & 99.95 & 68.40 & 77.63 & 83.12 & 99.04 & 99.94 & 69.20 & 84.19 \\
Unmerge-PCA & 72.08 & 72.40 & 99.85 & 45.80 & 76.65 & 67.12 & 67.12 & 99.94 & 48.40 & 84.53 \\
Unmerge-CAV-KMeans & 73.60 & 73.36 & 99.93 & 44.20 & 76.85 & 72.08 & 73.20 & 99.83 & 56.80 & 83.32 \\
\bottomrule
\end{tabular}
\end{adjustbox}
\end{table}

\textbf{Random protocol: the lockstep ceiling.} The retrained model itself barely forgets a random subset (forget accuracy $75$/$85\%$, close to its test accuracy), so the untouched $\Wft$ already scores ToW $0.73$/$0.84$, and a 60-configuration sweep confirms that the ceiling is not under-tuning: forget and retain accuracy move together at every $\alpha$, and matching the retrained forget accuracy on $\Wpt$ costs $13$ points of retain. Random Label and SalUn reach ToW up to $0.96$ only by \emph{marking} the forget samples: attack accuracy on $\setF$ rises to $0.84$--$0.88$, above the retrained model's $0.63$--$0.68$, and their forget features become separable from retain features (relative entanglement gap $16$--$82$ against $\le\!0.7$ for \textsc{Unmerge}). The cells in Tab.~\ref{tab:instance} are the ToW maxima over genuine edits (no-op edits, whose forget accuracy stays at the un-edited level, excluded; ties within $0.005$ broken by entanglement, then signed MIA gap).

\textbf{Instance-level and concept-level forgetting are different problems.} \citet{triantafillou2026untraining} distinguish \emph{untraining}---reversing the effect of having trained on the given forget samples, whose reference is the model retrained on $\setS\setminus\setF$---from \emph{unlearning} the distribution or concept that the samples represent, and note that the two coincide when $\setF$ contains every training instance of the concept, which is our class-level protocol. Otherwise they pull in opposite directions: while the concept stays represented in $\setR$, the untraining target still predicts the forget samples correctly and the right edit is close to none, whereas removing the concept calls for a large one; \citet{cooper2025machine} describe the same mismatch between removing observed information and suppressing what a model has generalized. Our random protocol is the extreme case: its forget samples are i.i.d.\ draws from the training distribution, so the retrained forget accuracy matches test accuracy and the un-edited $\Wft$ already scores ToW $0.73$/$0.84$, in line with \citet{zhao2024makes}, who find that for forget sets that are not memorized the original model is already close to retraining. What remains is per-sample memorization, on which an edit built to generalize over what the forget samples share has nothing to act---which the separability score reports beforehand.

\textbf{Separability diagnostics.} The separability score is $S=\mathrm{fvar}\cdot(1-\mathrm{overlap})$, the fraction of forget variance captured by $\QF$ times one minus the mean per-direction overlap of $\QF$ with $\QR$; both factors are available after Phase~1, before any edit. Tab.~\ref{tab:sdiag} reports them. Within a dataset $S$ separates the regimes cleanly ($2.4$--$2.5\times$ between class-level and random CIFAR-100 in both initializations, through different factors: $\Wzero$ loses the private subspace through overlap, $0.61\!\to\!0.83$, and $\Wpt$ through captured variance, $0.63\!\to\!0.28$). Across datasets it does not: TinyImageNet class-level runs measure $S\approx0.07$--$0.10$, the CIFAR-100 ``hard'' range, yet unlearn well, because from-scratch TinyImageNet features are less structured and $S$ is basis-dependent. The ratio $S_{\text{task}}/S_{\text{rand}}$ against a same-size random subset of the same dataset removes both dependencies at the cost of one extra forward pass.

\begin{table}[t]
\centering
\caption{Phase-1 separability diagnostics over edited layers: fraction of forget variance captured by $\QF$ (fvar), mean per-direction overlap between $\QF$ and $\QR$, and $S=\mathrm{fvar}\cdot(1-\mathrm{overlap})$. CIFAR-100: CAV-KMeans basis ($k{=}16$); the coherent row uses the PCA basis of its winning cell. TinyImageNet: bases of the official $\Wzero$ cells (the diagnostic was not logged for the $\Wpt$ TinyImageNet runs). Last column: $S$ relative to the same-dataset random-subset value.}
\label{tab:sdiag}
\vspace{-5pt}
\begin{adjustbox}{max width=0.9\linewidth}
\begin{tabular}{l|l|ccc|c}
\toprule
Dataset & Forget request & fvar & overlap & $S$ & $S/S_{\text{rand}}$ \\
\midrule
CIFAR-100 ($\Wzero$) & Trees, class-level & 0.438 & 0.611 & 0.170 & 2.4 \\
& Trees, coherent half-classes & -- & -- & 0.17 & 2.4 \\
& Random subset ($2500$) & 0.421 & 0.832 & 0.071 & 1 \\
\midrule
CIFAR-100 ($\Wpt$) & Trees, class-level & 0.629 & 0.729 & 0.170 & 2.5 \\
& Random subset ($2500$) & 0.281 & 0.755 & 0.069 & 1 \\
\midrule
TinyImageNet ($\Wzero$) & Dogs \& Cats, CAV-KMeans & 0.252 & 0.628 & 0.094 & -- \\
& Arthropods, CAV-Class & 0.135--0.143 & 0.347--0.460 & 0.073--0.093 & -- \\
& Arthropods, CAV-KMeans & 0.259 & 0.611 & 0.101 & -- \\
& Arthropods, PCA & 0.526 & 0.864 & 0.072 & -- \\
\bottomrule
\end{tabular}
\end{adjustbox}
\end{table}

\subsection{Decomposition Validation: Details}
\label{sec:tauval-supp}
\vspace{-5pt}
Accuracy-based scores are proxies; the quantity the method acts on, $\tauR=\taum-\alpha\tauF$, can be compared with $\tauRstar=\Wrt-\Winit$ directly. For every checkpoint on Trees we compute, over all weight tensors with at least two dimensions ($53$--$54$ layers, not only the edited ones), the relative Frobenius error $\|\tauR-\tauRstar\|_F/\|\tauRstar\|_F$ averaged over layers and the size-weighted cosine between $\tauR$ and $\tauRstar$ (for baselines $\tauR=\Wul-\Winit$), and the predictive divergence $\KL(p_{\text{UL}}\|p_{\text{RT}})$ on the forget, retain, and test splits (Tab.~\ref{tab:tauval}). Two references anchor the weight-space columns: the un-edited $\Wft$, and the retrain-to-retrain noise floor obtained by retraining twice more from the same $\Winit$ with different data orders. On $\Wzero$ every checkpoint, including independent retrains, has cosine $\approx\!0.005$ and relative error $1.5$--$1.7$ to the reference: from-scratch runs converge to unrelated basins, so only the output-space columns are meaningful there. On $\Wpt$ the floor is $0.718$--$0.731$ and $\Wft$ sits at $0.751$; all \textsc{Unmerge} variants stay at the un-edited level ($0.750$--$0.754$: the minimal edit does no weight-space harm), whereas Random Label ($0.823$) and SalUn ($0.908$) move \emph{away} from the retrained solution. Output space discriminates more sharply in the same direction: \textsc{Unmerge-CAV-KMeans} has the smallest forget-set divergence in both regimes ($2.89$ on $\Wzero$ vs.\ $4.9$--$5.2$ for the baselines; $5.29$ vs.\ $5.9$--$6.6$ on $\Wpt$), because relabeling pushes forget inputs confidently to wrong classes---far from a model that never saw them---while subspace subtraction lands closer (cf.\ \S\ref{sec:entangle}). \emph{Caveats.} (a) \textsc{CAV-Class} at $\alpha{=}2.5$ on $\Wpt$ matches the retrained forget accuracy to within $0.12$ points but has the largest forget-set divergence of any cell ($7.15$): an aggressive $\alpha$ over-shoots distributionally while matching accuracy, so $\alpha$ should be selected on a distributional criterion when one is available. (b) \textsc{Unmerge-PCA} on $\Wzero$ has large divergence on every split ($3.87$ on retain despite $98.75\%$ retain accuracy): the argmax survives but calibration does not, consistent with its entanglement outlier in Tab.~\ref{tab:entangle-feat} and with the regime split of \S\ref{sec:main}.

\begin{table}[t]
\centering
\caption{Decomposition validation on CIFAR-100 (Trees), ResNet-50. Weight space: layer-averaged relative Frobenius error and size-weighted cosine of $\tauR$ vs.\ $\tauRstar$ over all weight matrices. Output space: $\KL(p_{\text{UL}}\|p_{\text{RT}})$ per split (lower is closer to retraining). Floor: two independent retrains from the same $\Winit$. Best non-reference per KL column in \bestlegend{bold}; the $\Wzero$ weight-space columns are uninformative (see text).}
\label{tab:tauval}
\vspace{-5pt}
\begin{adjustbox}{max width=\linewidth}
\begin{tabular}{l|cc|ccc||cc|ccc}
\toprule
& \multicolumn{5}{c||}{$\Winit=\Wpt$} & \multicolumn{5}{c}{$\Winit=\Wzero$} \\
Method & rel.\ err & cos & KL$_{\setF}$ & KL$_{\setR}$ & KL$_{\text{test}}$ & rel.\ err & cos & KL$_{\setF}$ & KL$_{\setR}$ & KL$_{\text{test}}$ \\
\midrule
Retrain floor (2 runs) & 0.718--0.731 & 0.684--0.688 & -- & -- & -- & 1.539--1.669 & 0.005 & -- & -- & -- \\
$\Wft$ (no unlearning) & 0.751 & 0.691 & -- & -- & -- & 1.579 & 0.006 & -- & -- & -- \\
\midrule
NegGrad+ & 0.751 & 0.691 & 5.89 & 0.397 & 1.151 & 1.579 & 0.006 & 4.89 & 0.500 & 1.293 \\
Rand.\ Label & 0.823 & 0.682 & 6.62 & \best{0.014} & 0.704 & 1.515 & 0.006 & 5.20 & \best{0.011} & 0.755 \\
SalUn & 0.908 & 0.684 & 6.62 & 0.040 & \best{0.698} & 1.547 & 0.006 & 5.21 & 0.024 & 0.736 \\
Unmerge-PCA & 0.750 & 0.688 & 6.03 & 0.172 & 0.864 & 1.572 & 0.006 & 4.99 & 3.873 & 3.471 \\
Unmerge-CAV-KMeans & 0.754 & 0.683 & \best{5.29} & 0.107 & 0.825 & 1.577 & 0.006 & \best{2.89} & 0.043 & 0.717 \\
Unmerge-CAV-Class & 0.754 & 0.685 & 7.15 & 0.192 & 1.049 & 1.577 & 0.006 & 3.31 & 0.029 & \best{0.708} \\
\bottomrule
\end{tabular}
\end{adjustbox}
\end{table}

\section{Limitations and Future Work}
\label{sec:limitations}
\vspace{-5pt}
\emph{Scope.} \textsc{Unmerge} is a training-free edit for concept-level forget requests, i.e.\ requests that occupy a separable activation subspace, such as whole classes and semantic groups (\S\ref{sec:instance}). On a coherent instance subset it lowers the forget and retained halves of a class alike (Tab.~\ref{tab:coherent-breakdown}). Random instance subsets have none---they pose a per-sample de-memorization problem rather than a concept-removal one~\citep{triantafillou2026untraining} (App.~\ref{sec:instance-supp})---and no subspace edit can move their forget accuracy without moving retain; gradient-on-retain methods can, at $4$--$23\times$ the cost and at the price of a membership signature on the forget set. The Phase-1 separability ratio flags the regime beforehand but needs a same-size random-subset control for calibration (App.~\ref{sec:instance-supp}). \emph{Requirements.} The method is optimization-time data-free, not data-free: it needs forward passes over $\setF\cup\setR$, \textsc{CAV-Class} needs forget labels (App.~\ref{sec:requirements}), and the $\Wpt$ regime needs the \emph{same} pretraining checkpoint---substituting a different one is catastrophic, while $\Winit{=}0$ is a safe fallback (App.~\ref{sec:winit}). \emph{Fidelity and solver.} Unmerging does no weight-space harm and has the smallest forget-set divergence to $\Wrt$ among the methods of Tab.~\ref{tab:tauval}, but does not reconstruct $\tauRstar$ in weight space beyond the un-edited model (\S\ref{sec:tauval}). At class level a logit mask on the un-edited model already matches retraining on ToW and the MIA gap, and a linear probe on frozen features still recovers the forget classes from every approximate method we tested, ours included; the feature-entanglement axis separates \textsc{Unmerge} from such a mask on $\Wzero$ and on TinyImageNet, but not on CIFAR-100/$\Wpt$, where the un-edited features already sit near retraining. We therefore claim fidelity to retraining in outputs and feature distribution, not that class information becomes undecodable. The reported cells use an early-stopped Adam iterate rather than the exact minimizer, so their $\lambda$ is nominal and the PCA variants depend on the step budget, which we fix throughout (App.~\ref{sec:closedform}). \emph{Breadth and future work.} The instance-level, task-vector-validation and seed-variance studies use one CIFAR-100 superclass, and the cost model is validated on ResNet-50 and ViT-S/16; at language-model scale we only have the preliminary study of App.~\ref{sec:llm}. The framework itself is general: the CAV construction accepts arbitrary positive/negative pools, so concept labels (toxicity, bias, copyrighted style) lift \textsc{Unmerge} to language and vision-language models without changing the objective, and the layerwise structure fits LoRA-merged and mixture-of-experts checkpoints; explicit solvers (App.~\ref{sec:closedform}), continual forget requests, and adaptive per-layer rank and skip selection are further directions.

\end{document}